\DeclarePairedDelimiter\floor{\lfloor}{\rfloor}
\newcommand{\G}[0]{\mathcal{G}}
\DeclareMathOperator*{\argmax}{arg\,max}
\DeclareMathOperator*{\argmin}{arg\,min}
\newcounter{relctr} 
\everydisplay\expandafter{\the\everydisplay\setcounter{relctr}{0}} 
\newcommand\labelrel[2]{%
  \begingroup
    \refstepcounter{relctr}%
    \stackrel{\textnormal{(\alph{relctr})}}{\mathstrut{#1}}%
    \originallabel{#2}%
  \endgroup
}
\title[Confounded Budgeted Causal Bandits]{Confounded Budgeted Causal Bandits\\
}
\begin{document}

\maketitle

\begin{abstract}%
   We study the problem of learning ``good'' interventions in a stochastic environment modeled by its underlying causal graph. Good interventions refer to interventions that maximize rewards. Specifically, we consider the setting of a pre-specified budget constraint, where interventions can have non-uniform costs. 
    We show that this problem can be formulated as maximizing the expected reward for a stochastic multi-armed bandit with side information. 
    We propose an algorithm to minimize the \textit{cumulative regret} in general causal graphs. This algorithm trades off observations and interventions based on their costs to achieve the optimal reward. 
    This algorithm generalizes the state-of-the-art methods by allowing non-uniform costs and hidden confounders in the causal graph.
    Furthermore, we develop an algorithm to minimize the \textit{simple regret} in the budgeted setting with non-uniform costs and also general causal graphs.
    We provide theoretical guarantees, including both upper and lower bounds, as well as empirical evaluations of our algorithms. Our empirical results showcase that our algorithms outperform the state of the art.
\end{abstract}

\begin{keywords}%
  Causal inference, Multi-armed bandits.%
\end{keywords}

 \section{Introduction}
Multi-armed bandits (MAB) problem has been widely studied in sequential decision-making literature \citep{lai1985asymptotically, even2006action}. 
In this problem, a learner sequentially selects an arm to pull and receives a stochastic reward. The learner tries different arms with the goal of maximizing the expected reward.
A commonly used assumption in the literature is that the arms are statistically independent. 
In other words, the distribution of one arm's reward contains no information about the reward of the other arms.
Under this assumption, a variety of approaches have been developed in the literature to solve the MAB problem, such as Thompson sampling \citep{thompson1933likelihood} and variants of Upper Confidence Bound (UCB) \citep{auer2002finite, cappe2013kullback}.
Recently, a variant of the problem where dependencies among different arms are allowed has been studied.  In such a setting, prevalent in real-world problems, pulling an arm reveals additional information about other arms. Some examples include linear optimization setting \citep{dani2008stochastic}, combinatorial bandits \citep{cesa2012combinatorial}, and Lipschitz bandits \citep{magureanu2014lipschitz}. 

An effective and succinct representation of interdependencies among a set of variables (e.g., arms) can be captured by its corresponding causal graph \citep{pearl1995causal}. 
Such graphs have been successfully used in a wide range of applications from agriculture \citep{splawa1990application} and genetics \citep{meinshausen2016methods} to marketing \citep{kim2008trust} to model the causal relationships. 
In this work, we study the MAB problem in a stochastic environment in which the dependencies among different arms are modeled by the underlying causal graph. It is assumed that this causal graph is available to the learner as side information. 
This formulation is known as causal MAB, and it has recently gained increasing attention in literature \citep{bareinboim2015bandits, lattimore2016causal, lee2018structural, lee2019structural, lu2020regret, nair2021budgeted, maiti2022causal}.

In some versions of the MAB problem, pulling an arm is associated with a cost \citep{kocaoglu2017cost, lindgren2018experimental, nair2021budgeted}. 
In this setting, the challenge of the learner with a limited budget is to use the budget for exploring different arms effectively in order to maximize the reward. As an example, consider a treatment-effect problem in which the goal of a practitioner is to measure the effectiveness of different treatments and ultimately find the most effective one. 
In this example, the effectiveness of the treatments (e.g., the percentage of recovered patients) denotes the reward.
On the other hand, different treatments may have different costs. 
Suppose that there are two treatments available: A) a medicament and B) a surgery. As pulling arm B is more expensive than arm A in this problem, the practitioner's challenge is to use her given budget effectively to try both treatments and maximize the reward.

We study causal MAbs with \textit{non-uniform} costs for pulling arms. As we discuss in Sections \ref{sec: general cumulative} and \ref{sec:simple}, having non-uniform costs lead to different learning algorithms and theoretical guarantees.  
Additionally, we relax the existing structural assumptions on the underlying causal graph, as such structural assumptions may not be valid in many real-world problems.
These assumptions were put into place to simplify accounting for the information pulling an arm reveals about other arms.
For instance, a standard result in causal inference literature implies that when the causal graph does not have any unblocked backdoor path (see Appendix \ref{sec: tech} for definitions) between the intervened variables and the reward variable, the effect of any intervention (pulling any arm) is equal to the conditional expectation of the reward given that arm \citep{pearl2009causality}.
Lastly, previous work has mainly considered the case where the causal graph is fully observable. 
We relax this assumption by allowing for so-called unobserved confounders, i.e., variables we cannot observe.

\paragraph{Contribution:}
Our main contributions are as follows.
\vspace{-.2cm}
\begin{itemize}[leftmargin=*]

    \item We generalize the setting studied in the state of the art in causal bandit literature by allowing non-uniform
    costs and hidden confounders in the causal graph. Non-uniform costs introduce additional complexity to the MAB problem in terms of the trade-off between exploration and exploitation. It becomes crucial for the learner to select an arm for exploration not only based on its reward but also its associated cost. To address this complexity, we propose algorithms that incorporate cost-dependent exploration criteria both in the setting of simple and cumulative regret. 
    General causal graphs with hidden confounders add yet another challenge: how to avoid spurious correlations in the data as a result of the confounders and harness true causal relationships to learn about other arms besides the one being played. 
    To overcome this challenge, we propose estimators in Section \ref{sec: updates} for the expected reward of arms that leverage both observational and interventional data.
    
    \item We propose two algorithms (Algorithm \ref{alg: g-cumulative} in Section \ref{sec: general cumulative} and Algorithm \ref{alg: g-simple} in Section \ref{sec:simple}) to minimize the cumulative and simple regrets, respectively\footnote{Our theoretical results both generalize the results in \citep{nair2021budgeted} and \citep{maiti2022causal} (to allow for non-uniform costs and general causal graphs) and correct the oversights and errors in the proofs of these papers which affect the validity of the bounds claimed therein (see Section \ref{sec: error} for details).} and upper bound  their expected regrets.      
   We prove that by leveraging causal information, Algorithm \ref{alg: g-cumulative} achieves better cumulative regret than the optimal classic MAB algorithm.
    In Algorithm \ref{alg: g-simple}, we propose a new threshold that accounts for the cost of pulling arms to identify infrequent arms more effectively. As a result, Algorithm \ref{alg: g-simple}
    outperforms prior work \citep{nair2021budgeted} even in their own settings (when the costs are uniform and the causal graph has no-backdoor). 
    Moreover, we present lower bounds on both simple and cumulative regrets of any algorithm and discuss their relations with the presented upper bounds in Section \ref{sec: lowers}. 
    
    \item We evaluate our proposed algorithms in Section \ref{sec: expe}. Our simulation results show that our algorithms perform well for general causal graphs and non-uniform costs and outperform the state of the art even in the settings they were specifically designed for. 
    
\end{itemize}

\subsection{Related Work}

Authors in \citep{tran2012knapsack} propose F-KUBE, an algorithm for a budgeted MAB problem without utilizing the underlying causal graph.
The authors in \citep{lattimore2016causal} study the problem of minimizing the simple regret in a special causal graph called parallel graphs\footnote{
It is composed of variable set $\mathbf{V}= \{X_1, \dots, X_N, Y\}$ and edges from each $X_i$ to $Y$.}
after $T$ steps where the cost of pulling all arms is one. They propose an algorithm with average regret of $\mathcal{O}\big(\sqrt{\frac{a}{T}\log \frac{NT}{a}}\big)$, where $a$ defined in Remark \ref{remark: simple no} depends on the underlying causal model and $N$ is the number of intervenable variables.

The authors in \citep{nair2021budgeted} study a causal MAB problem in which the learner has a limited budget $B$, all interventions have the same cost $c\geq1$, and the cost of observation is one.
They consider the problem of minimizing the simple regret in special causal graphs called no-backdoor graphs\footnote{The graphs in which all backdoor paths from each intervenable variable to the reward variable are blocked. Please refer to Appendix \ref{sec: tech} for details.}.
They show that their proposed algorithm's expected regret is upper bounded by $\mathcal{O}\big(\sqrt{\frac{ca}{B}\log \frac{NB}{ca}}\big)$.
We also study this particular setting in Section \ref{sec:simple} as a special case of our setting but allow for non-uniform costs and derive a tighter bound for the expected regret (Remark \ref{remark: simple no}). 

\cite{nair2021budgeted} studies non-budgeted a causal MAB problem with general causal graphs when the objective is the cumulative regret. The proposed algorithm in \citep{nair2021budgeted} requires access to the distribution of parents of the reward variable for each intervention. This restrictive assumption is also required in \citep{lu2020regret}.  
\cite{maiti2022causal} studies a causal MAB problem when all costs are assumed to be one for both simple and cumulative regret objectives.
In the case of simple regret, the proposed algorithm for causal graphs with possibly hidden confounders attains an expected simple regret upper bounded by $\mathcal{O}\big( \sqrt{\frac{b}{T} \log \frac{NT}{b}}\big)$, where $b$ depends on the causal model. 
In the case of cumulative regret, the proposed algorithm only works for causal graphs with no hidden variables. We generalize both aforementioned results to non-uniform cost settings and derive tighter theoretical bounds on the regret.


\section{Preliminaries}
Throughout this paper, random variables and their realizations are denoted by capital and lowercase letters, respectively.
We use bold capital and lowercase letters to denote sets of variables and their realizations, respectively. 
\\
\textbf{Causal structure:} 
 Let $\mathcal{G}=(\mathbf{V}, \mathbf{E}^d, \mathbf{E}^b)$ denote an acyclic-directed mixed graph (ADMG) with the set of observed variables $\mathbf{V}$, the set of \textit{directed} edges $\mathbf{E}^d \subseteq \mathbf{V} \times \mathbf{V}$ and the set of \textit{bidirected} edges $\mathbf{E}^b \subseteq \binom{\mathbf{V}}{2}$.
 The existence of a bidirected edge between nodes $V_1$ and $V_2$ represents a hidden confounder that influences both $V_1$ and $V_2$.
 
Given two arbitrary variables $V_1,V_2 \in \mathbf{V}$, when $(V_1,V_2) \in \mathbf{E}^d$, $V_1$ is a parent of $V_2$ and $V_2$ is a child of $V_1$. The set of parents of $V_2$ is denoted by $\mathbf{Pa}(V_2)$.

Given two subsets of variables $\mathbf{R}$ and $\mathbf{S}$ and their realizations $\mathbf{r}$ and $\mathbf{s}$, respectively,
let $ P_{\mathbf{s}}(\mathbf{r}):=P(\mathbf{R}=\mathbf{r}|do(\mathbf{S}=\mathbf{s}))$ denote the post-interventional distribution of $\mathbf{R}$ after intervening on  $\textbf{S}$.

\begin{definition}[C-component \citep{tian2002general}]
    Two observed variables $V_1$ and $V_2$ are said to be in a c-component of an ADMG $\mathcal{G}$, if and only if they are connected by a bi-directed path.
\end{definition}

As an example, in Figure \ref{fig: example}, $\{X_1,X_2,X_3,X_5\}$ and $\{X_4\}$ are two c-components of $\mathcal{G}$.

\begin{definition}[Identifiability \citep{tian2002general}]
    Given an ADMG $\mathcal{G}= (\mathbf{V}, \mathbf{E}^d, \mathbf{E}^b)$, and two disjoint subsets  $\mathbf{R} ,\mathbf{S} \subseteq \mathbf{V}$, $P_{\mathbf{s}}(\mathbf{r})$ is said to be identifiable in $\mathcal{G}$ if $P_{\mathbf{s}}(\mathbf{r})$ is uniquely computable from $P(\mathbf{V})$.
\end{definition}
\textbf{Causal multi-arm bandits:} 
 Let $\mathbf{X}= \{X_1, \dots, X_N\}\subseteq \mathbf{V}$ and $Y\in \mathbf{V}$ denote the set of \textit{intervenable} variables (variables that the learner is allowed to intervene on) and the reward variable, respectively. For ease of presentation, we assume that all variables are binary. 
All our results can be extended to sets of finite-domain variables. 

In the causal MAB setting, at each round, a learner can explore either by intervening in the system or merely observing it. 
If the learner decides to intervene, they will select an intervenable variable, e.g., $X_i\in \textbf{X}$, set its value, e.g.,  $do(X_i=x)$, and observe the remaining variables, i.e., $\mathbf{V}\setminus\{X_i\}$. This choice of action (arm) is denoted by $a_{i,x}$. On the other hand, when the decision is to observe, i.e., $do()$, she merely observes all observed variables. This action is denoted by $a_0$. 
 \begin{wrapfigure}{r}{0.5\textwidth}
\begin{minipage}{0.5\textwidth}
\vspace{-.5cm}
\begin{figure}[H]
        \centering
        \tikzstyle{block} = [draw, fill=white, circle, text centered]
    	\tikzstyle{input} = [coordinate]
    	\tikzstyle{output} = [coordinate]
    	    \begin{tikzpicture}[->, auto, node distance=1.3cm,>=latex', every node/.style={inner sep=0.05cm}, scale=0.7]
    		    \node (X2) at (-1,0) {$X_2$};
    		    \node (X3) at (-0.7,-2.5) {$X_3$};
    		    \node (X4) at (-3,-.2) {$X_4$};
    		    \node (X5) at (-2,-1.5) {$X_5$};
    		    \node (X1) at (-5,-2) {$X_1$};
    		    \path[->] (X2) edge[ style = {->}](X4);
    		    \path[->] (X2) edge[ style = {->}](X3);
    		    \path[->] (X3) edge[ style = {->}](X5);
    		    \path[->] (X3) edge[ style = {->}](X1);
    		    \path[->] (X4) edge[ style = {->}](X5);
    		    \path[->] (X4) edge[ style = {->}](X1);
    		    \path[->] (X5) edge[ style = {->}](X1);
    		   \path[->] (X2) edge[dashed, style = {<->}](X5);
    		   \path[->] (X3) edge[dashed, style = {<->}, bend left=30](X1);
    		   \path[->] (X2) edge[ dashed, style = {<->}, bend right=60](X1);
    		\end{tikzpicture}
    	\caption{An ADMAG $\mathcal{G}$ over $\textbf{V}=\{X_1,...,X_5\}$. Bidirected edges are represented by dashed edges.} 
    	\label{fig: example}
    \end{figure} 
\vspace{-.5cm}
\end{minipage}
\end{wrapfigure}
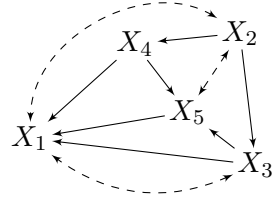

We denote the set of possible actions by 
$
\mathcal{A}:=\big\{a_{i,x}| i \in [N] , x \in \{0,1 \}\big\} \cup \{a_0\}$.
We assume that the cost of pulling an arm  $a\in\mathcal{A}$ is $c_a\in \mathbb{R}_+$ and denote the set of costs by $\mathcal{C}:= \{c_a|a \in \mathcal{A}\}$. Moreover, without loss of generality, we assume that the cost of $a_0$ is one, i.e., $c_0=1$. 
We indicate the pulled arm by the learner, the resulting reward, and the observed values of the variables in an arbitrary subset $\mathbf{S}\subseteq \mathbf{V}$ at time $t$, by $a^t$, $y^t$, and $\mathbf{s}^t$, respectively.
\\
\\
\textbf{Problem Setting:} 
We study a causal MAB problem, in which a learner with a budget $B\geq0$ aims to minimize either its \textit{simple regret} (Section \ref{sec:simple}) or \textit{cumulative regret} (Section \ref{sec: general cumulative}). It is assumed that the learner knows the underlying causal graph. This problem is known as the budgeted causal MAB  \citep{nair2021budgeted}.
Regret is a commonly used measure to evaluate the performance of learners in a bandit setting, and it captures the foregone utility from the actual action choice against the optimum action \citep{cesa2006prediction}.


In order to formally introduce the regret, we first define the average reward of action $a\in\mathcal{A}$ as follows:
$\mu_{a}:=\mathbb{E}[Y|a]$.
For example, $\mu_{a_{i,x}}$ denotes $\mathbb{E}[Y|do(X_i=x)]$.
\\
\\
\textbf{Simple regret:} Let $a^*$ denote the arm that maximizes the expected reward with budget $B$. 
The simple regret of a learner using budget $B$ is defined by
\begin{equation}\label{eq: simple formula}
    R_{s}(B):= \mu_{a^*}- \mu_{\Tilde{a}_B},
\end{equation}
where $\Tilde{a}_B$ denotes the arm selected by the learner after exhausting budget $B$. When the learner's objective is the minimize simple regret, it suffices to find the best arm at the final step (i.e., after spending their budget) without having to worry about the intermediate actions that they chose. 

In many real-world applications, it is important that the learner does not pull sub-optimal arms too often during her exploration. In this case, the objective function should reflect the intermediate regrets the learner accumulates.
\\
\\
\textbf{Cumulative regret:} 
Let $T_B^{\ell}$ denote the time step that a learner $\ell$ consumes its budget $B$, i.e., at time step $T_B^{\ell}+1$, it does not have enough budget to perform even the lowest cost action. In this case, the expected reward accumulated by the learner $\ell$ will be 
$\mathcal{R}^{\ell}(B):=\sum_{t\leq T_{B}^{\ell}} \mu_{a^t},$  
where $\mu_{a^t}$ is the rewards of action taken at time $t$.
Furthermore, let $\mathcal{R}^*(B)$ denote the expected reward accumulated by the optimum learner with budget $B$.
Then, the cumulative regret of the learner $\ell$ using budget $B$ is given by
\begin{equation}\label{eq: cumu formula}
    R_{c}(B) := \mathcal{R}^*(B) -\mathcal{R}^{\ell}(B).
\end{equation}
As we consider a single learner in this work, in the rest of the paper, we drop the superscript $\ell$. A learner minimizing cumulative regret must trade off exploration vs. exploitation. 
\begin{remark}\label{remark: non-budget}
We can define a non-budgeted causal MAB problem in which there is no cost associated with pulling an arm, but the learner has limited time $T$ to either identify the best arm or minimize cumulative regret during $T$ steps. This problem is a special case of the budgeted MAB problem. Assume all arms have the same cost $c_a=c>0$, then a budgeted causal MAB with budget $B$ is equivalent to a non-budgeted causal bandit with the time limit $T=B/c$ and the simple and cumulative regrets are given by $R_s(B/c)$ and $R_c(B/c)$, respectively.
\end{remark}

\section{Cumulative Regret in General Graphs}\label{sec: general cumulative}
In this section, we study the budgeted causal MAB problem in general causal graphs with hidden confounders when the learner's objective is to minimize cumulative regret.
We propose Algorithm \ref{alg: g-cumulative}, developed based on Upper Confidence Bound (UCB) algorithm \citep{auer2002finite}, which generalizes the state-of-the-art in causal MAB in two ways: it allows for non-uniform costs among the arms and as well as the existence of hidden confounders in the causal graph. 

Non-uniform costs change the optimal exploitation policy as, depending on the costs, pulling the arm with the highest reward repeatedly, in general, does not maximize the learner’s accumulated reward within the budget. Indeed our empirical studies in Section \ref{sec: exp cumu} show that Algorithm \ref{alg: g-cumulative} outperforms existing causal MAB algorithms designed for uniform costs.

Algorithm \ref{alg: g-cumulative} works in general graphs and relaxes existing structural assumptions on the underlying causal graph in the literature. 
Recently, \cite{maiti2022causal} studied the non-budgeted causal MAB problem with graphs that have no hidden confounders. 
This is a limiting assumption in many real-world applications such as medical science, epidemiology, and sociology when it is impossible to ensure that all common confounders are measured in a study \citep{leek2007capturing,imai2010general,colombo2012learning}.
Our proposed algorithm merely requires the {identifiability} assumption for all intervenable variables in $\mathcal{G}$.
A sufficient graphical condition for identifiability of $P_{x_i}(y)$ is that there does not exist a path of bi-directed edges from $X_i$ to its children \citep{tian2002general} which significantly relaxes the existing structural assumptions in the state of the art.

Algorithm \ref{alg: g-cumulative} takes as inputs the causal graph $\G$, the budget $B$, and the cost set $\mathcal{C}$. 
In the beginning, it pulls each arm once (line 1). 
Assuming that the intervenable variables are binary, i.e., $X_i\in\{0,1\}$, this requires $2N+1$ number of steps and costs $\sum_{i,x} c_{i,x}+1$. 
\begin{wrapfigure}{r}{0.5\textwidth}
\begin{minipage}{0.5\textwidth}
\vspace{-0cm}
\begin{algorithm}[H]
\caption{\small{Budgeted Cumulative Regret in General Graphs}}
\label{alg: g-cumulative}
\textbf{Input}: $\G$, $B$, $\mathcal{C}$
\begin{algorithmic}[1] 
\STATE{Pull each arm once and set $t=2N+1$;}
\STATE{Set $B^{t}= B- \sum_{i,x}c_{i,x}-1$ and $\beta=1$;}
\WHILE{$B^t \geq 1 $}
\IF{$N^{t-1}_{0} < \beta^2 \log{t}$ or $B^t < \min_{i,x}c_{i,x}$}
\STATE{pull $a^t=a_0$}
\ELSE
\STATE{pull $ a^t= \argmax_{a\in \mathcal{A}} \Bar{\mu}_{a}^{t-1}$ }
\ENDIF
\FOR {$ a \in \mathcal{A}$}
   \STATE{ Update $N_{a}^{t}= N_{a}^{t-1}+\mathds{1}\{a^t=a\}$.}
   \STATE{Update $\hat{\mu}_{a}^t$ and $\Bar{\mu}_{a}^t$ using Equations \eqref{eq: estimate mu0}, \eqref{eq: estimate mu}, and \eqref{eq: estimate mu0 bar}.}
\ENDFOR
\STATE{Let $\Tilde{a}= \argmax_{a \in \mathcal{A}} (\hat{\mu}_{a}^t/c_a ) $.}
\IF{$\hat{\mu}_{0}^t < (\hat{\mu}_{\Tilde{a}}^{t}/c_{\Tilde{a}})$}
\STATE{Update $\beta = \min \{ \frac{2\sqrt{2}}{(\hat{\mu}^t_{\Tilde{a}}/c_{\Tilde{a}})-\hat{\mu}_{0}^t}, \sqrt{\log{t}}\}$}
\ENDIF
\STATE{set $t=t+1$.}
\STATE{Update  $B^{t} = B^{t-1}- c_{a^{t-1}}$.}
\ENDWHILE
\end{algorithmic}
\end{algorithm}
\vspace{-1.2cm}
\end{minipage}
\end{wrapfigure}

Let $N_{a}^{t}$ denote the number of times that arm $a \in \mathcal{A}$ is pulled at the end of $t$ rounds. We denote the estimated average reward by pulling arm $a$ and its estimated UCB at the end of round $t$ by $\hat{\mu}_a^t$ and $\Bar{\mu}_a^t$, respectively. 
The procedure for these estimators will be discussed in Section \ref{sec: updates}.
As long as the remaining budget at round $t$, $B^t$, is larger than one, Algorithm \ref{alg: g-cumulative} continues to explore and exploit by checking at round $t$ whether arm $a_0$ is pulled at least $\beta^2 \log t$ times (this threshold might change in line 17).
If so, Algorithm \ref{alg: g-cumulative} pulls an arm with the highest $\Bar{\mu}_a^t$ in line 8; otherwise, it pulls arm $a_0$. 
Afterward, in lines 10-14, it updates $\hat{\mu}_a^t$ and $\Bar{\mu}_a^t$ using the newly acquired observational or interventional data, which is discussed in Section \ref{sec: updates}. 
In the end, the threshold $\beta$ and the remaining budget are updated in lines 16-19.


\subsection{Estimation and update steps}\label{sec: updates}
Herein, we explain how to estimate and update $\hat{\mu}_a^t$ and $\Bar{\mu}_a^t$ in Algorithm \ref{alg: g-cumulative}.
Recall that $\mu_{i,x} = \mathbb{E}[Y| do (X_i=x)]$. Therefore, to estimate $\mu_{i,x}$, it suffices to estimate $P\big(Y=1 | do (X_i=x)\big)$. 


Let $\mathbf{C}_i$ and $\mathbf{W}_i$ denote the c-component containing $X_i$ and $\mathbf{V}\setminus \{X_i\}$, respectively. 
Given two subsets $\mathbf{S}$ and $\mathbf{R}$ of observed variables such that $\mathbf{S}\subseteq\mathbf{R}$ and a subset of realizations $\mathbf{r}$ for $\mathbf{R}$, we use ${(\mathbf{r})}_\mathbf{S}$ to denote the restriction of $\mathbf{r}$ to the variables in $\mathbf{S}$.
Given two subsets of variables $\mathbf{S}_1$ and $\mathbf{S}_2$, and realizations $\mathbf{s}_1$ for $\mathbf{S}_1$ and $\mathbf{s}_2$ for $\mathbf{S}_2$, we denote the assignments to $\mathbf{S}_1 \cup \mathbf{S}_2$ by $\mathbf{s}_1 \circ \mathbf{s}_2$.

Under the identifiability assumption for intervenable variables, \cite{bhattacharyya2020learning} shows that ${\small  P_{x}(\mathbf{w}_i):=P(\mathbf{W}_i=\mathbf{w}_i|do(X_i=x))}$ can be factorized as follows,
\begin{small}
\begin{align}\label{eq: factor 1}
    P_{x}(\mathbf{w}_i)= 
     \sum_{x' \in \{0,1\}} & \prod_{V_j \in \mathbf{C}_i}P\big(
    (x' \circ \mathbf{w}_i )_{V_j}\big|( x' \circ \mathbf{w}_i)_{\mathbf{Z}_j}\big) \prod_{V_j \notin \mathbf{C}_i }P\big({(\mathbf{w}_i)}_{V_j}\big|(x \circ \mathbf{w}_i)_{\mathbf{Z}_j}\big),
\end{align}
\end{small}
where ${\small\mathbf{Z}_j \! =\!  (  \bigcup_{V_k \in \mathbf{C}_j} \! \! \mathbf{Pa}(V_k)  \cup  \mathbf{C}_j )\! \setminus \! V_j}$ and $\mathbf{C}_j$ is c-component of $V_j$.

 Using \eqref{eq: factor 1}, the expected reward of pulling $a_{i,x}$ would be
 \begin{small}
\begin{align}\notag
    & \mathbb{E}[Y|do(X_i=x)]\!=\! \sum_{\mathbf{w}_i': y=1}\!P\big( \mathbf{W}_i=\mathbf{w}_i'|do(X_i=x)\big)\\ \label{eq: factor 2}
    &= \sum_{\mathbf{w}_i': y=1} \sum_{x' \in \{0,1\}} \prod_{V_j \in \mathbf{C}_i} P\big(
    (x' \circ \mathbf{w}_i')_{V_j} |(x' \circ \mathbf{w}_i' )_{\mathbf{Z}_j}\big) \prod_{V_j \notin \mathbf{C}_i }P\big({(\mathbf{w}_i')}_{V_j}|( x \circ \mathbf{w}_i')_{\mathbf{Z}_j}\big),
\end{align}
\end{small}
where the first summation is over all realization of $\mathbf{W}_i=\mathbf{V}\setminus \{X_i\}$ in which $Y=1$. This is because the terms with $Y=0$ have no contribution to the expectation. 


Define $
    \mathbf{O}^t:=\{t'\leq t|\ a^{t'}=a_0\}$, and
    $\mathbf{I}_{i,x}^{t}:=\{t'\leq t|\ a^{t'}=a_{i,x}\}$.
$\mathbf{O}^t$ and $\mathbf{I}_{i,x}^{t}$ denote the set of time steps at which arms $a_0$ and $a_{i,x}$ are pulled by the end of time $t$, respectively.
Hence, an empirical estimation of average reward of $a_0$ is given by
\begin{small}
\begin{equation}\label{eq: estimate mu0}
    \hat{\mu}_{0}^{t} := \frac{1}{N_{0}^{t}} \sum_{t' \in \mathbf{O}^t} \mathds{1}\big\{a^{t'}=a_0, y^{t'}=1\big\}.
\end{equation}
\end{small}
To estimate $\mu_{i,x}$ from observational data, it suffices to estimate each term in \eqref{eq: factor 2}.
To do so, we partition $\mathbf{O}^t$ into $|\mathbf{V}|$ number of subsets randomly and denote the $j$-th partition by $\mathbf{O}^t_{j}$. We will use the data in $\mathbf{O}^t_{j}$  to estimate $P(V_j|\mathbf{Z}_j)$.
Given a realization $x'$ of $X_i$ and a realization $\mathbf{w}_i'$ of $\mathbf{W}_i$, let 
$
\mathbf{O}^t_{j}(x', \mathbf{w}_i'):= \big\{t' \in  \mathbf{O}^t_j|\mathbf{z}^{t'}_j=(x' \circ \mathbf{w}_i' )_{\mathbf{Z}_j}\big\}.
$
Recall that $\mathbf{w}_i'$ is an arbitrary realization of $\mathbf{W}_i$ in which  $Y=1$. To proceed, we require the following definitions, 
\begin{small}
\begin{align*}
          &S_{j,i}^{t} := \min_{\mathbf{w}_i'} \min_{x'} |\mathbf{O}_{j}^{t}(x',\mathbf{w}_i')|,\ \text{if}\ V_j \in \mathbf{C}_i, \quad
      \tilde{S}_{j,i,x}^{t} := \min_{\mathbf{w}_i'} |\mathbf{O}_{j}^{t}(x,\mathbf{w}_i')|,\ \text{if}\ V_j \notin \mathbf{C}_i,
\end{align*}
\end{small}
where $|\mathbf{O}|$ denotes the size of set $\mathbf{O}$.
We also define the minimum number of data points in the partition sets as 
$$
\small{S_{i,x}^{t}:=\min\big\{\min_{j:V_j \in \mathbf{C}_i}S_{j,i}^{t}, \min_{j:V_j \notin \mathbf{C}_i}\Tilde{S}_{j,i,x}^{t}\big\}.}
$$
In the next step, we partition each $\mathbf{O}_{j}^{t}(x',\mathbf{w}_i')$ into $S_{i,x}^t$ number of subsets randomly and denote the $s$-th subset by $\mathbf{O}^{t,s}_{j} (x', \mathbf{w}_i')$. Let
\begin{small}
\begin{align}\label{eq: Px'}
       &\hat{P}^{t,s}_{j}(x', \mathbf{w}_i'):=  \frac{\sum_{t' \in \mathbf{O}^{t,s}_{j}(x',\mathbf{w}_i')}{\mathds{1}\{v_{j}^{t'}=(\mathbf{w}_i' \circ x')_{V_j}\}}}{|\mathbf{O}^{t,s}_{j}(x',\mathbf{w}_i')|}, \quad V_j \in \mathbf{C}_i,\\ \label{eq: Px}
       &\hat{P}_{j}^{t,s} (x, \mathbf{w}_i'):=  \frac{\sum_{t' \in \mathbf{O}^{t,s}_{j}(x, \mathbf{w}_i')}{\mathds{1}\{v_j^{t'}=(\mathbf{w}_i')_{V_{j}}\}}}{|\mathbf{O}^{t,s}_{j}(x, \mathbf{w}_i')|},\quad V_j \notin \mathbf{C}_i. 
\end{align}
\end{small}
Finally, the expected reward of pulling $a_{i,x}$ is estimated as follows,
\begin{small}
\begin{equation}\label{eq: estimate mu}
    \hat{\mu}_{i,x}^t:= \frac{\sum_{t' \in \mathbf{I}_{i,x}^{t}}\mathds{1}\{y^{t'}=1\}+ \sum_{s \in [S_{i,x}^{t}]}Y^{s}_{i,x}}{N_{i,x}^{t}+S_{i,x}^{t}},
\end{equation}
\end{small}
where $[S_{i,x}^{t}]\!=\!\{1,...,S_{i,x}^{t}\}$ and 
$
Y_{i,x}^{s}\!:=\!\!\sum_{\mathbf{w}_i': y=1} \sum_{x' \in \{0,1\}}\prod_{V_j \in \mathbf{C}_i}\! \hat{P}^{t,s}_{j}(x', \mathbf{w}_i')
    \prod_{V_j \notin \mathbf{C}_i }\!\!\hat{P}^{t,s}_{j}(x, \mathbf{w}_i').
$
\begin{restatable}{lemma}{lemmaunbiased}
        $\hat{\mu}_{i,x}^t$ in \eqref{eq: estimate mu} and $\hat{\mu}_{0}^{t}$ in \eqref{eq: estimate mu0} are unbiased estimators of $\mu_{i,x}$ and $\mu_0$.
\end{restatable}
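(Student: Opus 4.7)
The plan is to treat the two estimators separately, handling the simpler observational-only estimator $\hat{\mu}_0^t$ first and then reducing the more intricate estimator $\hat{\mu}_{i,x}^t$ to a weighted average of unbiased summands, one for each interventional round and one for each of the $S_{i,x}^t$ observational ``synthetic'' samples $Y_{i,x}^s$.

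For $\hat{\mu}_0^t$, I would condition on the (random) index set $\mathbf{O}^t$ and its cardinality $N_0^t$. Conditional on $a^{t'}=a_0$, the triple $(\mathbf{V},Y)^{t'}$ is drawn from the observational distribution $P(\mathbf{V})$, so $\mathds{1}\{y^{t'}=1\}$ is a Bernoulli$(P(Y=1))$ random variable with mean $\mu_0=\mathbb{E}[Y]$. Hence $\mathbb{E}[\hat{\mu}_0^t\mid N_0^t>0]=\mu_0$, and taking total expectations gives unbiasedness.

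For $\hat{\mu}_{i,x}^t$, I would split the numerator into its two sums. The interventional part is handled exactly as above: given $a^{t'}=a_{i,x}$, the indicator $\mathds{1}\{y^{t'}=1\}$ is Bernoulli$(\mu_{i,x})$ by the definition of an interventional distribution, so each term contributes mean $\mu_{i,x}$. The bulk of the work is to show $\mathbb{E}[Y_{i,x}^s]=\mu_{i,x}$ for every $s\in[S_{i,x}^t]$. I would fix $s$ and argue as follows: conditioning on the sizes of the subsamples $\mathbf{O}_j^{t,s}(x',\mathbf{w}_i')$, each empirical frequency $\hat{P}_j^{t,s}(x',\mathbf{w}_i')$ in \eqref{eq: Px'} or \eqref{eq: Px} is the sample mean of i.i.d.\ Bernoulli draws from the observational conditional $P\bigl((x'\circ\mathbf{w}_i')_{V_j}\bigm|(x'\circ\mathbf{w}_i')_{\mathbf{Z}_j}\bigr)$ (resp.\ with $x$ in place of $x'$ outside $\mathbf{C}_i$), and is therefore an unbiased estimator of that conditional probability. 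The crucial structural point is that the partition of $\mathbf{O}^t$ into the $|\mathbf{V}|$ sets $\mathbf{O}_j^t$ is performed at random and independently of data values, so for a fixed $s$ the estimators $\{\hat{P}_j^{t,s}(\cdot)\}_{V_j\in\mathbf{V}\setminus\{X_i\}}$ use disjoint, hence mutually independent, data. Consequently the expectation of the product factorizes as the product of expectations:
\begin{equation*}
\mathbb{E}\Bigl[\prod_{V_j\in\mathbf{C}_i}\hat{P}_j^{t,s}(x',\mathbf{w}_i')\prod_{V_j\notin\mathbf{C}_i}\hat{P}_j^{t,s}(x,\mathbf{w}_i')\Bigr]=\prod_{V_j\in\mathbf{C}_i}P\bigl((x'\circ\mathbf{w}_i')_{V_j}\bigm|(x'\circ\mathbf{w}_i')_{\mathbf{Z}_j}\bigr)\prod_{V_j\notin\mathbf{C}_i}P\bigl((\mathbf{w}_i')_{V_j}\bigm|(x\circ\mathbf{w}_i')_{\mathbf{Z}_j}\bigr).
\end{equation*}
Summing over $x'\in\{0,1\}$ and over all $\mathbf{w}_i'$ with $y=1$ and invoking the identification formula \eqref{eq: factor 2} yields $\mathbb{E}[Y_{i,x}^s]=\mu_{i,x}$.

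Finally I would combine the pieces. Every one of the $N_{i,x}^t+S_{i,x}^t$ summands in the numerator of \eqref{eq: estimate mu} has conditional mean $\mu_{i,x}$ given the sample-size sigma-algebra, so by linearity of expectation and the tower property $\mathbb{E}[\hat{\mu}_{i,x}^t]=\mu_{i,x}$. The main obstacle I anticipate is the bookkeeping around randomness of the index sets $\mathbf{O}_j^{t,s}(x',\mathbf{w}_i')$ and $S_{i,x}^t$: one must be careful that conditioning on their sizes does not bias the individual Bernoulli draws, and that the cross-$j$ independence needed for the product-of-expectations step is actually delivered by the random partitioning of $\mathbf{O}^t$ into the $\mathbf{O}_j^t$'s and then into the $\mathbf{O}_j^{t,s}$'s. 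Once that measurability/independence step is set up cleanly, the rest is a direct application of \eqref{eq: factor 2}.
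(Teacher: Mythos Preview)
Your proposal is correct and follows essentially the same approach as the paper: both condition on the sample sizes $(N_{i,x}^t,S_{i,x}^t)$, note that each summand in the numerator has conditional mean $\mu_{i,x}$, and conclude by linearity and the tower property. Your write-up is in fact more thorough than the paper's on the key step---the paper simply asserts that $Y_{i,x}^s$ is unbiased ``because we partition the time steps that arm $a_0$ was pulled into $|\mathbf{V}|$ different number of subsets,'' whereas you spell out explicitly that the random partition into $\{\mathbf{O}_j^t\}$ makes the factors $\hat{P}_j^{t,s}$ independent across $j$, so that the expectation of the product equals the product of the conditional probabilities appearing in \eqref{eq: factor 2}.
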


Analogous to UCB algorithm in bandits literature \citep{auer2002finite,cesa2006prediction},  Algorithm \ref{alg: g-cumulative} computes UCB estimate of $\mu_a$ at round $t$ using the following equations,
\begin{small}
\begin{align}\label{eq: estimate mu0 bar}
    &\Bar{\mu}_{i,x}^{t}:= \hat{\mu}_{i,x}^{t}+ \sqrt{\frac{2\ln t}{N_{i,x}^{t}+S_{i,x}^{t}}}, \quad \Bar{\mu}_{0}^{t}:= \hat{\mu}_{0}^{t}+ \sqrt{\frac{2\ln t}{N_{0}^{t}}}. 
\end{align}
\end{small}
Let $a^*:= \argmax_{a \in \mathcal{A}} \frac{\mu_a}{c_a}$ and for $a \in \mathcal{A}$, let $\delta_a := \frac{\mu_{a^*}}{c_{a^*}} -\frac{\mu_a}{c_a}$. 
Recall that $p_{i,x}=P(X_i=x)$.
\begin{restatable}{theorem}{theoremcumu}\label{th: cumu}
The expected cumulative regret of Algorithm \ref{alg: g-cumulative} is bounded by
\begin{small}
\begin{align*}
    &\delta_0 \Big( \frac{8\ln B}{\delta_{0}^2}+1 +\frac{\pi^2}{3} \Big) + \sum_{\delta_{i,x}>0}\delta_{i,x} \Big(\frac{8\ln B}{\delta_{i,x}^2}+2 -\frac{8 p_{i,x}}{18\delta_0^2|\mathbf{V}|}\ln b_{i,x} \cdot \tau_{i,x,b} + \frac{\pi^2}{3} \Big),
\end{align*}
\end{small}
where $ b_{i,x} :=  \frac{8}{\delta_{i,x}^2} \ln (\frac{B}{\max_{a}c_a})+1$,
$\tau_{i,x,b} :=\max\big\{0,1-|\mathbf{V}|\cdot\mathcal{W}_i\cdot b_{i,x}^{-p_{i,x}^{2}/(2|\mathbf{V}|)}\big\}$, and $\mathcal{W}_i$ denotes the alphabet size of variables in $\mathbf{V}\setminus\{X_i, Y\}$.
\end{restatable}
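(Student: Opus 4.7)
The plan is to adapt the classical UCB analysis to this cost-weighted, observation-augmented setting. The cumulative regret decomposes as a per-arm sum in which every pull of a suboptimal arm $a$ contributes at most $\delta_a$ (the cost is already absorbed into $\delta_a = \mu_{a^*}/c_{a^*} - \mu_a/c_a$), so the task is to bound $\mathbb{E}[N_a^{T_B}]$ for each such arm. The novelties relative to classical UCB are (i) the decision rule in line~7 is the cost-normalized UCB on $\bar\mu_a^{t-1}/c_a$, consistent with the cost-normalized $a^*$ and with the diagnostic in lines~13--16, and (ii) the effective sample count for an intervention arm $a_{i,x}$ is augmented by $S_{i,x}^t$ synthetic observations derived via the identifiability factorization \eqref{eq: factor 2}.

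First I would establish high-probability concentration. The unbiasedness lemma stated above, combined with the fact that each summand of \eqref{eq: estimate mu} is bounded in $[0,1]$ and that the randomized partitioning into the subsets $\mathbf{O}_j^{t,s}$ makes the per-$s$ terms independent, allows Hoeffding's inequality to yield
\begin{equation*}
\Pr\!\left(|\hat\mu_a^t - \mu_a| > \sqrt{2\ln t / M_a^t}\right) \leq 2/t^4,
\end{equation*}
where $M_0^t := N_0^t$ and $M_{i,x}^t := N_{i,x}^t + S_{i,x}^t$. Summing over $t \leq B$ converts the tail-failure mass into the $\pi^2/3$ additive terms appearing in both parentheses of the bound.

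Next I would handle the observation arm $a_0$ by the classical UCB argument. A forced pull of $a_0$ through line~5 requires $N_0^{t-1} < \beta^2 \ln t$, while a UCB-pull through line~8 with $a^t = a_0$ forces $\bar\mu_0^{t-1}/c_0 \geq \bar\mu_{a^*}^{t-1}/c_{a^*}$, which, on the high-probability event, implies $N_0^{t-1} \leq 8\ln B/\delta_0^2$. Since $\beta$ stabilizes near $2\sqrt{2}/\delta_0$, both constraints give the same controlling inequality, so $\mathbb{E}[N_0^{T_B}] \leq 8\ln B/\delta_0^2 + 1 + \pi^2/3$, producing the first parenthesis of the bound. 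The same UCB argument applied to $M_{i,x}^t$ for an intervention $a_{i,x}$ with $\delta_{i,x} > 0$ yields $\mathbb{E}[M_{i,x}^{T_B}] \leq b_{i,x} + 2 + \pi^2/3$, hence $\mathbb{E}[N_{i,x}^{T_B}] \leq b_{i,x} + 2 + \pi^2/3 - \mathbb{E}[S_{i,x}^{T_B}]$, so the only non-routine step is to lower-bound $\mathbb{E}[S_{i,x}^{T_B}]$, which is where the causal structure intervenes. Using $N_0^t \gtrsim (8/\delta_0^2)\ln t$ from the stabilized threshold, the uniform partition into $|\mathbf{V}|$ groups gives $\mathbb{E}[|\mathbf{O}_j^t|] \approx N_0^t/|\mathbf{V}|$, and within each group the samples matching a configuration $(x' \circ \mathbf{w}'_i)_{\mathbf{Z}_j}$ occur with probability at least $p_{i,x}$ whenever the configuration requires $X_i = x$. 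A Chernoff bound together with a union bound over the $|\mathbf{V}|\cdot\mathcal{W}_i$ partition-configuration pairs yields
\begin{equation*}
\Pr\!\left(S_{i,x}^t < \tfrac{p_{i,x}}{2|\mathbf{V}|}\, N_0^t\right) \leq |\mathbf{V}|\mathcal{W}_i \exp\!\left(-\tfrac{p_{i,x}^2 N_0^t}{2|\mathbf{V}|}\right),
\end{equation*}
and substituting the threshold value $N_0^t \gtrsim (8/\delta_0^2)\ln b_{i,x}$ at the critical round $t \approx b_{i,x}$ turns the right-hand side into $|\mathbf{V}|\mathcal{W}_i \cdot b_{i,x}^{-p_{i,x}^2/(2|\mathbf{V}|)}$, precisely the quantity subtracted inside $\tau_{i,x,b}$. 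Taking the positive part yields $\mathbb{E}[S_{i,x}^{T_B}] \geq (8 p_{i,x}/(18\delta_0^2 |\mathbf{V}|)) \ln b_{i,x} \cdot \tau_{i,x,b}$, with the factor $1/18$ absorbing the Chernoff slack and the conversion from per-configuration sample counts into $S_{i,x}^t$; multiplying by $\delta_{i,x}$ and summing over $(i,x)$ completes the proof.

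The hardest part will be the Chernoff estimate for $S_{i,x}^t$: tracking the double partitioning (first into $|\mathbf{V}|$ groups for the factors in \eqref{eq: factor 2}, then each into $S_{i,x}^t$ subgroups for the independent plug-in estimates), ensuring independence survives both splits, and handling the self-referential definition of $S_{i,x}^t$, which depends on the realised sizes of the first-level partitions. The $\max\{0,\cdot\}$ in $\tau_{i,x,b}$ transparently absorbs the regime where the Chernoff savings fail to exceed the union-bound cost $|\mathbf{V}|\mathcal{W}_i$.
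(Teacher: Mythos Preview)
Your overall plan matches the paper's proof: Hoeffding concentration on the augmented counts $M_{i,x}^t=N_{i,x}^t+S_{i,x}^t$, the classical UCB counting argument for each arm, and a Chernoff lower bound on $S_{i,x}$ to extract the causal savings. Two points, however, need repair.

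First, the sentence ``the same UCB argument applied to $M_{i,x}^t$ \ldots\ yields $\mathbb{E}[M_{i,x}^{T_B}]\le b_{i,x}+2+\pi^2/3$'' is false as written. The UCB argument controls how many times $a_{i,x}$ is \emph{selected}, but $M_{i,x}^t$ also grows whenever $a_0$ is pulled (through $S_{i,x}^t$), so $M_{i,x}^{T_B}$ is not bounded by anything like $b_{i,x}$. The paper instead decomposes $N_{i,x}^{T}$ into pulls occurring while $E_{i,x}^t\le l$ and pulls after $E_{i,x}^t>l$ (with $l=8\ln T/\delta_{i,x}^2+1$). The second piece is $\le\pi^2/3$ by the standard argument; the first piece equals $N_{i,x}^m=E_{i,x}^m-S_{i,x}^m\le l-S_{i,x}^l$, where $m:=\max\{t:E_{i,x}^t\le l\}\ge l$ because $E_{i,x}^t$ increases by at most one per round. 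So the $S$ you subtract is $S_{i,x}^{l}$, evaluated at the threshold time $l\approx b_{i,x}$, not $S_{i,x}^{T_B}$. You clearly sensed this (``at the critical round $t\approx b_{i,x}$''), but the stated inequality on $M_{i,x}^{T_B}$ does not hold and the argument must go through the decomposition above.

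Second, ``$\beta$ stabilizes near $2\sqrt{2}/\delta_0$'' is doing real work and is not automatic. The paper devotes a separate lemma to it: one shows that for $T$ large enough, with probability at least $3/4$ every estimate satisfies $|\hat\mu_a^T/c_a-\mu_a/c_a|\le\delta_0/4$ (via the same Hoeffding bounds plus a union bound over $|\mathbf{V}|\mathcal{W}_i$ configurations), whence $\hat\mu_{\tilde a}^T/c_{\tilde a}-\hat\mu_0^T\le 3\delta_0/2$ and therefore $\beta^2\ge 32/(9\delta_0^2)$ on that event, giving $\mathbb{E}[\beta^2]\ge 8/(9\delta_0^2)$. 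This is exactly where the mysterious $8/18$ comes from: $\mathbb{E}[S_{i,x}^l]\ge \tfrac{p_{i,x}}{2|\mathbf{V}|}\mathbb{E}[N_0^l]\tau_{i,x,l}$, then $\mathbb{E}[N_0^l]\ge\mathbb{E}[\beta^2]\ln l\ge \tfrac{8}{9\delta_0^2}\ln l$, and $\tfrac{1}{2}\cdot\tfrac{8}{9}=\tfrac{8}{18}$. Your attribution of the $1/18$ to ``Chernoff slack'' is not quite right; it is the product of the $1/2$ from the Chernoff threshold $p_{i,x}/2$ and the $8/9$ from the $\beta$-stabilization lemma.
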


The proof of Theorem \ref{th: cumu} is provided in Appendix \ref{sec: proof cumu}.
This theorem ensures that the maximum number of pulling a sub-optimal arm $a$ is bounded by a factor of
$\delta_a$.

\section{Simple Regret in General Graphs }\label{sec:simple}

\begin{wrapfigure}{r}{0.5\textwidth}
\begin{minipage}{0.5\textwidth}
\vspace{-.8cm}
\begin{algorithm}[H]
\caption{\small{Budgeted Simple Regret in General Graphs}}
\label{alg: g-simple}
\textbf{Input}: $\G$, $B$, $\mathcal{C}$
\begin{algorithmic}[1] 
\FOR {$t\in \{1, 2, \dots, B/2 \}$}
  \STATE{ Pull arm $a_0$ and observe $\mathbf{v}^t$}
\ENDFOR
\STATE{$\hat{\mu}_{0}= {2(\sum_{t=1}^{B/2} y^t)}/{B}$ } 
\FOR{$a_{i,x} \in \mathcal{A} $}
    \STATE{Estimate $\hat{\mu}_{i,x}$ using Alg. \ref{alg: compute mu} in Appendix \ref{sec: estimation mu}}
    \STATE{Estimate $\hat{q}_{i,x}$ using Equation \eqref{eq: q-quantity}}
\ENDFOR
\STATE {Compute $ n(\hat{\mathbf{q}})$ using Equation \eqref{eq: compute n}}
\STATE{Construct $\mathcal{A}' := \{ a_{i,x} \in \mathcal{A} | {\hat{q}_{i,x}}^{k_i} \leq \frac{1}{n(\hat{\mathbf{q}})} \}$}
\IF {$|\mathcal{A}'|=0$}
    \STATE Pull arm $a_0$ for the remaining $\frac{B}{2}$ rounds
    \STATE{Re-estimate $\hat{\mu}_{0}= {(\sum_{t=1}^{B/2} y^t)}/{B}$}
    \FOR{$a_{i,x} \in \mathcal{A} $}
        \STATE{Re-estimate $\hat{\mu}_{i,x} $ using Alg. $\ref{alg: compute mu}$}
    \ENDFOR
\ELSE
\STATE{Compute $n=\frac{B}{2\sum_{i,x}c_{i,x} \mathds{1}\{a_{i,x} \in \mathcal{A}' \}}$}
\STATE{Pull each arm $a_{i,x} \in \mathcal{A}'$ for $n$ rounds}
\FOR{$a_{i,x} \in \mathcal{A}'$}
\STATE{ $\hat{\mu}_{i,x}\!=\! \frac{1}{n} \sum_{t=\frac{B}{2}+1}^{\frac{B}{2}+n|\mathcal{A}'|}  y^t\mathds{1}\{a^t=a_{i,x}\} $}
\ENDFOR
\ENDIF
\STATE{ \textbf{return} $\hat{a}^*\in\argmax_{a \in \mathcal{A}} \hat{\mu}_a$.}
\end{algorithmic}
\end{algorithm}
\vspace{-.2cm}
\end{minipage}
\end{wrapfigure}
In this section, we study the budgeted causal MAB problem with general graph $\mathcal{G}$ for a learner whose objective is simple regret. 
The novelty of our results is that it generalizes the state-of-the-art by allowing non-uniform costs for arms. 
As discussed in the previous section, having non-uniform costs may change the trade-off between exploration vs. exploitation and hence requires a different treatment than non-budgeted
causal MAB. 
Our experiments in Section \ref{sec: expe} showcase that indeed our algorithm outperforms the state of the art, which is designed for uniform costs. 

Under the identifiability assumption for all intervenable variables in $\mathcal{G}$, we present Algorithm \ref{alg: g-simple} to minimize the simple regret for a budget $B$.
This algorithm generalizes the one in \citep{maiti2022causal} to a budgeted causal MAB setting when the arms have non-uniform costs.
It uses its given budget $B$ to estimate the average reward of each arm and then outputs an arm with the maximum estimated average reward. More specifically, Algorithm \ref{alg: g-simple} takes the causal graph $\G$, the budget $B$, and the cost set $\mathcal{C}$ as inputs.

It pulls arm $a_0$, i.e., collects observational data until it has exhausted half of its budget.  
This leads to an initial estimate of the expected reward of each arm $a \in \mathcal{A}$ (lines 4-8). 
Note that estimating the expected rewards is possible due to the identifiability assumption of intervenable variables and is done by Algorithm \ref{alg: compute mu} presented in Appendix \ref{sec: estimation mu}.

Algorithm \ref{alg: compute mu} is proposed by \cite{bhattacharyya2020learning} to estimate $\mathbb{E}[Y|do(X)]$ from observational data when the causal effect $P_{x}(y)$ is identifiable in  $\mathcal{G}$. 

When an arm is observed frequently during the first part of the algorithm, the initial estimate of its expected reward becomes accurate. Algorithm \ref{alg: g-simple} spends the other half of its budget to explore the so-called infrequent arms (lines 9-23). 
An arm $a_{i,x} \in \mathcal{A}$ is considered to be infrequent if  $\hat{q}_{i,x}\leq\big(\frac{1}{n(\hat{\mathbf{q}})}\big)^{1/k_i}$, where 
\begin{small}
\begin{align}\label{eq: compute n}
    &n(\hat{\mathbf{q}})\!:=\! \min\! \Big\{\tau| \sum_{i,x} c_{i,x}\mathds{1}\big\{ \hat{q}_{i,x} < \Big(\frac{1}{\tau}\Big)^{\frac{1}{k_i}}\big\} \leq \tau \Big\},\\ \label{eq: q-quantity}
 &\hat{q}_{i,x} := \frac{2}{B} \min_{\mathbf{z}}{ \Big\{\sum_{t=1}^{B/2} \mathds{1} \big\{x_i^t=x,  \widetilde{\mathbf{Pa}}^t(x_i) = \mathbf{z} \big\} \Big\}},
\end{align}    
\end{small}
where  $\tiny{\widetilde{\mathbf{Pa}}(X_i):=\Big(\bigcup_{V_j \in \mathbf{C}_i} \mathbf{Pa}(V_j) \cup \mathbf{C}_i \Big)\setminus X_i}$, and $\mathbf{C}_i$ denotes the c-component in $\G$ containing $X_i$. The size of $C_i$ is denoted by $k_i$.

Let $\mathcal{A}'$ denote the set of all infrequently observed arms. 
If $\mathcal{A}'= \emptyset $, Algorithm \ref{alg: g-simple} spends the remaining budget for observation, i.e., pulls $a_0$. 
Otherwise, it uses the remaining budget to pull the infrequent arms and update their corresponding estimations.
Finally, it outputs an arm with the maximum estimated average reward.




\vspace{-.1cm}
\begin{restatable}{remark}{remarkno}\label{remark: no}
Consider the special case of no-backdoor graphs (causal graphs with no unblocked backdoor paths from intervenable variables to the reward variable $Y$). 
This graphical constraint ensures that for all $X\in\mathbf{X}$, $\mathbb{E}[ Y | do(X=x)]= \mathbb{E}[Y | X=x]$. This is due to the second rule of do-calculus \citep{pearl1995causal}.
For causal MABs with no-backdoor graphs, $\mu_{i,x}$ can be estimated using observation as follows ${\sum_{t=1}^{B/2}y^t\mathds{1}\{x_i^t=x\} }/{\sum_{t=1}^{B/2}\mathds{1}\{x_i^t=x\}}$.
When the interventions have non-uniform costs, redefining $\hat{q}_{i,x}= \frac{2}{B} \sum_{t=1}^{B/2} \mathds{1}\{x_i^t=x\}$ yields drastically lower regrets. This special case and our improvements are discussed in Appendix \ref{sec: noback}. 
\end{restatable}
\vspace{-.1cm}
\begin{restatable}{theorem}{theoremsimpleg}\label{th: simple g}
    The expected simple regret of Algorithm \ref{alg: g-simple} is bounded by  $\small{\mathcal{O}\Big(\sqrt{\frac{n(\mathbf{q})}{B} \log \frac{NB}{n(\mathbf{q})}}\Big)}$.
\end{restatable}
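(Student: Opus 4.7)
}

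The plan is to decompose the simple regret $R_s(B) = \mu_{a^*} - \mu_{\hat{a}^*}$ by separating the arms into two classes and showing that for each class the estimator used by Algorithm \ref{alg: g-simple} is accurate at scale $\varepsilon = \tilde{\mathcal{O}}(\sqrt{n(\mathbf{q})/B})$ with high probability. Define the population quantity $q_{i,x} := \min_{\mathbf{z}} P(X_i = x, \widetilde{\mathbf{Pa}}(X_i) = \mathbf{z})$, and call an arm $a_{i,x}$ \emph{frequent} if $q_{i,x}^{k_i} > 1/n(\mathbf{q})$ and \emph{infrequent} otherwise. Since the decision in lines 10--11 uses $\hat{q}_{i,x}$ rather than $q_{i,x}$, the first preliminary step is a concentration argument: $\hat{q}_{i,x}$ is a minimum over at most $2^{|\widetilde{\mathbf{Pa}}(X_i)|}$ empirical binomial averages based on $B/2$ i.i.d.\ observations of the passive distribution, so by Chernoff plus a union bound $|\hat{q}_{i,x} - q_{i,x}| \lesssim \sqrt{q_{i,x} \log(N/\delta)/B}$ simultaneously for all arms. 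From this one can show $n(\hat{\mathbf{q}}) = \Theta(n(\mathbf{q}))$ on a good event $\mathcal{E}_1$ of probability $\geq 1-\delta$, so frequency classifications based on $\hat{q}$ match those based on $q$ up to a constant factor.

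Next, on $\mathcal{E}_1$, I handle the two arm types separately. For a frequent arm, the observational estimator $\hat{\mu}_{i,x}$ produced by Algorithm \ref{alg: compute mu} from the first $B/2$ observations uses the identification factorisation in \eqref{eq: factor 2}: it is a sum of products of $k_i+$const empirical conditionals, each conditional being estimated from at least $\Omega(B q_{i,x})$ samples. By a standard Hoeffding/union-bound argument across all configurations $\mathbf{z}$ entering the factorisation, each conditional is estimated to accuracy $\tilde{\mathcal{O}}(1/\sqrt{B q_{i,x}})$, and propagating this through the product-and-sum gives
\[
|\hat{\mu}_{i,x} - \mu_{i,x}| \;\lesssim\; \sqrt{\tfrac{\log(NB/\delta)}{B\, q_{i,x}^{k_i}}} \;\leq\; \sqrt{\tfrac{n(\mathbf{q}) \log(NB/\delta)}{B}},
\]
where the last step uses the frequency criterion $q_{i,x}^{k_i} > 1/n(\mathbf{q})$. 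For an infrequent arm $a_{i,x}\in\mathcal{A}'$, Algorithm \ref{alg: g-simple} instead pulls it $n = B/\bigl(2 \sum_{a_{i,x}\in\mathcal{A}'} c_{i,x}\bigr)$ times in phase 2. Using $|\mathcal{A}'|\leq n(\hat{\mathbf{q}})$ from the definition of $n(\hat{\mathbf{q}})$, this gives $n = \Omega(B/n(\mathbf{q}))$, and a direct Hoeffding bound on the empirical interventional mean yields the same accuracy $\tilde{\mathcal{O}}(\sqrt{n(\mathbf{q})/B})$, again with probability $\geq 1-\delta$. In the degenerate branch $\mathcal{A}' = \emptyset$ the same guarantee holds because then every arm is frequent under the refined estimate built from all $B$ observations.

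Combining, on the intersection of these three high-probability events I have $|\hat{\mu}_a - \mu_a| \leq \varepsilon$ for every $a\in\mathcal{A}$, and hence
\[
\mu_{a^*} - \mu_{\hat{a}^*} \;\leq\; (\mu_{a^*}-\hat{\mu}_{a^*}) + (\hat{\mu}_{\hat{a}^*} - \mu_{\hat{a}^*}) \;\leq\; 2\varepsilon.
\]
Choosing $\delta$ of order $\sqrt{n(\mathbf{q})/(NB)}$ and integrating the $\{0,1\}$-bounded tail on the complement event yields the claimed expected regret $\mathcal{O}\!\bigl(\sqrt{(n(\mathbf{q})/B)\log(NB/n(\mathbf{q}))}\bigr)$. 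The main technical obstacle will be the middle step: controlling the propagation of per-conditional Hoeffding errors through the identification sum in \eqref{eq: factor 2} while keeping the dependence on $k_i$ only inside the $q_{i,x}^{k_i}$ factor rather than picking up an extra exponential factor in $k_i$. This is where the partitioning of $\mathbf{O}^t$ used to construct $\hat{P}_j^{t,s}$ (and the unbiasedness lemma) is essential, since it decouples the dependencies across the $k_i$ factors and lets us apply Hoeffding to each independently before averaging over $s\in[S_{i,x}^t]$.
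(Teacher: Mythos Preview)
Your overall architecture matches the paper's: concentrate $\hat{q}_{i,x}$ around $q_{i,x}$ to control $n(\hat{\mathbf{q}})$ by $n(\mathbf{q})$ (the paper's Lemma~\ref{lemma: bound n}), then bound $|\hat\mu_a-\mu_a|$ separately for arms in and out of $\mathcal{A}'$, union-bound, and tune $\epsilon$. Two things, however, are off.

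First, you have mixed up the two estimators in the paper. Algorithm~\ref{alg: g-simple} estimates $\hat\mu_{i,x}$ via Algorithm~\ref{alg: compute mu}, which is the Bayes-net learner of \cite{bhattacharyya2020learning}; it does \emph{not} use the partitioning $\mathbf{O}_j^t$ and the unbiased product estimators $\hat P_j^{t,s}$ from Section~\ref{sec: updates}. Those belong to the cumulative-regret estimator~\eqref{eq: estimate mu}. So your final paragraph, where you say the decoupling via the $s$-partitioning ``is essential'' for the frequent-arm bound, is pointing at machinery that is never invoked in Algorithm~\ref{alg: g-simple}. The paper instead obtains the frequent-arm concentration by citing Theorem~2.5 of \cite{bhattacharyya2020learning} as a black box (Lemma~\ref{lemma: mai}), which directly gives $P(|\hat\mu_{i,x}-\mu_{i,x}|>\epsilon)\le \exp(-\epsilon^2 q_{i,x}^{k_i}B/M)$ for a graph-dependent constant $M$. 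Your sketch --- Hoeffding on each conditional with $\Omega(Bq_{i,x})$ samples, then ``propagate through the product-and-sum'' --- does not by itself produce a $q_{i,x}^{k_i}$ in the exponent; naive telescoping of a $k_i$-fold product would give error $\approx k_i/\sqrt{Bq_{i,x}}$, not $1/\sqrt{Bq_{i,x}^{k_i}}$, and you have not explained why the latter is the right rate for this estimator. Either invoke the Bhattacharyya et al.\ result as the paper does, or be explicit about how the $k_i$ power arises.

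Second, your claim that ``frequency classifications based on $\hat q$ match those based on $q$ up to a constant factor'' hides a case you must handle explicitly: an arm with $a_{i,x}\notin\mathcal{A}'$ (so it receives only the observational estimate) but $q_{i,x}^{k_i}<1/n(\mathbf{q})$ (so Lemma~\ref{lemma: mai} alone is too weak). The paper treats this as its Case~1 and uses the $\hat q$-concentration event to force $q_{i,x}^{k_i}\ge 1/(2^{k+1}n(\mathbf{q}))$, after which Lemma~\ref{lemma: mai} applies. You should make this step explicit rather than absorbing it into ``up to a constant factor''.
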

\vspace{-.1cm}
\begin{remark}
\cite{maiti2022causal} proposes an algorithm for non-budgeted causal MAB with general causal graphs which is a special case of our setting in all costs are one. 
By setting $c_{i,x}=1$ for all $i$ and $x$ in Theorem \ref{th: simple g}, we can recover their expected simple regret bound.
\end{remark}
\vspace{-.1cm}
\begin{restatable}{remark}{remarkbetter} \label{remark: simple no}
\cite{nair2021budgeted} studies the causal MAB problem with no-backdoor graphs and an additional constraint on the costs that is $c_{i,x}=c>1$ for all $i$ and $x$ and $c_0=1$.
Note that this setting does not satisfy the non-budgeted assumption in \citep{maiti2022causal}. 
Moreover, their algorithm uses a different exploration set than $\mathcal{A}'$ that seems to result in both worse performance and theoretical bound. Specifically, the threshold for determining the infrequent arms in \citep{nair2021budgeted} is given by
$m'(\mathbf{q}):=\min \{ \tau | \sum_{i,x} \mathds{1} \{ p_{i,x} < \frac{1}{\tau}\} \leq \tau\}$. 
As we show in Appendix \ref{sec: noback}, in this setting, $n(\mathbf{q}) \leq c m'(\mathbf{q})$ for all $c>1$ and $\mathbf{q}$.
\cite{nair2021budgeted} shows that the expected simple regret of their algorithm is bounded by $\small{\mathcal{O}\Big(\sqrt{\frac{cm'(\mathbf{q})}{B} \log \frac{NB}{cm'(\mathbf{q})}}\Big)}$. 
Given that $n(\mathbf{q}) \leq cm'(\mathbf{q})$ for all $c>1$, even in the special setting of \citep{nair2021budgeted}, our algorithm achieves better expected simple regret. This is also shown empirically in our experiment in Section \ref{sec: exp simple no}. 
\end{restatable}

\section{Lower Bounds}\label{sec: lowers}
\paragraph{Simple regret:} 
As mentioned earlier, \cite{maiti2022causal} studies a special setting of causal MAB problem with uniform costs ($c_{i,x}=1$ for all $i,x$) in general causal graphs when the objective function of the learner is simple regret. In particular, they showed that there is a large class of causal graphs called tree-graphs, such that for any graph $\mathcal{T}$ in this class with $N$ intervenable nodes and a positive integer $M\leq N$, there exists a joint distribution $P(\cdot)$ compatible\footnote{Compatibility also known as Markov property \citep{pearl2009causality} means that the $P(\cdot)$ factorizes according to the graph $\mathcal{T}$.} with graph $\mathcal{T}$, such that $n(\mathbf{q})=M$ and the expected simple regret of any causal MAB algorithm is $\Omega \big( \sqrt{n(\mathbf{q})/B}\big)$. Comparing this result with the bound introduced in Theorem \ref{th: simple g}, we observe that for some classes of graphs with uniform costs, the expected simple regret obtained by Algorithm \ref{alg: g-simple} differs from the minimum value at most by a factor of $\small{\sqrt{\log\big(NB/n(\mathbf{q})\big)}}$. 

\paragraph{Cumulative regret:}
We prove $\small{\min_{A_B}\max_{\mathcal{C},\mathcal{G}_N,P'} R_c(A_B,\mathcal{G}_N,P,\mathcal{C})\geq}$ $\small{\Omega\big(\sqrt{\lfloor B/c\rfloor KN}\big)}$ in appendix \ref{sec:lower-cum}, where $R_c(A_B,\mathcal{G}_N,P,\mathcal{C})$ denotes the cumulative regret of an adaptive algorithm $A_B$ with total budget $B$ on a causal graph $\mathcal{G}_N$ with $N$ nodes each of which has domain $\{1,...,K\}$. The reward distribution is $P$ and the set of costs is given by $\mathcal{C}=\{1\leq c_a\leq c| a\in\mathcal{A}\}$. 
This shows that for any algorithm there exists a causal bandits problem characterized by $(\mathcal{G}_N,P,\mathcal{C})$ such that it suffers at least $\small{\Omega\big(\sqrt{\lfloor B/c\rfloor KN}\big)}$ of cumulative regret.

\vspace{.2cm}
\section{Experiments}\label{sec: expe}
\begin{wrapfigure}{r}{0.33\textwidth}
\begin{minipage}{0.33\textwidth}
\vspace{-0.5cm}
\begin{figure}[H]
    \begin{thesubfigure}
      \centering
      \includegraphics[width=.95\textwidth]{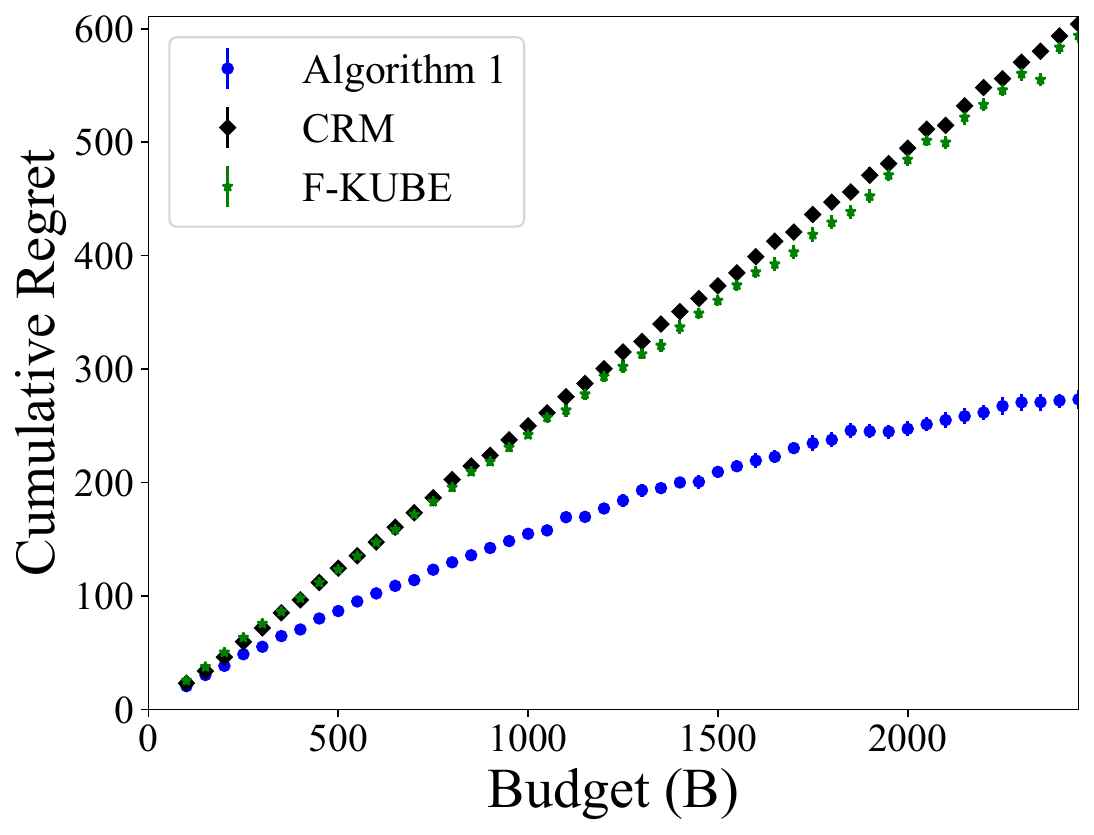}
      \label{fig: cumu-BN6}
    \end{thesubfigure}
    
    \begin{thesubfigure}
      \centering
      \includegraphics[width=.95\textwidth]{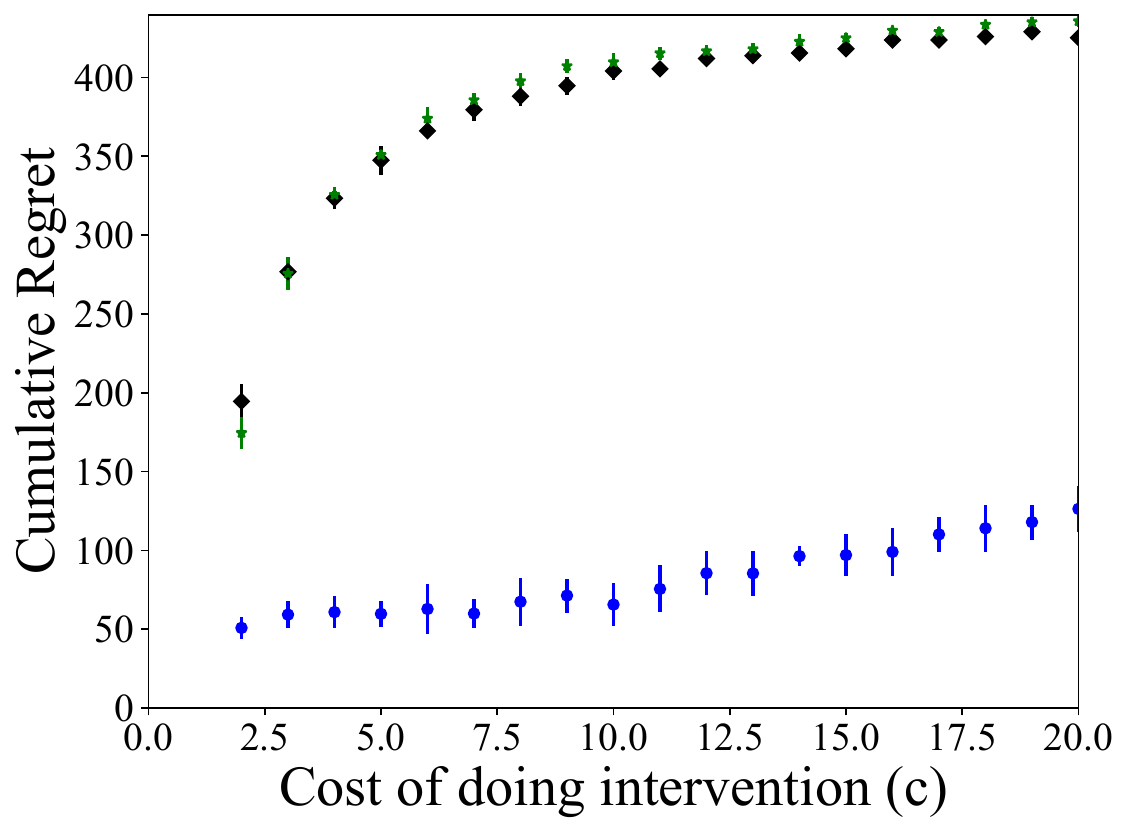}
      \label{fig: cumu-cN6}
    \end{thesubfigure}
     \caption{Cumulative regret on a general graph with $N=6$.}
     \vspace{-0.5cm}
    \label{fig: cumu-N6}
\end{figure}
\end{minipage}
\end{wrapfigure}
Herein, we present our empirical evaluations of our algorithm in comparison with state of the art. 
Throughout, each point in figures is obtained as an average of $100$ trials\footnote{Python implementations are provided in the supplementary.}. 

\vspace{.2cm}
\subsection{Cumulative Regret in General Graphs}\label{sec: exp cumu}

In this section, we compared the performance of Algorithm \ref{alg: g-cumulative} with algorithms \textit{CRM} in \citep{maiti2022causal} and \textit{F-KUBE} in \citep{tran2012knapsack}.
\textit{CRM} is a causal MAB algorithm designed for general causal graphs where all of the variables are observable (no hidden confounders exist).
\textit{F-KUBE} is a budgeted MAB algorithm with non-uniform costs that does not use the knowledge of the causal graph.
We used a graph with $6$ intervenable variables, $N=6$, and modeled each $V_i$ with at least a parent in $\mathcal{G}$ to be the XOR of its parents with probability $0.8$ or the XNOR of its parents, otherwise.
Moreover, for each variable $V_i$ without any parents, we modeled $V_i = Be(0.5+0.5 \epsilon)$, where $\epsilon \sim U(0,1)$.

The cumulative regret vs. budget plot in Figure \ref{fig: cumu-N6} depicts the performance of algorithms by assuming that the cost of pulling $a_{i,x}$ for $i \in [N], x \in \{0,1\}$ is selected randomly from $\{2,3\}$. 
As the budget increases, the cumulative regret of all of the algorithms increases.
However, the growth rate of the cumulative regrets of \textit{F-KUBE} and \textit{CRM} are higher than our algorithm.
Moreover, since the cumulative regrets of \textit{F-KUBE} and \textit{CRM} do not converge to a constant for $B\leq 2500$, they fail to identify the optimal arm within this budget range while the regret of our algorithm remains a constant for large budgets, which indicates that it could identify the best arm in the experiment.


The cumulative regret vs. intervention cost in Figure \ref{fig: cumu-N6} illustrates the performance of the algorithms when the budget was fixed to $1000$ and $c_{i,x}=c$ for all $i \in [N], x \in \{0,1\}$ such that $c \in \{2, 3, \dots, 20\}$ (uniform cost for all interventional arms).
As shown in this figure, the cumulative regret of Algorithm \ref{alg: g-cumulative} grows slower than the others.
Note that since \textit{CRM} considers only the causal graphs without hidden variables, for fairness, we compared these algorithms for the graph without hidden variables.
The underlying graph for this experiment and additional experiments on graphs with hidden variables are provided in Appendix \ref{sec: ad exp cumu}.

\vspace{.2cm}
\subsection{Simple Regret in No-backdoor Graphs}\label{sec: exp simple no}

\begin{wrapfigure}{r}{0.33\textwidth}
\begin{minipage}{0.33\textwidth}
\begin{figure}[H]
    \begin{thesubfigure}
      \centering
      \includegraphics[width=.95\textwidth]{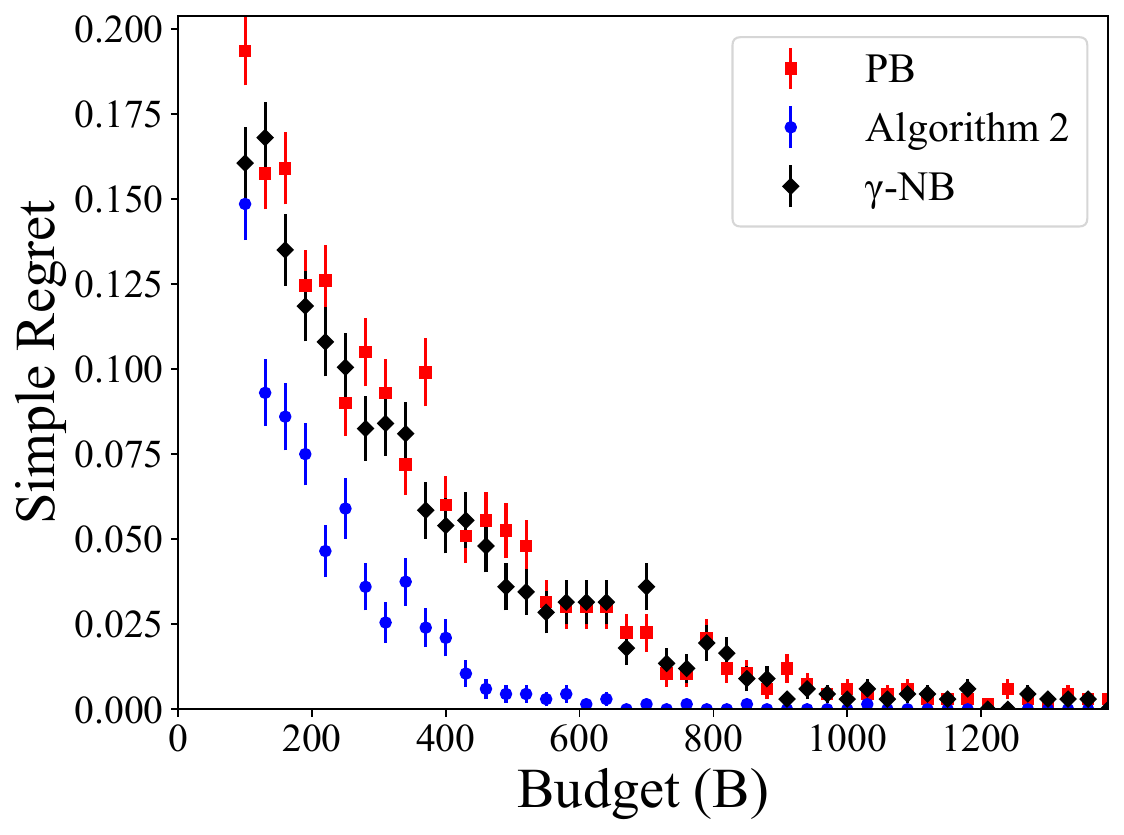}
      \label{fig: nobackdoor_ciN50}
    \end{thesubfigure}
    
    \begin{thesubfigure}
      \centering
      \includegraphics[width=.95\textwidth]{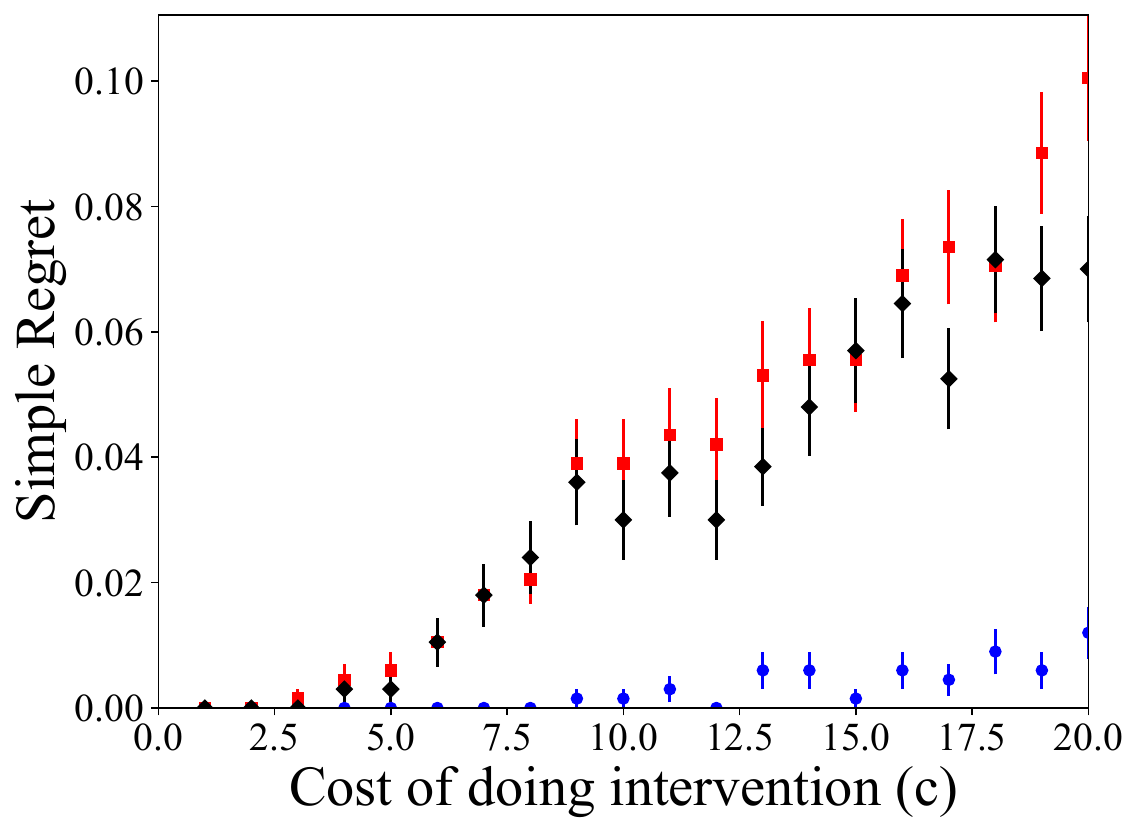}
      \label{fig: nobackdoor_cN50}
    \end{thesubfigure}
    \caption{Simple regret on the parallel graph with $N=50$.}
    \label{fig: nobackdoor_N50}
\end{figure}
\end{minipage}
\end{wrapfigure}
In order to be able to compare our algorithm for simple regret with several related works, we studied the causal MAB for the special case of no-backdoor graphs in this section.
We compared the performance of Algorithm \ref{alg: g-simple} with two causal bandit algorithms $\gamma$-NB \citep{nair2021budgeted} and \textit{PB} \citep{lattimore2016causal}.
\textit{PB} is a non-budgeted algorithm that is designed to minimize the simple regret when the graph is no-backdoor.
$\gamma$-NB is a budgeted version of \textit{PB} that allows uniform costs on arms, i.e., $c_{i,x}=c>1$ for all $i$ and $x$.

We used the same setting as in \citep{nair2021budgeted} and \citep{lattimore2016causal} in which the underlying graph has $50$ intervenable variables and all of these variables are parents of the reward variable $Y$. This particular structure is called a parallel graph.
We modeled $X_i \sim Bernoulli(p_i)$ with $p_1=p_2=0.02$ for $i\in\{1,2\}$ and $p_i= 0.5$ for $i \in \{3,\dots,50\}$.
Moreover, we modeled the reward variable as $Y  \sim Bernoulli (\frac{1}{2}+ \epsilon)$ if $X_1=1$, and otherwise, $Y \sim Bernoulli (\frac{1}{2}- \epsilon')$, where $\epsilon=0.3$ and $\epsilon'= \frac{p_1 \epsilon}{1-p_1}$.

The simple regret vs. budget plot in Figure \ref{fig: nobackdoor_N50} was obtained by selecting the cost of pulling each arm $a_{i,x}$ for $i \in [N], x \in \{0,1\}$ randomly from $\{2,3,4,5\}$. 
The simple regrets of all of the algorithms converge to zero as the budget increases, but Algorithm \ref{alg: g-simple} demonstrates faster convergence.
In the simple regret vs. cost of intervention plot in Figure \ref{fig: nobackdoor_N50} we considered the setting in which the budget was fixed to $1500$ and $c_{i,x}=c$ for $i \in [N], x \in \{0,1\}$ such that $c \in \{1, 2, \dots, 20\}$. 
The simple regret is increasing in terms of intervention costs as expected.
Since Algorithm \ref{alg: g-simple} uses a different exploration set compared to the others, it drastically outperforms them even in a setting favorable to them.
Additional experiments are presented in Appendix \ref{sec: ad exp simple no} including an experiment using \textit{Successive Rejects} algorithm in \citep{audibert2010best} which is a baseline MAB algorithm\footnote{Successive Rejects is not included in the experiments of the main text as it fails to perform well for large $N$.}.

\vspace{.2cm}
\subsection{Simple Regret in General Graphs}\label{sec: exp simple g} 
\begin{wrapfigure}{r}{0.4\textwidth}
\begin{minipage}{0.4\textwidth}
\vspace{-0.cm}
\begin{figure}[H]
    \begin{thesubfigure}
      \centering
      \includegraphics[width=.8\textwidth]{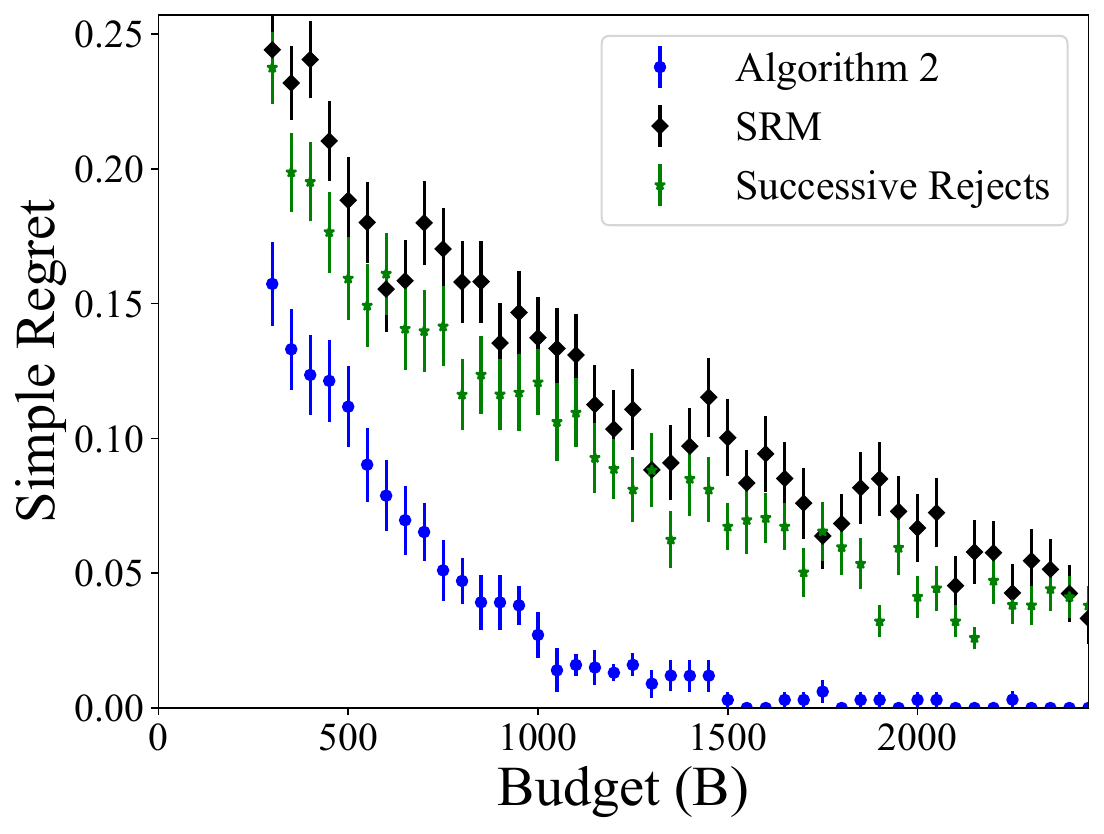}
      \label{fig: simgen-BN7}
    \end{thesubfigure}
    
    \begin{thesubfigure}
      \centering
      \includegraphics[width=.8\textwidth]{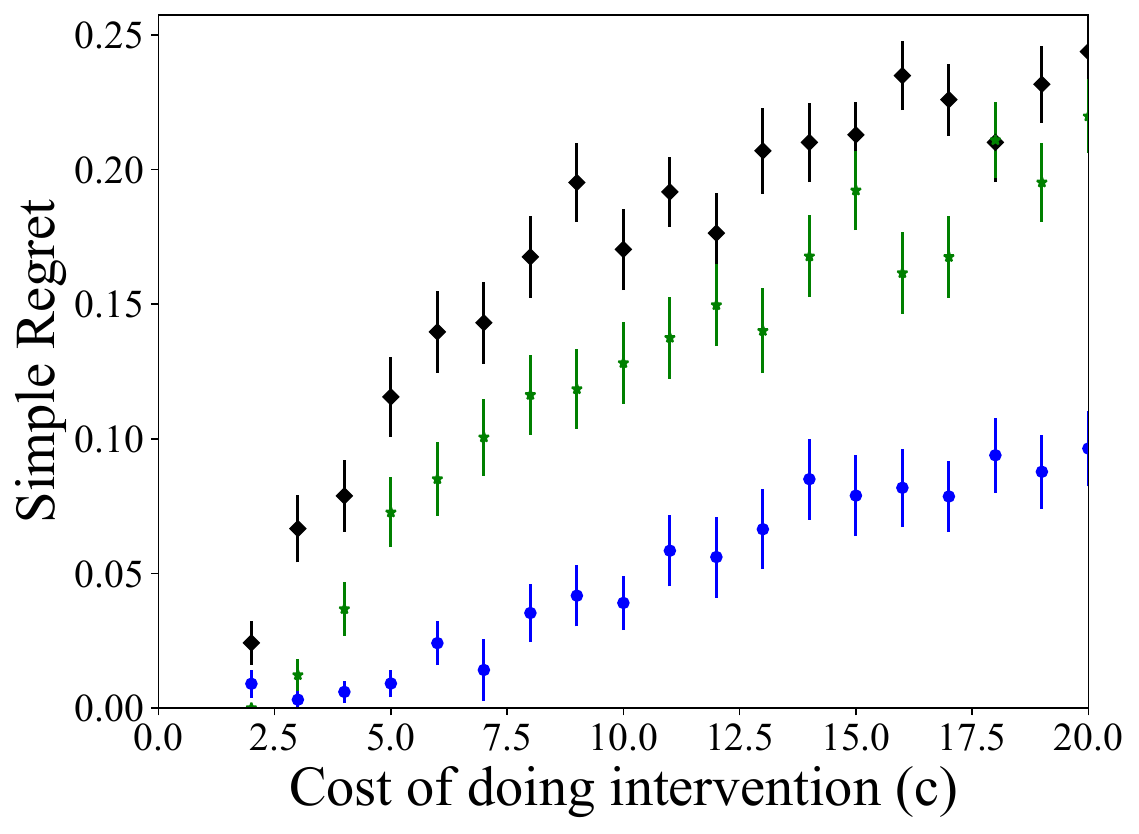}
      \label{fig: simgen-cN7}
    \end{thesubfigure}
    \caption{Simple regret on a general graph with $N=7$.}
        \vspace{-0.5cm}
    \label{fig: simgenN7}
\end{figure}
\end{minipage}
\end{wrapfigure}
We compared the performance of Algorithm \ref{alg: g-simple} with two algorithms, $SRM$ \citep{maiti2022causal} and \textit{Successive Rejects} for general graphs in addition to the special structures of the previous section.
$SRM$ is a causal MAB algorithm for minimizing simple regret in the non-budgeted setting where the underlying graph is general.
Here, we used a causal graph that violates the no-backdoor criterion.
The graph has $N=7$ intervenable variables, i.e., it has $15$ arms ($14$ interventional and one observational arm).
We used the same procedure to construct the model as Section \ref{sec: exp cumu}.

The simple regret vs. budget plot in Figure \ref{fig: simgenN7} illustrates the performance of algorithms when the cost of each interventional arm was selected randomly from $\{5,6,7\}$.
Algorithm \ref{alg: g-simple} converges to $0$ faster than the others as $B$ grows.
The simple regret vs. the cost of intervention plot in 
Figure \ref{fig: simgenN7} compares the performance of the algorithms when the budget is fixed to $800$, and the cost of all interventional arms is equal to $c$, where $c \in \{2, 3, \dots, 20\}$.
Additional experiments and the underlying graph of this experiment are provided in Appendix \ref{sec: ad exp simple g}.

\section{Conclusion}

We studied the budgeted causal MAB problem with non-uniform costs for different arms in general causal graphs in which all intervenable variables have identifiable causal effects. 
We considered two different learners; one with simple regret as its objective and the other with cumulative regret objective.
For each learner, we proposed an algorithm and provided theoretical guarantees.
Furthermore, through empirical studies in different scenarios, we evaluated the performances of our proposed algorithms and showed that they outperform the state-of-the-art.

\bibliography{bibliography.bib}
\appendix
\onecolumn

\newpage
\section{Technical preliminaries}\label{sec: tech}

\begin{definition}[Directed Path]
Let $V_1, V_2, \dots, V_m$ be a set of distinct vertices in an ADMG $\G$.
There is a directed path from $V_1$ to $V_m$ if $V_i \in \mathbf{Pa}(V_{i+1})$ for every $1 \leq i \leq m-1$.
\end{definition}

\begin{definition}[Descendant]
Let $X_i$ and $X_j$ be two vertices in an ADMG $\G$.
$X_j$ is called a descendant of $X_i$ if there exists a directed path from $X_i$ to $X_j$.
\end{definition}

\begin{definition}[Blocked] Given a causal graph $\G$ and two vertices $X_1, X_n \in \mathbf{V}$, a path between $X_1$ and $X_n$ is called blocked by a set of vertices $\mathbf{W}$ (with neither $X_1$ nor $X_n$ in $\mathbf{W}$) whenever there is a vertex $X_k$, such that one of the followings holds:
\begin{itemize}
    \item [(1)] $X_k \in \mathbf{W}$ and $X_{k-1} \to X_k \to X_{k+1}$ or $X_{k-1} \gets X_k \gets X_{k+1}$ or $X_{k-1} \gets X_k \to X_{k+1}$,
    \item [(2)] $X_{k-1} \to X_k \gets X_{k+1}$ and neither $X_k$ nor any of its descendants is in $\mathbf{W}$.
\end{itemize}
    
\end{definition}
\begin{lemma}[Chernoff inequalities]\label{lemma: chernoff}
    Let $X$ be a random variable. Then, for every $s \geq 0$, the followings hold:
    \begin{itemize}
        \item[(1)] $P(X \geq \mathbbm E[X]+s) \leq \min_{\lambda \geq 0}\mathbbm E\big[\exp \big(\lambda(X-\mathbbm E[X])\big)\big]\exp(-\lambda s)$
        \item[(2)] $P(X \leq \mathbbm E[X]-s) \leq \min_{\lambda \geq 0} \mathbbm E\big[\exp \big(\lambda(\mathbbm E[X]-X)\big)\big]\exp(-\lambda s)$
    \end{itemize}
\end{lemma}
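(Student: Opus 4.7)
The plan is to derive both inequalities as essentially immediate consequences of Markov's inequality applied to an exponential transformation of the random variable, with the freedom in the parameter $\lambda\geq 0$ then yielding the minimum on the right-hand side. This is the classical Chernoff-bound argument, and no deeper probabilistic machinery is needed.

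For part (1), I would first fix an arbitrary $\lambda\geq 0$ and observe that, because $u\mapsto e^{\lambda u}$ is nondecreasing for $\lambda\geq 0$, the events $\{X\geq \mathbb{E}[X]+s\}$ and $\{\exp(\lambda(X-\mathbb{E}[X]))\geq \exp(\lambda s)\}$ coincide. Then Markov's inequality applied to the nonnegative random variable $\exp(\lambda(X-\mathbb{E}[X]))$ gives
\begin{equation*}
P\bigl(X\geq \mathbb{E}[X]+s\bigr)\;=\;P\!\left(e^{\lambda(X-\mathbb{E}[X])}\geq e^{\lambda s}\right)\;\leq\;\frac{\mathbb{E}\bigl[e^{\lambda(X-\mathbb{E}[X])}\bigr]}{e^{\lambda s}}\;=\;\mathbb{E}\bigl[e^{\lambda(X-\mathbb{E}[X])}\bigr]\,e^{-\lambda s}.
\end{equation*}
Since the left-hand side does not depend on $\lambda$, and the inequality holds for every $\lambda\geq 0$, I can take the infimum (minimum) over $\lambda\geq 0$ of the right-hand side, yielding exactly the bound in (1).

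For part (2), I would run the mirror-image argument. Set $Y:=-X$, so that $\mathbb{E}[Y]=-\mathbb{E}[X]$ and the event $\{X\leq \mathbb{E}[X]-s\}$ becomes $\{Y\geq \mathbb{E}[Y]+s\}$. Applying part (1) to $Y$ gives
\begin{equation*}
P\bigl(X\leq \mathbb{E}[X]-s\bigr)\;\leq\;\min_{\lambda\geq 0}\mathbb{E}\bigl[e^{\lambda(Y-\mathbb{E}[Y])}\bigr]\,e^{-\lambda s}\;=\;\min_{\lambda\geq 0}\mathbb{E}\bigl[e^{\lambda(\mathbb{E}[X]-X)}\bigr]\,e^{-\lambda s},
\end{equation*}
which is exactly (2). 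Alternatively one can repeat the Markov-inequality derivation directly, noting that for $\lambda\geq 0$ the event $\{X\leq \mathbb{E}[X]-s\}$ coincides with $\{\exp(\lambda(\mathbb{E}[X]-X))\geq \exp(\lambda s)\}$ because $u\mapsto e^{\lambda u}$ is still nondecreasing.

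There is essentially no obstacle here, since both statements follow in one line from Markov's inequality; the only thing to be careful about is that the monotonicity of $u\mapsto e^{\lambda u}$ requires $\lambda\geq 0$ (which is guaranteed by the range of minimization), and that the minimum is understood as an infimum when the optimizer is not attained. If desired, I would also briefly remark that, since the choice $\lambda=0$ yields the trivial bound $1$, the inequalities hold nonvacuously without any integrability assumption on $X$.
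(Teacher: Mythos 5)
Your proof is correct: this is the standard Chernoff--Markov argument, and the paper itself states this lemma as a known technical preliminary without providing a proof, so your derivation is exactly the canonical one the paper implicitly relies on. The only microscopic imprecision is the claim that the events $\{X\geq \mathbb{E}[X]+s\}$ and $\{e^{\lambda(X-\mathbb{E}[X])}\geq e^{\lambda s}\}$ \emph{coincide} --- for $\lambda=0$ the latter is the whole sample space --- but the inclusion needed to apply Markov's inequality holds for every $\lambda\geq 0$, and the $\lambda=0$ case yields the trivial bound $1$ in any event, so nothing breaks.
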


\begin{lemma}[Hoeffding inequalities]\label{lemma: hoef}
    Let $X$ be a random variable such that $X \in [a,b]$. Therefore, for every $\lambda \in \mathbb R$:
    \begin{equation*}
        \mathbbm E\Big[\exp \big (\lambda(X - \mathbbm E[X] )\big)\Big] \leq \exp \Big(\frac{\lambda^2(b-a)^2}{8} \Big).
    \end{equation*}
\end{lemma}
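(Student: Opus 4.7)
\textbf{Proof proposal for Lemma \ref{lemma: hoef} (Hoeffding's lemma).} The plan is the standard convexity-plus-Taylor argument. First I would reduce to the zero-mean case: set $Y := X - \mathbb{E}[X]$, so that $\mathbb{E}[Y]=0$ and $Y\in[a',b']$ with $a' := a-\mathbb{E}[X]$, $b' := b-\mathbb{E}[X]$, and crucially $b'-a' = b-a$. The quantity to bound then becomes $\mathbb{E}[e^{\lambda Y}]$.

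Next I would exploit the convexity of $x\mapsto e^{\lambda x}$ on the interval $[a',b']$. Writing any $y\in[a',b']$ as the convex combination $y = \tfrac{b'-y}{b'-a'} a' + \tfrac{y-a'}{b'-a'} b'$ gives the pointwise bound
\begin{equation*}
e^{\lambda y} \;\le\; \tfrac{b'-y}{b'-a'}\, e^{\lambda a'} + \tfrac{y-a'}{b'-a'}\, e^{\lambda b'}.
\end{equation*}
Taking expectations and using $\mathbb{E}[Y]=0$ yields
\begin{equation*}
\mathbb{E}[e^{\lambda Y}] \;\le\; \tfrac{b'}{b'-a'}\, e^{\lambda a'} \;-\; \tfrac{a'}{b'-a'}\, e^{\lambda b'}.
\end{equation*}
Introducing $p := -a'/(b'-a') \in [0,1]$ and $h := \lambda(b'-a') = \lambda(b-a)$, so that $\lambda a' = -ph$, the right-hand side factorises cleanly as $e^{-ph}\bigl[(1-p) + p\,e^{h}\bigr] = e^{\varphi(h)}$, where $\varphi(h) := -ph + \ln(1-p + p e^{h})$.

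The core of the argument is then to show that $\varphi(h) \le h^2/8$ for all $h\in\mathbb{R}$. I would compute $\varphi(0)=0$ and $\varphi'(h) = -p + \frac{p e^{h}}{1-p+p e^{h}}$, so that $\varphi'(0)=0$. Setting $u(h) := \tfrac{p e^{h}}{1-p + p e^{h}} \in [0,1]$, a direct computation gives $\varphi''(h) = u(h)\bigl(1 - u(h)\bigr)$, which is at most $1/4$ by the elementary inequality $u(1-u)\le 1/4$. Integrating twice from $0$ to $h$ yields $\varphi(h)\le h^2/8$, and substituting back gives
\begin{equation*}
\mathbb{E}[e^{\lambda(X-\mathbb{E}[X])}] \;\le\; e^{\varphi(h)} \;\le\; \exp\!\left(\tfrac{h^2}{8}\right) \;=\; \exp\!\left(\tfrac{\lambda^2(b-a)^2}{8}\right),
\end{equation*}
which is exactly the claim.

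The only place requiring a little care is the uniform bound $\varphi''\le 1/4$: one must verify that $u(h)(1-u(h))$ is really the right expression for $\varphi''(h)$ (a short quotient-rule computation) and then apply $u(1-u)\le 1/4$. Everything else—the convexity step, the mean-zero reduction, and the double integration of $\varphi''$—is routine, so this is where I would focus the write-up.
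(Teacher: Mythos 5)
Your proof is correct: it is the standard proof of Hoeffding's lemma (reduce to mean zero, bound the exponential by convexity on the interval, and show $\varphi(h)\le h^2/8$ via $\varphi(0)=\varphi'(0)=0$ and $\varphi''=u(1-u)\le 1/4$), and each step checks out, including the quotient-rule computation for $\varphi''$. The paper states Lemma~\ref{lemma: hoef} as a standard technical preliminary without providing a proof, so there is nothing to compare against; your write-up would serve as a valid self-contained justification (the only omission is the trivial degenerate case $a=b$, which holds immediately).
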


\begin{lemma}[Chernoff-Hoeffeding inequality]\label{lemma: ch-h}
    Assume $X^1, \dots, X^T$ are independent random variables such that $ 0 \leq X^t \leq 1$ for $t=1, \dots, T$.
    Then, for every $\epsilon > 0$, the following inequalities hold:
\begin{itemize}
    \item[(1)] $P \Big(\sum_{t \in [T]}X^t - \mathbbm E\big [\sum_{t \in [T]}X^t \big] \geq \epsilon \Big) \leq \exp(\frac{-2\epsilon^2}{T})$,
    \item[(2)] $P \Big(\sum_{t \in [T]}X^t - \mathbbm E\big [\sum_{t \in [T]}X^t \big] \leq -\epsilon \Big) \leq \exp(\frac{-2\epsilon^2}{T})$.
\end{itemize}
\end{lemma}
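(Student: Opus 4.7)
The plan is to combine the two lemmas that immediately precede the statement -- the generic Chernoff inequality (Lemma~\ref{lemma: chernoff}) applied to the sum $S_T := \sum_{t \in [T]} X^t$, and Hoeffding's moment-generating-function estimate (Lemma~\ref{lemma: hoef}) applied separately to each summand -- and then optimize the resulting free parameter $\lambda$. The two tail bounds (1) and (2) are proved by the same recipe, so I will treat (1) in detail and indicate the symmetric modification for (2).

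First, I would apply part (1) of Lemma~\ref{lemma: chernoff} with $X = S_T$, which gives
\[
P\bigl(S_T - \mathbb{E}[S_T] \geq \epsilon\bigr) \leq \min_{\lambda \geq 0} \mathbb{E}\bigl[\exp\bigl(\lambda(S_T - \mathbb{E}[S_T])\bigr)\bigr]\,\exp(-\lambda \epsilon).
\]
Writing $S_T - \mathbb{E}[S_T] = \sum_{t \in [T]} (X^t - \mathbb{E}[X^t])$ and using independence of the $X^t$, the joint moment generating function factorizes:
\[
\mathbb{E}\bigl[\exp\bigl(\lambda(S_T - \mathbb{E}[S_T])\bigr)\bigr] = \prod_{t=1}^{T} \mathbb{E}\bigl[\exp\bigl(\lambda(X^t - \mathbb{E}[X^t])\bigr)\bigr].
\]
Since each $X^t$ lies in $[0,1]$, invoking Lemma~\ref{lemma: hoef} with $a=0$ and $b=1$ bounds each factor by $\exp(\lambda^2/8)$, so the product is at most $\exp(\lambda^2 T/8)$.

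Plugging back into the Chernoff bound yields
\[
P\bigl(S_T - \mathbb{E}[S_T] \geq \epsilon\bigr) \leq \min_{\lambda \geq 0} \exp\!\Bigl(\tfrac{\lambda^2 T}{8} - \lambda \epsilon\Bigr).
\]
The exponent is a convex quadratic in $\lambda$ minimized at $\lambda^\star = 4\epsilon/T \geq 0$, and substituting $\lambda^\star$ gives $-2\epsilon^2/T$, which is exactly the required bound. For part (2), I would either apply part (2) of Lemma~\ref{lemma: chernoff} to $S_T$, or equivalently apply part (1) to the variables $Y^t := -X^t$ (which satisfy $Y^t \in [-1,0]$, still a unit-length interval, so Lemma~\ref{lemma: hoef} applies with the same $(b-a)^2 = 1$). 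The optimization over $\lambda$ is identical, producing the same $\exp(-2\epsilon^2/T)$ tail.

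I do not foresee any genuine obstacle, since both ingredients are stated above as given lemmas and the remainder is a standard calculation. The only points that need care are bookkeeping details: verifying that independence is what makes the MGF factorize, checking that Lemma~\ref{lemma: hoef} is valid for \emph{all} real $\lambda$ (so that part (2) does not require a separate Hoeffding bound), and tracking the factor $(b-a)^2 = 1$ through the computation so that the exponent comes out as $-2\epsilon^2/T$ rather than a weaker constant.
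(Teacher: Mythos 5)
Your proof is correct, and it is essentially the argument the paper itself relies on: Lemma~\ref{lemma: ch-h} is stated as a known preliminary without a standalone proof, but the exact same chain --- Chernoff bound (Lemma~\ref{lemma: chernoff}), factorization of the moment generating function by independence, Hoeffding's bound (Lemma~\ref{lemma: hoef}) with $(b-a)^2=1$, and optimization at $\lambda = 4\epsilon/T$ --- is carried out verbatim in the paper's proof of the concentration bound for $\hat{\mu}_{i,x}^t$ in Appendix~\ref{sec: proof cumu}. Your bookkeeping (the value of $\lambda^\star$, the resulting exponent $-2\epsilon^2/T$, and the handling of the lower tail via Lemma~\ref{lemma: hoef} being valid for all real $\lambda$) is accurate.
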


\section{Proofs of Section \ref{sec: general cumulative} } \label{sec: proof cumu}

In order to prove Theorem \ref{th: cumu}, we require several technical lemmas which we present below.

In the following lemma, we show that $\hat{\mu}_{i,x}^t$ is an unbiased estimator of $\mu_{i,x}$.

\lemmaunbiased*

\begin{proof}
    Recall Equation \eqref{eq: estimate mu}:
    \begin{equation*}
    \hat{\mu}_{i,x}^t:= \frac{\sum_{t' \in \mathbf{I}_{i,j}^{t}}\mathds{1}\{Y^{t'}=1\}+ \sum_{s \in [S_{i,x}^{t}]}Y^{s}_{i,x}}{N_{i,x}^{t}+S_{i,x}^{t}}.
\end{equation*}
Note that $Y^{s}_{i,x}$ is an unbiased estimator of $\mu_{i,x}$ because we partition the time steps that arm $a_0$ was pulled into $|\mathbf{V}|$ different number of subsets.  
Taking expectations from both sides of the above equation yields
\begin{equation*}
    \begin{aligned}
        \mathbbm E [\hat{\mu}_{i,x}^{t}]&= \mathbbm E\left[\frac{\sum_{t' \in \mathbf{I}_{i,j}^{t}}\mathds{1}\{Y^{t'}=1\}+ \sum_{s \in [S_{i,x}^{t}]}Y^{s}_{i,x}}{N_{i,x}^{t}+S_{i,x}^{t}}\right]\\
        &=\sum_{a=1}^{\infty}\sum_{b=0}^{\infty} \mathbbm E\left[\frac{\sum_{t' \in \mathbf{I}_{i,j}^{t}}\mathds{1}\{Y^{t'}=1\}+ \sum_{s \in [S_{i,x}^{t}]}Y^{s}_{i,x}}{N_{i,x}^{t}+S_{i,x}^{t}} \Big|N_{i,x}^{t}=a,S_{i,x}^{t}=b \right]P(N_{i,x}^{t}=a,S_{i,x}^{t}=b)\\
        &=\sum_{a=1}^{\infty}\sum_{b=0}^{\infty} \mathbbm E \Big[\frac{a \mu_{i,x}+b \mu_{i,x}}{a+b}|N_{i,x}^{t}=a,S_{i,x}^{t}=b \Big]P(N_{i,x}^{t}=a,S_{i,x}^{t}=b)\\
        &= \mu_{i,x} \sum_{a=1}^{\infty}\sum_{b=0}^{\infty}P(N_{i,x}^{t}=a,S_{i,x}^{t}=b)= \mu_{i,x}.
    \end{aligned}
\end{equation*}
Similarly, one can show that $\hat{\mu}_{0}^{t}$ is an unbiased estimator of $\mu_0$.
\end{proof}

Next, we show a concentration result for $\hat{\mu}_{i,x}^t$ in \eqref{eq: estimate mu}.
\begin{lemma}\label{lemma: bound mu  N&S}
    For $\hat{\mu}_{i,x}^{t}$ given in Equation \eqref{eq: estimate mu}, we have $P \big(|\hat{\mu}_{i,x}^{t}- \mu_{i,x}| \geq \epsilon \big) \leq 2\exp \big({-2(N_{i,x}^{t}+S_{i,x}^{t})\epsilon^2}\big)$. 
\end{lemma}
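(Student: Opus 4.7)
\textbf{Proof plan for Lemma \ref{lemma: bound mu N&S}.}

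The plan is to view the numerator of $\hat{\mu}_{i,x}^t$ as a sum of $N_{i,x}^t + S_{i,x}^t$ independent, $[0,1]$-valued random variables, each with mean $\mu_{i,x}$, and then apply the two-sided Chernoff--Hoeffding inequality (Lemma \ref{lemma: ch-h}) after conditioning on the realized values of $N_{i,x}^t$ and $S_{i,x}^t$.

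First, I would verify the three prerequisites for applying Lemma \ref{lemma: ch-h}. (i) \emph{Boundedness in $[0,1]$.} The $N_{i,x}^t$ interventional terms $\mathds{1}\{y^{t'}=1\}$ are clearly in $\{0,1\}$. For the observational terms $Y_{i,x}^{s}$, each factor $\hat{P}_j^{t,s}(\cdot,\cdot)$ defined in \eqref{eq: Px'}--\eqref{eq: Px} is an empirical probability in $[0,1]$, and the definition of $Y_{i,x}^{s}$ is simply the plug-in estimator of $P(Y=1\mid do(X_i=x))$ obtained from the identification formula \eqref{eq: factor 2}. Since summing the analogous expression over \emph{all} realizations of $\mathbf{W}_i$ (both $y=0$ and $y=1$) yields $1$, restricting to $y=1$ yields a value in $[0,1]$. (ii) \emph{Unit mean $\mu_{i,x}$.} The interventional terms are Bernoulli($\mu_{i,x}$) by definition of the intervention $do(X_i=x)$. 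For each $s$, the proof of Lemma \ref{lemmaunbiased} (applied to a single summand rather than to the whole sum) gives $\mathbb{E}[Y_{i,x}^s]=\mu_{i,x}$, since each $\hat{P}_j^{t,s}$ is constructed from a disjoint sub-partition of the observational data and the identification formula \eqref{eq: factor 2} is exact.

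Next, I would argue \emph{independence} of all $N_{i,x}^t+S_{i,x}^t$ terms. The interventional samples come from disjoint time steps in $\mathbf{I}_{i,x}^t$, and each observation after $do(X_i=x)$ is a fresh draw from the post-interventional distribution; hence those $N_{i,x}^t$ terms are i.i.d. The observational estimators $Y_{i,x}^1,\ldots,Y_{i,x}^{S_{i,x}^t}$ are built from the random sub-partitions $\mathbf{O}_j^{t,s}(x',\mathbf{w}_i')$, which are pairwise disjoint across $s$ by construction; since the time steps in $\mathbf{O}^t$ produce i.i.d. draws from the observational distribution, the resulting $Y_{i,x}^s$ are mutually independent. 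Finally, interventional and observational rounds are disentangled because they correspond to disjoint time indices and independent randomness, so the full collection is independent.

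With these facts in hand, the conclusion is a straightforward conditioning argument. Condition on $N_{i,x}^t = a$ and $S_{i,x}^t = b$. Then $\hat{\mu}_{i,x}^t$ is the empirical average of $a+b$ independent $[0,1]$-valued random variables with mean $\mu_{i,x}$, so applying Lemma \ref{lemma: ch-h} with $T = a+b$ and slack $(a+b)\epsilon$ yields
\[
P\bigl(|\hat{\mu}_{i,x}^t - \mu_{i,x}| \geq \epsilon \,\big|\, N_{i,x}^t = a,\, S_{i,x}^t = b\bigr) \;\leq\; 2\exp\bigl(-2(a+b)\epsilon^2\bigr).
\]
Since the right-hand side depends only on $a+b$, marginalizing over the joint distribution of $(N_{i,x}^t, S_{i,x}^t)$ (using that $\exp(-2(a+b)\epsilon^2)$ is a bounded function) delivers the stated bound. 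The main obstacle in executing this plan is the second prerequisite above: carefully justifying that the partitioning scheme preserves both unit-mean unbiasedness and independence of the $Y_{i,x}^s$'s despite the fact that $S_{i,x}^t$ is itself a data-dependent quantity; this is handled by noting that conditioning on $S_{i,x}^t=b$ only restricts how finely the disjoint observational sub-partitions are cut, without introducing dependence across the sub-partitions or biasing any individual plug-in estimator.
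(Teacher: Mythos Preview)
Your approach is essentially the paper's: both treat the numerator of $\hat{\mu}_{i,x}^t$ as a sum of $N_{i,x}^t+S_{i,x}^t$ independent $[0,1]$-valued terms with mean $\mu_{i,x}$ and apply Hoeffding-type concentration. The only cosmetic difference is that the paper spells out the MGF steps (Chernoff's inequality, Lemma~\ref{lemma: chernoff}, followed by Hoeffding's lemma, Lemma~\ref{lemma: hoef}, then optimizing over $\lambda$) rather than invoking the packaged Lemma~\ref{lemma: ch-h}, and it treats $N_{i,x}^t,S_{i,x}^t$ as fixed throughout without an explicit conditioning step. One small correction: your final marginalization step does not actually produce the stated inequality, since averaging the conditional bound over $(N_{i,x}^t,S_{i,x}^t)$ would give $2\,\mathbb{E}\!\big[\exp(-2(N_{i,x}^t+S_{i,x}^t)\epsilon^2)\big]$ rather than a bound with the random $N_{i,x}^t+S_{i,x}^t$ on the right-hand side; both here and in the paper the inequality is meant (and used downstream, e.g.\ in Lemma~\ref{lemma: prob mu}) conditionally on these counts, so you should simply stop at the conditional bound.
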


\begin{proof}
    \begin{equation}
        \begin{aligned}
            &P(\hat{\mu}_{i,x}^t - \mu_{i,x} \geq \epsilon)= P\left(\frac{\sum_{j \in \mathbf{I}_{i,x}^{t}}\mathds{1}\{Y^{t'}=1\}+ \sum_{s \in [S_{i,x}^{t}]}Y^{s}_{i,x}}{N_{i,x}^{t}+S_{i,x}^{t}} \geq \mu_{i,x}+\epsilon\right)\\
            &= P \left(\sum_{t' \in \mathbf{I}_{i,x}^{t}}\mathds{1}\{Y^{t'}=1\}+ \sum_{s \in [S_{i,x}^{t}]}Y^{s}_{i,x}\geq ({N_{i,x}^{t}+S_{i,x}^{t}} )\mu_{i,x}+({N_{i,x}^{t}+S_{i,x}^{t}} )\epsilon \right)\\
            & \labelrel\leq{eq: cher}  \min_{\lambda \geq 0} \mathbbm E \left[\exp \Big(\lambda \big(\sum_{t' \in \mathbf{I}_{i,x}^{t}}(\mathds{1}\{Y^{t'}=1\}-\mu_{i,x})+ \sum_{s \in [S_{i,x}^{t}]}(Y^{s}_{i,x}-\mu_{i,x})\big) \Big)\right]\exp \big(-\lambda(N_{i,x}^{t}+S_{i,x}^{t})\epsilon \big)\\
             &=\min_{\lambda \geq 0}\mathbbm E \left[ \prod_{t' \in \mathbf{I}_{i,x}^{t}} \exp \big(\lambda(\mathds{1}\{Y^{t'}=1\}-\mu_{i,x})\big) \prod_{s \in [S_{i,x}^{t}]} \exp \big(\lambda(Y^{s}_{i,x}-\mu_{i,x})\big) \right]\exp \big(-\lambda(N_{i,x}^{t}+S_{i,x}^{t})\epsilon \big)\\
            &\labelrel={eq: ind} \min_{\lambda \geq 0}\prod_{t' \in \mathbf{I}_{i,x}^{t}} \mathbbm E \Big[\exp \big(\lambda(\mathds{1}\{Y^{t'}=1\}-\mu_{i,x})\big) \Big] \prod_{s \in [S_{i,x}^{t}]}\mathbbm E \Big[\exp \big(\lambda(Y^{s}_{i,x}-\mu_{i,x})\big) \Big]\exp \big(-\lambda(N_{i,x}^{t}+S_{i,x}^{t}\big)\epsilon)\\
            & \labelrel\leq{eq: hoef} \min_{\lambda \geq 0} \exp \Big(\frac{N_{i,x}^{t}\lambda^2}{8}+\frac{S_{i,x}^{t}\lambda^2}{8}-\lambda(N_{i,x}^{t}+S_{i,x}^{t})\epsilon \Big)\\
            &\labelrel\leq{eq: min} \exp \big(-2(N_{i,x}^{t}+S_{i,x}^{t})\epsilon^2\big).
        \end{aligned}
    \end{equation}    
    
    The inequality in \eqref{eq: cher} holds using Lemma \ref{lemma: chernoff}. 
    The equality in \eqref{eq: ind} is true since the terms in the product are independent.
    The inequality in \eqref{eq: hoef} follows from Lemma \ref{lemma: hoef} where $Y^{t'}, Y_{i,x}^{s} \in \{0,1\}$.
    Finally, the inequality in \eqref{eq: min} follows after substituting the optimal $\lambda:=4\epsilon$.
    In a similar way, it can be shown
    \begin{equation*}
        P\big(\hat{\mu}_{i,x}^{t}-\mu_{i,x} \leq  - \epsilon \big) \leq \exp \big(-2(N_{i,x}^{t}+S_{i,x}^{t})\epsilon^2 \big).
    \end{equation*}
    Therefore, we get 
    \begin{equation*}
        P \big(|\hat{\mu}_{i,x}^{t}-\mu_{i,x} |\geq   \epsilon \big) \leq 2\exp \big(-2(N_{i,x}^{t}+S_{i,x}^{t})\epsilon^2 \big).
    \end{equation*}
\end{proof}

In the following lemma, we introduce a bound for the expectation of $S_{i,x}^{T}$.
Recall that $S_{i,x}^{T}:=\min\left\{\min_{j:V_j \in \mathbf{C}_i}S_{j,i}^{T}, \min_{j:V_j \notin \mathbf{C}_i}\Tilde{S}_{j,i,x}^{T}\right\}$ where
\begin{equation*}
    \begin{cases} 
      S_{j,i}^{T} := \min_{\mathbf{w}_i'}$ $\min_{x'} |\mathbf{O}_{j}^{T}(x',\mathbf{w}_i')| & \text{if}\ V_j \in \mathbf{C}_i, \\
       \\
      \tilde{S}_{j,i,x}^{T} := \min_{\mathbf{w}_i'} |\mathbf{O}_{j}^{T}(x,\mathbf{w}_i')| & \text{if}\ V_j \notin \mathbf{C}_i.
   \end{cases}
\end{equation*}
\begin{lemma}\label{lemma: exp S}
    Let $\mathcal{W}_i$ be the size of the domain set of $\mathbf{V}\setminus \{X_i, Y\}$ and $p_{i,x}:= \min_{j}\min_{\mathbf{w}_i'}p_{j,i,x}(\mathbf{w}_i')$.
    Moreover, define $\tau_{i,x,T} :=\max(0,1-|\mathbf{V}|\mathcal{W}_iT^{-\frac{p_{i,x}^{2}}{2|\mathbf{V}|}})$. 
    Then,
    \begin{equation*}
        \mathbbm E[S_{i,x}^{T}] \geq \frac{p_{i,x}}{2|\mathbf{V}|} \mathbbm E[N_{0}^{T}] \tau_{i,x, T}-1.
    \end{equation*}
\end{lemma}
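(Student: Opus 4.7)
The plan is to exploit the fact that each counting variable $|\mathbf{O}_j^{T}(x',\mathbf{w}_i')|$ concentrates around its conditional mean, and to show that with high probability all the counts appearing in the minimum defining $S_{i,x}^{T}$ are simultaneously at least half of the worst-case mean. The partition of the observational indices $\mathbf{O}^{T}$ into $|\mathbf{V}|$ subsets $\mathbf{O}_j^{T}$ is uniform and independent of the data, and the indicator that a given observation $t'$ satisfies $\mathbf{z}_j^{t'} = (x' \circ \mathbf{w}_i')_{\mathbf{Z}_j}$ is a Bernoulli variable with success probability at least $p_{i,x}$. Consequently, conditional on $N_0^{T}$, the variable $|\mathbf{O}_j^{T}(x',\mathbf{w}_i')|$ stochastically dominates a $\mathrm{Bin}(N_0^{T},\,p_{i,x}/|\mathbf{V}|)$ random variable, so its conditional mean is at least $p_{i,x}\,N_0^{T}/|\mathbf{V}|$.

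I would then define the favorable event $A := \{|\mathbf{O}_j^{T}(x',\mathbf{w}_i')| \ge \tfrac{p_{i,x}}{2|\mathbf{V}|}\,N_0^{T}\text{ for every admissible }(j,x',\mathbf{w}_i')\}$. On $A$, by definition of the minimum, $S_{i,x}^{T} \ge \tfrac{p_{i,x}}{2|\mathbf{V}|}\,N_0^{T}$, so
\[
\mathbb{E}[S_{i,x}^{T}] \;\ge\; \tfrac{p_{i,x}}{2|\mathbf{V}|}\,\mathbb{E}\bigl[N_0^{T}\,\mathds{1}_A\bigr] \;=\; \tfrac{p_{i,x}}{2|\mathbf{V}|}\Bigl(\mathbb{E}[N_0^{T}] - \mathbb{E}\bigl[N_0^{T}\,\mathds{1}_{A^c}\bigr]\Bigr).
\]
A lower-tail Chernoff--Hoeffding bound (Lemma \ref{lemma: ch-h}) applied to each counting variable with deviation equal to one half of its conditional mean, followed by a union bound over the at most $|\mathbf{V}|\mathcal{W}_i$ admissible tuples $(j,x',\mathbf{w}_i')$, yields $P(A^c) \le |\mathbf{V}|\mathcal{W}_i\,T^{-p_{i,x}^{2}/(2|\mathbf{V}|)}$ after rewriting the resulting exponent in $N_0^{T}$ as an exponent in $T$ via $N_0^{T}\le T$. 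This is precisely $1 - \tau_{i,x,T}$ before the truncation at zero, so $\mathbb{E}[N_0^{T}]\,\tau_{i,x,T}$ is the clean lower bound for $\mathbb{E}[N_0^{T}\mathds{1}_A]$ up to an $O(1)$ slack from bounding $\mathbb{E}[N_0^{T}\mathds{1}_{A^c}]\le T\,P(A^c)$; this slack is what the additive $-1$ absorbs. The outer $\max\{0,\cdot\}$ in $\tau_{i,x,T}$ covers the regime where the tail bound is vacuous, in which case the claim is trivial since $S_{i,x}^{T}\ge 0>-1$.

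The main technical obstacle is calibrating the Chernoff--Hoeffding deviation parameter so that the tail probability matches the stated polynomial form $T^{-p_{i,x}^{2}/(2|\mathbf{V}|)}$: a naive application of Lemma \ref{lemma: ch-h} produces an exponential bound in $N_0^{T}$, so one must carefully rescale the deviation (choosing it proportional to the conditional mean rather than a constant) and invoke $N_0^{T}\le T$ inside the exponent in order to end up with the stated polynomial-in-$T$ decay. Once that calibration is in place, converting the high-probability lower bound on $S_{i,x}^{T}/N_0^{T}$ into the expectation statement is straightforward bookkeeping.
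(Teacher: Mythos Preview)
Your high-level plan (Hoeffding on each count, union bound over the tuples, then convert to an expectation bound) is the same as the paper's, but two steps in your execution are broken.

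\textbf{First gap: the direction of the $N_0^T \leftrightarrow T$ conversion.} Hoeffding gives a tail of the form $\exp(-c\,N_0^T)$ with $c=p_{i,x}^2/(2|\mathbf{V}|)$. To upper-bound this by $T^{-c}$ you need a \emph{lower} bound $N_0^T\ge \ln T$, not the trivial $N_0^T\le T$ you invoke (that goes the wrong way). The paper obtains $N_0^T\ge \ln T$ from the algorithm itself: line~4 of Algorithm~\ref{alg: g-cumulative} forces $N_0^{t}\ge \beta^2\log t\ge \log t$. Without this algorithmic input the polynomial-in-$T$ form of $\tau_{i,x,T}$ cannot be reached.

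\textbf{Second gap: the $-1$ does not absorb what you think it does.} From your decomposition
\[
\mathbb{E}[S_{i,x}^T]\;\ge\;\tfrac{p_{i,x}}{2|\mathbf{V}|}\bigl(\mathbb{E}[N_0^T]-\mathbb{E}[N_0^T\mathds{1}_{A^c}]\bigr)
\]
and the crude step $\mathbb{E}[N_0^T\mathds{1}_{A^c}]\le T\,P(A^c)$, the residual relative to the target $\tfrac{p_{i,x}}{2|\mathbf{V}|}\mathbb{E}[N_0^T]\tau_{i,x,T}$ is
\[
\tfrac{p_{i,x}}{2|\mathbf{V}|}\bigl(T-\mathbb{E}[N_0^T]\bigr)\,(1-\tau_{i,x,T})\;=\;\Theta\!\bigl(T^{1-p_{i,x}^2/(2|\mathbf{V}|)}\bigr),
\]
which is unbounded, not $O(1)$. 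The fix---and what the paper does---is to condition on $N_0^T=n$ first: since $n\ge \ln T$ almost surely, the Hoeffding/union bound gives $P(A^c\mid N_0^T=n)\le 1-\tau_{i,x,T}$ \emph{uniformly in $n$}, hence $\mathbb{E}[N_0^T\mathds{1}_A]\ge \tau_{i,x,T}\,\mathbb{E}[N_0^T]$ with no factor of $T$ leaking in. The additive $-1$ in the statement has a different origin altogether: it comes from $\lfloor N_0^T/|\mathbf{V}|\rfloor \ge N_0^T/|\mathbf{V}|-1$ when passing from the partition-block size to $N_0^T$.
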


\begin{proof}
    
    We define
    \begin{equation*}
            \hat{p}^{T}_{j,i,x}(\mathbf{w}_i') := 
            \begin{cases}
                \frac{\min_{x'}|\mathbf{O}^{T}_{j}(x',\mathbf{w}_i')|}{|\mathbf{O}_{j}^{T}|} & \text{ if } V_j \in \mathbf{C}_i, \\
                
                \frac{|\mathbf{O}^{T}_{j}(x,\mathbf{w}_i')|}{|\mathbf{O}_{j}^{T}|} & \text{otherwise.}
            \end{cases}
\end{equation*}
    where $|O_j^T|= \floor* {\frac{N_0^T}{|\mathbf{V}|}}$.
    Moreover, let $\hat{p}^{T}_{i,x}:= \min_{j}\min_{\mathbf{w}_i'}\hat{p}_{j,i,x}^{T}(\mathbf{w}_i')$.
    Using the above definition, we have
    \begin{equation}\label{eq: prob}
        P \Big(\hat{p}^{T}_{j,i,x}(\mathbf{w}_i' ) \leq \frac{p_{i,x}}{2} \Big) \labelrel \leq{eq: smaller} P \Big(\hat{p}^{T}_{j,i,x}(\mathbf{w}_i' ) \leq p_{j,i,x}(\mathbf{w}_i')- \frac{p_{i,x}}{2} \Big) \labelrel \leq{eq: use lemma} \exp \Big(-2\frac{p_{i,x}^2}{4}\frac{\ln T}{|\mathbf{V}|} \Big) =T^{-\frac{p_{i,x}^{2}}{2|\mathbf{V}|}}.
    \end{equation}
    Inequality \eqref{eq: smaller} holds because $\frac{p_{i,x}}{2} \leq p_{j,i,x}(\mathbf{w}_i')- \frac{p_{i,x}}{2}$ and  \eqref{eq: use lemma} follows from Lemma \ref{lemma: ch-h}.
    Therefore, we have
    
    \begin{equation}
        P \Big( \min_{j}\min_{\mathbf{w}_i'}\hat{p}^{T}_{j,i,x}(\mathbf{w}_i') \leq \frac{p_{i,x}}{2} \Big) \leq \sum_{j}\sum_{\mathbf{w}_i'}P \Big(\hat{p}^{T}_{j,i,x}(\mathbf{w}_i') \leq \frac{p_{i,x}}{2} \Big) \labelrel \leq{eq: use eq} |\mathbf{V}|\mathcal{W}_iT^{-\frac{p_{i,x}^{2}}{2|\mathbf{V}|}}.
    \end{equation}
    where $\mathcal{W}_i$ is the alphabet size of $\mathbf{V}\setminus \{X_i, Y\}$ and the inequality \eqref{eq: use eq} follows from Equation \eqref{eq: prob}.
    Finally, from the definition of $S_{i,x}^T$, we get
    \begin{equation*}
        \begin{aligned}
                 \mathbbm E[S_{i,x}^{T}] &\geq
                \mathbbm E \left[\min_{j}\min_{\mathbf{w}_i'} \hat{p}^{T}_{j,i,x}(\mathbf{w}_i') \floor* {\frac{N_0^T}{|\mathbf{V}|}} \right] \\& \geq  \frac{1}{|\mathbf{V}|} \mathbbm E \Big[\min_{j}\min_{\mathbf{w}_i'} \hat{p}^{T}_{j,i,x}(\mathbf{w}_i')(N_{0}^{T}-|\mathbf{V}|) \Big]\\
                &=\frac{1}{|\mathbf{V}|} \mathbbm E \Big[\min_{j}\min_{\mathbf{w}_i'} \hat{p}^{T}_{j,i,x}(\mathbf{w}_i')N_{0}^{T}-\max_{j}\max_{\mathbf{w}_i'}\hat{p}^{T}_{j,i,x}(\mathbf{w}_i')|\mathbf{V}| \Big]\\
                & \geq \frac{1}{|\mathbf{V}|} \mathbbm E \Big[\min_{j}\min_{\mathbf{w}'} \hat{p}^{T}_{j,i,x}(\mathbf{w}_i')N_{0}^{T} \Big]-1\\
                &= \frac{1}{|\mathbf{V}|} \sum_{n=1}^{\infty} n \mathbbm E \Big[\min_{j}\min_{\mathbf{w}'} \hat{p}^{T}_{j,i,x}(\mathbf{w}_i')|N_{0}^{T}=n \Big]P(N_{0}^{T}=n)-1\\
                & \geq \frac{1}{|\mathbf{V}|} \sum_{n=1}^{\infty} n\frac{p_{i,x}}{2} P \Big(\min_{j}\min_{\mathbf{w}_i'} \hat{p}^{T}_{j,i,x}(\mathbf{w}_i') > \frac{p_{i,x}}{2}|N_{0}^{T}=n \Big)P(N_{0}^{T}=n)-1\\
                &\geq \frac{p_{i,x}}{2|\mathbf{V}|} \mathbbm E[N_{0}^{T}]\max(0,1-|\mathbf{V}|\mathcal{W}_i T^{-\frac{p_{i,x}^{2}}{2|\mathbf{V}|}})-1\\
                &=\frac{p_{i,x}}{2|\mathbf{V}|} \mathbbm E[N_{0}^{T}]\tau_{i,x, T}-1.
        \end{aligned}
    \end{equation*}

\end{proof}

\begin{lemma}\label{lemma: exp Ns}
    Suppose that $a_{i,x}$ is not the optimal arm, i.e., $a^* \neq a_{i,x}$. In this case, we have
    \begin{equation*}
             \mathbbm E [N_{i,x}^{T}] \leq  \frac{8\ln T}{\delta_{i,x}^2}+2 -\frac{p_{i,x}}{2|\mathbf{V}|} \mathbbm E[N_{0}^{l}] \tau_{i,x,l} + \frac{\pi^2}{3},
    \end{equation*}
    where $l:= \frac{8\ln T}{\delta_{i,x}^2}+1$.
     Moreover, if $a^* \neq a_{0}$, then,
     \begin{equation*}
         \mathbbm E[N_{0}^{T}] \leq  \frac{8\ln T}{\delta_{0}^2}+1 +\frac{\pi^2}{3}.
     \end{equation*}
\end{lemma}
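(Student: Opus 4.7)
The plan is to emulate the classical Auer--Cesa-Bianchi--Fischer UCB analysis, adapted to the observation-augmented confidence radius in~\eqref{eq: estimate mu0 bar}, whose effective sample size for $a_{i,x}$ is $N_{i,x}^{t}+S_{i,x}^{t}$ rather than $N_{i,x}^{t}$. I would first split $N_{i,x}^{T}$ into a warm-up phase during which $N_{i,x}+S_{i,x}$ reaches the threshold $l := 8\ln T/\delta_{i,x}^{2}+1$, and a post-warm-up sum. Because $S_{i,x}^{t}$ is nondecreasing in $t$ and each pull of $a_{i,x}$ increases $N_{i,x}$ by one, the warm-up contributes at most $l-S_{i,x}^{t^{*}}$ pulls to $N_{i,x}^{T}$, where $t^{*}$ is the first time the threshold is attained; a short monotonicity argument then lets me replace $S_{i,x}^{t^{*}}$ by $S_{i,x}^{l}$ up to an additive constant.

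Next, for the post-warm-up sum $\sum_{t}\mathds{1}\{a^{t}=a_{i,x},\,N_{i,x}^{t-1}+S_{i,x}^{t-1}\geq l\}$, the event $a^{t}=a_{i,x}$ forces $\bar\mu_{i,x}^{t-1}\geq\bar\mu_{a^{*}}^{t-1}$, which in turn forces one of three standard sub-events: (i) $\hat\mu_{a^{*}}^{t-1}\leq \mu_{a^{*}}-\sqrt{2\ln t/N_{a^{*}}^{t-1}}$; (ii) $\hat\mu_{i,x}^{t-1}\geq \mu_{i,x}+\sqrt{2\ln t/(N_{i,x}^{t-1}+S_{i,x}^{t-1})}$; or (iii) $\mu_{a^{*}}<\mu_{i,x}+2\sqrt{2\ln t/(N_{i,x}^{t-1}+S_{i,x}^{t-1})}$. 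The choice of $l$ rules out (iii); events (i) and (ii) have probability at most $t^{-4}$ by a standard Hoeffding bound and by Lemma~\ref{lemma: bound mu  N&S}, respectively. A union bound over the at most $t^{2}$ joint values of $N_{a^{*}}^{t-1}$ and $N_{i,x}^{t-1}+S_{i,x}^{t-1}$, followed by $\sum_{t}t^{-2}\leq \pi^{2}/6$, yields the $\pi^{2}/3$ constant in expectation.

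Taking expectations and invoking Lemma~\ref{lemma: exp S} at horizon $l$ to lower-bound $\mathbbm E[S_{i,x}^{l}]$ then converts the $-\mathbbm E[S_{i,x}^{l}]$ term into the advertised $-\tfrac{p_{i,x}}{2|\mathbf{V}|}\mathbbm E[N_{0}^{l}]\tau_{i,x,l}$, with the leftover $\pm 1$'s absorbed into the $+2$. The bound $\mathbbm E[N_{0}^{T}] \leq 8\ln T/\delta_{0}^{2}+1+\pi^{2}/3$ follows the same template but with no $S$-correction, since $\bar\mu_{0}^{t}$'s radius depends only on $N_{0}^{t}$; a routine check handles the forced observations in lines~4--5 of Algorithm~\ref{alg: g-cumulative}, which via the recalibration of $\beta$ in line~16 stay below the UCB pull count whenever $a_{0}$ is suboptimal.

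The main obstacle I anticipate is Step~1: aligning the random stopping time $t^{*}$ with the deterministic horizon $l$ so that Lemma~\ref{lemma: exp S} applies without loss. A secondary subtlety is that Lemma~\ref{lemma: bound mu  N&S} must be applied conditionally on the partition cells $\mathbf{O}^{t,s}_{j}(x',\mathbf{w}_{i}')$ from~\eqref{eq: Px'}--\eqref{eq: Px} so that the Hoeffding step is valid pathwise; this is a standard filtration argument but worth flagging.
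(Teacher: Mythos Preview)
Your approach is essentially identical to the paper's: the same decomposition of $N_{i,x}^{T}$ at the effective-sample-size threshold $l$ on $E_{i,x}^{t}:=N_{i,x}^{t}+S_{i,x}^{t}$, the same three-event UCB dichotomy yielding $t^{-4}$ tails summed (after the double union bound) to $\pi^{2}/3$, and the same invocation of Lemma~\ref{lemma: exp S} to convert $-\mathbbm E[S_{i,x}^{l}]$ into $-\tfrac{p_{i,x}}{2|\mathbf{V}|}\mathbbm E[N_{0}^{l}]\tau_{i,x,l}$. Your flagged obstacle dissolves via the paper's (implicit) one-line observation that $E_{i,x}^{t}\leq t$ deterministically---because $S_{i,x}^{t}\leq N_{0}^{t}$ and $N_{i,x}^{t}+N_{0}^{t}\leq t$---so the last time $m$ with $E_{i,x}^{m}\leq l$ satisfies $m\geq l$, whence monotonicity gives $S_{i,x}^{m}\geq S_{i,x}^{l}$ with no further work; note also that the paper's event~(iii) is written in cost-normalised form $\mu_{a^{*}}/c_{a^{*}}-\mu_{i,x}/c_{i,x}\leq 2\sqrt{2\ln t/(c_{i,x}^{2}t_{1})}$ to match the definition $\delta_{i,x}=\mu_{a^{*}}/c_{a^{*}}-\mu_{i,x}/c_{i,x}$, which you should mirror so that the choice of $l$ actually rules it out.
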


\begin{proof}
    Define $E_{i,x}^{t}$ to be effective number of pulling arm $a_{i,x}$ at the end of time $t$ and let $E_{i,x}^{t} := N_{i,x}^{t} + S_{i,x}^{t}$. Using this definition, we rewrite $N_{i,x}^T$ as follows
    \begin{equation}\label{eq: Nix}
        N_{i,x}^{T}= \sum_{t \in [l]}\mathds{1}\{a^t= a_{i,x}, E_{i,x}^{t} \leq l \} +\sum_{t \in [l+1,T]}\mathds{1}\{a^t= a_{i,x}, E_{i,x}^{t} > l \}.
    \end{equation}
    Let $m:= \max\{t|E_{i,x}^{t} \leq l \}$, then, the first part of Equation \eqref{eq: Nix} will be equal to $N_{i,x}^{m}$, i.e., 
        \begin{equation*}
        N_{i,x}^{T}= N_{i,x}^{m} +\sum_{t \in [l+1,T]}\mathds{1}\{a^t= a_{i,x}, E_{i,x}^{t} > l \}.
    \end{equation*}
    Since $N_{i,x}^{m}= E_{i,x}^{m}-S_{i,x}^{m}$, we get $N_{i,x}^{m} = \sum_{t \in [m]} \mathds{1}\{a^t=a_{i,x}\}= l-S_{i,x}^{m}$.
    This allows us to rewrite Equation \eqref{eq: Nix} as
    \begin{equation*}
        N_{i,x}^{T}=  l-S_{i,x}^{m} +\sum_{t \in [l+1,T]}\mathds{1}\{a^t= a_{i,x}, E_{i,x}^{t} > l \},
    \end{equation*}
    and since $m \geq l$, we have $S_{i,x}^{m} \geq S_{i,x}^{l}$.
    Therefore,
    \begin{equation}\label{eq: Nixs}
         N_{i,x}^{T} \leq  l-S_{i,x}^{l} +\sum_{t \in [l+1,T]}\mathds{1}\{a^t= a_{i,x}, E_{i,x}^{t} > l \}.
    \end{equation}
    By taking expectation on both sides of Equation \eqref{eq: Nixs} we have 
    \begin{equation*}
        \mathbbm E [N_{i,x}^{T}] \leq  l-\mathbbm E[S_{i,x}^{l}] +\sum_{t \in [l+1,T]}P(a^t= a_{i,x}, E_{i,x}^{t} > l).
    \end{equation*}
    Using Lemma \ref{lemma: exp S}, we rewrite the above inequality as
    \begin{equation}\label{eq: N part 1}
        \mathbbm E [N_{i,x}^{T}] \leq  l+1 -\frac{p_{i,x}}{2|\mathbf{V}|} \mathbbm E[N_{0}^{l}]\tau_{i,x, l} +\sum_{t \in [l+1,T]}P(a^t= a_{i,x}, E_{i,x}^{t} > l).
    \end{equation}
    Next, we bound $\sum_{t \in [l+1,T]}P(a^t= a_{i,x}, E_{i,x}^{t} > l)$,
    \begin{equation*}
        \sum_{t \in [l+1,T]}P(a^t= a_{i,x}, E_{i,x}^{t} > l) \leq \sum_{t \in [l+1,T]}P(\Bar{\mu}_{i,x}^{t-1} \geq \Bar{\mu}_{a^*}^{t-1}, E_{i,x}^{t} > l) =
        \sum_{t \in [l,T-1]}P(\Bar{\mu}_{i,x}^{t} \geq \Bar{\mu}_{a^*}^{t}, E_{i,x}^{t} \geq l).
    \end{equation*}
    For clarity, we use $\hat{\mu}_{a}^{t}(E_{a}^{t})$ instead of $\hat{\mu}_{a}^{t}$.
    By substituting the definitions of the UCB, the right hand side of the equation becomes
      \begin{equation*}
        \begin{aligned}
             &\sum_{t \in [l,T-1]}P \Big(\frac{\hat{\mu}_{i,x}^{t}(E_{i,x}^{t})}{c_{i,x}}  + \sqrt{\frac{2 \ln t}{c_{i,x}^2 E_{i,x}^{t}}} \geq \frac{\hat{\mu}_{a^*}^{t}(E_{a^*}^{t})}{c_{a^*}} +\sqrt{\frac{2 \ln t}{c_{a^*}^2 E_{a^*}^{t}}}, E_{i,x}^{t} \geq l \Big)\\
             \labelrel \leq{eq: min max} & \sum_{t \in [l,T-1]}P \Big(\max_{t_1\in [l+1,t]} \frac{\hat{\mu}_{i,x}^{t}(t_1)}{c_{i,x}}  + \sqrt{\frac{2 \ln t}{c_{i,x}^2 t_1}} \geq \min_{t_2\in [l+1,t]} \frac{\hat{\mu}_{a^*}^{t}(t_2)}{c_{a^*}} + \sqrt{\frac{2 \ln t}{c_{a^*}^2 t_2}} \Big)\\
             \leq & \sum_{t \in [T]} \sum_{t_1 \in[l,t-1]} \sum_{t_2 \in [l,t-1]} P \Big(\frac{\hat{\mu}_{i,x}^{t}(t_1)}{c_{i,x}} + \sqrt{\frac{2 \ln t}{c_{i,x}^2 t_1}} \geq \frac{\hat{\mu}_{a^*}^{t}(t_2)}{c_{a^*}}  + \sqrt{\frac{2 \ln t}{c_{a^*}^2 t_2}} \Big).  
        \end{aligned}
    \end{equation*}
    Inequality \eqref{eq: min max} holds because
    $$\max_{t_1\in [l+1,t]} \hat{\mu}_{i,x}^{t}(t_1) + \sqrt{\frac{2 \ln t}{t_1}} \geq \hat{\mu}_{i,x}^{t}(E_{i,x}^{t}) + \sqrt{\frac{2 \ln t}{E_{i,x}^{t}}}, $$ and $$\min_{t_2\in [l+1,t]} \hat{\mu}_{a^*}^{t}(t_2) + \sqrt{\frac{2 \ln t}{t_2}} \leq \hat{\mu}_{a^*}^{t}(E_{a^*}^{t}) + \sqrt{\frac{2 \ln t}{E_{a^*}^{t}}}. $$
    It can be shown that if none of the following hold, then  $\frac{\hat{\mu}_{i,x}^{t}(t_1)}{c_{i,x}}  + \sqrt{\frac{2 \ln t}{c_{i,x}^2 t_1}} \geq \frac{\hat{\mu}_{a^*}^{t}(t_2)}{c_{a^*}} + \sqrt{\frac{2 \ln t}{c_{a^*}^2 t_2}}$ does not hold as well,
    \begin{equation}\label{eq: event1}
       \hat{\mu}_{i,x}^{t}(t_1)- \mu_{i,x} \geq \sqrt{\frac{2\ln t}{t_1}},
    \end{equation}
    \begin{equation}\label{eq: event2}
       \hat{\mu}_{a^*}^{t}(t_2)- \mu_{a^*} \leq -\sqrt{\frac{2\ln t}{t_2}},
    \end{equation}
    \begin{equation}\label{eq: event3}
       \frac{\mu_{a^*}}{c_{a^*}}-\frac{\mu_{i,x}}{c_{i,x}} \leq 2\sqrt{\frac{2\ln t}{c_{i,x}^2 t_1}}.
    \end{equation}
    Now, we bound the probability of events in Equations \eqref{eq: event1} and \eqref{eq: event2},
    \begin{equation*}
        P\big( \hat{\mu}_{i,x}^{t}(t_1)- \mu_{i,x} \geq \sqrt{\frac{2\ln t}{t_1}}\big) \leq \exp\big({-2\frac{2\ln t}{t_1}t_1}\big)= t^{-4},
    \end{equation*}
    \begin{equation*}
        P\big( \hat{\mu}_{a^*}^{t}(t_2)- \mu_{a^*} \leq -\sqrt{\frac{2\ln t}{t_2}}\big) \leq \exp\big({-2\frac{2\ln t}{t_2}t_2}\big)= t^{-4},
    \end{equation*}
    where we used Lemma \ref{lemma: ch-h} to obtain the both above inequalities.
    Furthermore, by assuming that $l:= \frac{8\ln T}{\delta_{i,x}^2}+1$, the event in Equation $\eqref{eq: event3}$ is false,
    \begin{equation}\label{eq: N part 2}
        \sum_{t \in [l+1,T]}P(a^t= a_{i,x}, E_{i,x}^{t} > l) \leq \sum_{t \in [T]} \sum_{t_1 \in[l,t-1]} \sum_{t_2 \in [l,t-1]} 2t^{-4} \leq \frac{\pi^2}{3}.
    \end{equation}
    Therefore, if $a^* \neq a_{i,x}$, using Equations \eqref{eq: N part 1} and \eqref{eq: N part 2}, we obtain the following bound for $N_{i,x}^{T}$:
    \begin{equation*}
         \mathbbm E [N_{i,x}^{T}] \leq  \frac{8\ln T}{\delta_{i,x}^2}+2 -\frac{p_{i,x}}{2|\mathbf{V}|} \mathbbm E[N_{0}^{l}] \tau_{i,x, l} + \frac{\pi^2}{3}.
    \end{equation*}
    
    For the second part of the proof, suppose that $a^* \neq a_0$. 
    In this case, we decompose $N_{0}^{T}$ in two parts,
    \begin{equation*}
        N_{0}^{T}= \sum_{t \in [T]} \mathds{1}\{a^t = a_0\}= \sum_{t \in [l]}\mathds{1}\{a^t=a_0, N_0^T \leq l\}+ \sum_{t \in [l+1,T]}\mathds{1}\{a^t=a_0, N_0^T > l\}.
    \end{equation*}
    By taking expectation from both sides of the above inequality, we get
    \begin{equation*}
    \begin{aligned}
        \mathbbm E [N_{0}^{T}]&=  \sum_{t \in [l]}P(a^t=a_0 , N_0^T \leq l)+ \sum_{t \in [l+1,T]}P(a^t=a_0 , N_0^T > l)\\ & \leq l + \sum_{t \in [l+1,T]}P(a^t=a_0, N_{0}^{t} > l).
    \end{aligned}
    \end{equation*}
    Following the same procedure used to bound $\sum_{t \in [l+1,T]}P(a^t= a_{i,x}, E_{i,x}^{t} > l)$ and  for $l= \frac{8\ln T}{\delta_{0}^2}+1$, we obtain
    \begin{equation*}
        \sum_{t \in [l+1,T]}P(a^t=a_0, N_{0}^{t} > l) \leq \frac{\pi^2}{3}.
    \end{equation*}
    This implies the following bound for $\mathbbm E [N_{0}^{T}]$,
    \begin{equation*}
         \mathbbm E [N_{0}^{T}] \leq  \frac{8\ln T}{\delta_{0}^2}+1 +\frac{\pi^2}{3}.
    \end{equation*}
    \end{proof}

    \begin{lemma}\label{lemma: exp N0}
        If $a^*= a_0$, we have the following bound for $\mathbbm E[N_{0}^{T}]$,
        \begin{equation*}
            \mathbbm E[N_{0}^T] \geq T-2N (2+\frac{ \pi^2}{3})-\sum_{i,x}\frac{8\ln T}{\delta_{i,x}^2}.
        \end{equation*}
    \end{lemma}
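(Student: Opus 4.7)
\textbf{Plan for proving Lemma \ref{lemma: exp N0}.} The plan is to exploit the trivial identity $N_0^T + \sum_{i,x} N_{i,x}^T = T$, which simply records that each of the $T$ rounds pulled exactly one arm. Taking expectations gives
\begin{equation*}
\mathbb{E}[N_0^T] \;=\; T - \sum_{i\in[N],\,x\in\{0,1\}} \mathbb{E}[N_{i,x}^T],
\end{equation*}
so it suffices to upper-bound each $\mathbb{E}[N_{i,x}^T]$ and sum. There are $2N$ interventional arms in this sum.

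Since $a^* = a_0$ by hypothesis, every interventional arm $a_{i,x}$ is suboptimal ($a^* \neq a_{i,x}$), so we may directly invoke the first bound in Lemma \ref{lemma: exp Ns}, namely
\begin{equation*}
\mathbb{E}[N_{i,x}^T] \;\leq\; \frac{8\ln T}{\delta_{i,x}^2} + 2 - \frac{p_{i,x}}{2|\mathbf{V}|}\mathbb{E}[N_0^l]\,\tau_{i,x,l} + \frac{\pi^2}{3}.
\end{equation*}
Because the subtracted term $\frac{p_{i,x}}{2|\mathbf{V}|}\mathbb{E}[N_0^l]\tau_{i,x,l}$ is non-negative (all factors $p_{i,x}$, $|\mathbf{V}|$, $\mathbb{E}[N_0^l]$, and $\tau_{i,x,l}$ are non-negative by construction), dropping it only weakens the upper bound and yields
\begin{equation*}
\mathbb{E}[N_{i,x}^T] \;\leq\; \frac{8\ln T}{\delta_{i,x}^2} + 2 + \frac{\pi^2}{3}.
\end{equation*}

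Summing this bound over all $2N$ pairs $(i,x)$ gives $\sum_{i,x}\mathbb{E}[N_{i,x}^T] \leq \sum_{i,x}\frac{8\ln T}{\delta_{i,x}^2} + 2N\bigl(2 + \tfrac{\pi^2}{3}\bigr)$. Substituting this into the identity for $\mathbb{E}[N_0^T]$ immediately yields the claimed lower bound. There is no real obstacle here; the only subtlety is verifying that the sign-drop argument on the $\tau_{i,x,l}$ term is valid, which follows since $\tau_{i,x,l} = \max\{0, 1 - |\mathbf{V}|\mathcal{W}_i\,l^{-p_{i,x}^2/(2|\mathbf{V}|)}\} \geq 0$ by definition, and the proof is then just algebra.
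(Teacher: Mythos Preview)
Your proposal is correct and follows essentially the same approach as the paper: use the counting identity $N_0^T = T - \sum_{i,x} N_{i,x}^T$, apply the first bound of Lemma~\ref{lemma: exp Ns} to each suboptimal interventional arm, discard the non-negative subtracted term, and sum over the $2N$ pairs $(i,x)$. Your explicit verification that $\tau_{i,x,l}\ge 0$ justifies the sign-drop step, which the paper leaves implicit.
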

    \begin{proof}
        From definition of $N_a^T$, we know $N_0^T= T- \sum_{i,x}N_{i,x}^{T}$. 
        If $a^* \neq a_{i,x}$ we have the following by Lemma \ref{lemma: exp Ns}:
        \begin{equation*}
             \mathbbm E[N_{i,x}^{T}] \leq \frac{8\ln T}{\delta_{i,x}^2}+2+ \frac{\pi^2}{3}.
        \end{equation*}
        Then, 
        \begin{equation*}
            \mathbbm E[N_0^T] \geq T-2N (2+\frac{\pi^2}{3})-\sum_{i,x}\frac{8\ln T}{\delta_{i,x}^2}.
        \end{equation*}
    \end{proof}
   
    \begin{lemma}\label{lemma: prob mu}
        Suppose that $a^* \neq a_0$ and let $\delta_0 := \frac{\mu_{a^*}}{c_{a^*}} - \mu_0$. We also define
    \begin{equation*}
            \hat{p}^{T}_{j,i,x}(\mathbf{w}_i') := 
            \begin{cases}
                \frac{\min_{x'}|\mathbf{O}^{T}_{j}(x',\mathbf{w}_i')|}{|\mathbf{O}_{j}^{T}|} & \text{ if } V_j \in \mathbf{C}_i, \\
                
                \frac{|\mathbf{O}^{T}_{j}(x,\mathbf{w}_i')|}{|\mathbf{O}_{j}^{T}|} & \text{otherwise.}
            \end{cases}
\end{equation*}

    Moreover, let $\hat{p}^{T}_{i,x}:= \min_{j}\min_{\mathbf{w}_i'}\hat{p}_{j,i,x}^{T}(\mathbf{w}_i')$, $p_{i,x}:= \min_{j}\min_{\mathbf{w}_i'}p_{j,i,x}(\mathbf{w}_i')$, and $p:= \min_{i,x} p_{i,x}$. Then,
        \begin{equation*}
            P \Big(|\hat{\mu}_0^T- \mu_0| \geq \frac{\delta_0}{4} \Big) \leq 2T^{-\frac{\delta_0^2}{8}},
        \end{equation*}
        and 
        \begin{equation*}
             P \Big(|\frac{\hat{\mu}_{i,x}^{T}}{c_{i,x}} -\frac{\mu_{i,x}}{c_{i,x}}| \geq \frac{\delta_0}{4} \Big) \leq  |\mathbf{V}|\mathcal{W}_iT^{-\frac{p^{2}}{2|\mathbf{V}|}} + 2T^{-\frac{p\delta_0^2 c_{i,x}^2}{16|\mathbf{V}|}},
        \end{equation*}
        where $\mathcal{W}_i$ is the size of domain from which $\mathbf{V}\setminus \{X_i, Y\}$ takes values.
    \end{lemma}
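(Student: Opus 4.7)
The plan is to establish the two inequalities separately, each by combining a Hoeffding-type concentration bound (Lemma \ref{lemma: ch-h} or Lemma \ref{lemma: bound mu  N&S}) with the deterministic exploration guarantee $N_{0}^{T} \ge \log T$ that is built into Algorithm \ref{alg: g-cumulative}. This guarantee follows from line 4 of the algorithm: since $\beta \ge 1$ at initialization and the algorithm pulls $a_0$ whenever $N_0^{t-1} < \beta^2 \log t$, we have $N_0^T \ge \log T$ at every round $T$.

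For the first inequality, I would observe that $\hat\mu_0^T$ in \eqref{eq: estimate mu0} is the empirical mean of the rewards recorded at the times $t'\in\mathbf{O}^T$, which, conditional on the sequence of actions, are i.i.d.\ Bernoulli$(\mu_0)$. Conditioning on $N_0^T = n$ and applying Chernoff--Hoeffding (Lemma \ref{lemma: ch-h}) gives $P(|\hat\mu_0^T-\mu_0|\ge \delta_0/4\mid N_0^T = n) \le 2\exp(-n\delta_0^2/8)$. Substituting the lower bound $N_0^T\ge \log T$ and integrating out $N_0^T$ yields the claimed $2T^{-\delta_0^2/8}$.

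For the second inequality, I would first rescale the event as $\{|\hat\mu_{i,x}^T-\mu_{i,x}|\ge c_{i,x}\delta_0/4\}$ and then decompose along the ``good'' event $G:=\{\hat p_{i,x}^T > p/2\}$ and its complement $G^c$. On $G^c$, repeating the union-bound step that appears in the proof of Lemma \ref{lemma: exp S}, with $p_{i,x}$ replaced by its lower bound $p$ and the sum taken over the $|\mathbf{V}|\mathcal{W}_i$ possible pairs $(j,\mathbf{w}_i')$, yields $P(G^c)\le |\mathbf{V}|\mathcal{W}_i\,T^{-p^2/(2|\mathbf{V}|)}$, which is exactly the first term of the claimed bound. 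On $G$, the definition of $S_{i,x}^T$ together with $N_0^T \ge \log T$ forces $N_{i,x}^T+S_{i,x}^T\ge S_{i,x}^T\ge (p\log T)/(2|\mathbf{V}|)$. Applying Lemma \ref{lemma: bound mu  N&S} with $\epsilon = c_{i,x}\delta_0/4$ then bounds the conditional probability by $2\exp\bigl(-\tfrac{p\log T}{2|\mathbf{V}|}\cdot\tfrac{c_{i,x}^2\delta_0^2}{8}\bigr)=2T^{-p c_{i,x}^2\delta_0^2/(16|\mathbf{V}|)}$, which is the second term. A final union bound over $G$ and $G^c$ closes the argument.

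The main subtlety is that $\mathbf{O}^T$ and its sub-partitions $\mathbf{O}_j^{T,s}(x',\mathbf{w}_i')$ are random index sets whose membership is determined by the algorithm's history, so Lemmas \ref{lemma: ch-h} and \ref{lemma: bound mu  N&S} (which are stated for sums of independent variables) do not apply verbatim. The fix is to condition on the filtration generated by the action sequence: given the actions, the recorded rewards are still i.i.d.\ with the correct Bernoulli marginal, and the Hoeffding-type bounds apply cell by cell. Once this conditioning is in place, the remainder of the argument is bookkeeping.
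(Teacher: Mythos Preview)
Your proposal is correct and follows essentially the same route as the paper's proof: both use the deterministic guarantee $N_0^T\ge\log T$ from line~4 of Algorithm~\ref{alg: g-cumulative}, apply Hoeffding directly for the first inequality, and for the second inequality split on the event $\{\hat p_{i,x}^T>p/2\}$, bounding the bad half by the same union bound over $(j,\mathbf{w}_i')$ as in Lemma~\ref{lemma: exp S} and the good half via Lemma~\ref{lemma: bound mu  N&S} together with $S_{i,x}^T\ge \tfrac{p}{2|\mathbf{V}|}\log T$. Your added paragraph about conditioning on the action sequence to justify applying the Hoeffding-type lemmas to random index sets is a point the paper glosses over, but the argument is otherwise step-for-step the same.
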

    
    \begin{proof}
    By Algorithm \ref{alg: g-cumulative}, the number of times that arm $a_0$ is pulled at the end of $T$ rounds, denoted by $N_0^T$, is at least $\beta^2 \ln T$ and since $\beta \geq 1$, we have $N_0^T \geq \ln T$. 
    Using Lemma \ref{lemma: ch-h}, we 
    \begin{equation*}
        P \Big(|\hat{\mu}_0^T- \mu_0| \geq \frac{\delta_0}{4} \Big) \leq 2 \exp(-2\frac{\delta_0^2}{16}\ln T ) = 2T^{-\frac{\delta_0^2}{8}}.
    \end{equation*}

    Next, we prove the second inequality as follows.
    \begin{equation}\label{eq: compo mu}
        \begin{aligned}[b]
             P \Big(|\frac{\hat{\mu}_{i,x}^{T}}{c_{i,x}} -\frac{\mu_{i,x}}{c_{i,x}} | \geq \frac{\delta_0}{4} \Big) &= P \Big(|\frac{\hat{\mu}_{i,x}^{T}}{c_{i,x}} -\frac{\mu_{i,x}}{c_{i,x}}| \geq \frac{\delta_0}{4}, \hat{p}_{i,x}^{T} \leq \frac{p}{2} \Big) + P \Big(|\frac{\hat{\mu}_{i,x}^{T}}{c_{i,x}} -\frac{\mu_{i,x}}{c_{i,x}}| \geq \frac{\delta_0}{4}, \hat{p}_{i,x}^{T} > \frac{p}{2} \Big)\\
             & \leq P(\hat{p}_{i,x}^{T} \leq \frac{p}{2})+ P \Big(|\frac{\hat{\mu}_{i,x}^{T}}{c_{i,x}} -\frac{\mu_{i,x}}{c_{i,x}}| \geq \frac{\delta_0}{4}, \hat{p}_{i,x}^{T} > \frac{p}{2} \Big)
        \end{aligned}
    \end{equation}
    
    Now, we bound the first part in Equation \eqref{eq: compo mu}.
    To do so, first, we get
    \begin{equation}\label{eq: probb}
        P \Big(\hat{p}^{T}_{j,i,x}(\mathbf{w}_i') \leq \frac{p}{2} \Big) \labelrel \leq{eq: smallerr} P \Big(\hat{p}^{T}_{j,i,x}(\mathbf{w}_i') \leq p_{j,i,x}(\mathbf{w}_i')- \frac{p}{2} \Big) \labelrel \leq{eq: use lemmaa} \exp(-2\frac{p^2}{4}.\frac{\ln T}{|\mathbf{V}|}) =T^{-\frac{p^{2}}{2|\mathbf{V}|}}.
    \end{equation}
    Note that in Equation \eqref{eq: probb}, the inequality in \eqref{eq: smallerr} holds because $\frac{p}{2} \leq p_{j,i,x}(\mathbf{w}')- \frac{p}{2}$ and the inequality in \eqref{eq: use lemmaa} follows from Lemma \ref{lemma: ch-h}.
    Therefore,
    \begin{equation} \label{eq: part 1}
        P(\hat{p}_{i,x}^{T} \leq \frac{p}{2})= P\big(\min_{j}\min_{\mathbf{w}_i'}\hat{p}^{T}_{j,i,x}(\mathbf{w}_i') \leq \frac{p}{2}\big) \leq \sum_{j}\sum_{\mathbf{w}_i'}P\big(\hat{p}^{T}_{j,i,x}(\mathbf{w}_i') \leq \frac{p}{2}\big) \labelrel \leq{eq: use eqq} |\mathbf{V}|\mathcal{W}_i T^{-\frac{p^{2}}{2|\mathbf{V}|}},
    \end{equation}
    where the inequality in \eqref{eq: use eqq} follows from Equation \eqref{eq: probb}.

    Next, we bound the second part of Equation \eqref{eq: compo mu}.
    From Algorithm \ref{alg: g-cumulative}, we have $\beta \geq 1$, and therefore, $N_0^T \geq \ln T$.
    Now, if $\hat{p}_{i,x}^T > \frac{p}{2}$, then $S_{i,x}^{T} > \frac{p}{2} \frac{N_0^T}{|\mathbf{V}|} \geq \frac{p}{2|\mathbf{V}|} \ln T$.
    Therefore, using Lemma \ref{lemma: bound mu  N&S} and \ref{lemma: ch-h}  we have the following bound for the second part of Equation \eqref{eq: compo mu}:
    \begin{equation}\label{eq: part 2}
        P \Big(|\frac{\hat{\mu}_{i,x}^{T}}{c_{i,x}} -\frac{\mu_{i,x}}{c_{i,x}}| \geq \frac{\delta_0}{4}, \hat{p}_{i,x}^{T} > \frac{p}{2} \Big) \leq 2 \exp\big(-2. \frac{\delta_0^2 c_{i,x}^2}{16}\frac{p}{2|\mathbf{V}|}\ln T\big)= 2T^{-\frac{p\delta_0^2 c_{i,x}^2}{16|\mathbf{V}|}}.
    \end{equation}
    
    Finally, using Equations \eqref{eq: part 1} and \eqref{eq: part 2}, we rewrite Equation \eqref{eq: compo mu} as follows,
    \begin{equation*}
         P \Big(|\frac{\hat{\mu}_{i,x}^{T}}{c_{i,x}} -\frac{\mu_{i,x}}{c_{i,x}}| \geq \frac{\delta_0}{4} \Big) \leq |\mathbf{V}|\mathcal{W}_i T^{-\frac{p^{2}}{2|\mathbf{V}|}} + 2T^{-\frac{p\delta_0^2 c_{i,x}^2}{16|\mathbf{V}|}}.
    \end{equation*} 
    \end{proof}

    \begin{lemma} \label{lemma: exp beta}
        Assume that $a^* \neq a_0$ and let $\delta_0 =\frac{\mu_{a^*}}{c_{a^*}}  - \mu_0$.
        If $T \geq \max\big\{e^{\frac{32}{\delta_0}}, \argmin_{t}\{t^{\frac{p\delta_0^2}{16|\mathbf{V}|}} \geq \frac{8N(3+ |\mathbf{V}|\mathcal{W})}{3}\}\big\}$,
        where $\mathcal{W}:= \max_{i}\mathcal{W}_i$, then $\mathbbm E[\beta^2] \geq \frac{8}{9\delta_0}$.
    \end{lemma}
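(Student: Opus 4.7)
The strategy is to combine a concentration argument with the algorithmic update rule for $\beta$ on line 16. Define the good event
\[
G_T := \{|\hat\mu_0^T - \mu_0| \leq \delta_0/4\} \cap \bigcap_{i,x} \big\{|\hat\mu_{i,x}^T/c_{i,x} - \mu_{i,x}/c_{i,x}| \leq \delta_0/4\big\}.
\]
On $G_T$, since $\tilde a = \argmax_a \hat\mu_a^T/c_a$, we have $\hat\mu_{\tilde a}^T/c_{\tilde a} \leq \max_a (\mu_a/c_a) + \delta_0/4 = \mu_{a^*}/c_{a^*} + \delta_0/4$ and $\hat\mu_0^T \geq \mu_0 - \delta_0/4$, so $\hat\mu_{\tilde a}^T/c_{\tilde a} - \hat\mu_0^T \leq \tfrac{3\delta_0}{2}$. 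The same bounds also show that the gap is at least $\delta_0/2 > 0$, so the condition on line 15 is satisfied at round $T$ and $\beta$ is indeed updated via line 16, yielding
\[
\beta^2 \;=\; \min\!\Big\{\tfrac{8}{(\hat\mu_{\tilde a}^T/c_{\tilde a} - \hat\mu_0^T)^2},\ \log T\Big\}\;\geq\; \min\!\Big\{\tfrac{32}{9\delta_0^2},\ \log T\Big\}.
\]
Using $T \geq e^{32/\delta_0}$ (so $\log T \geq 32/\delta_0 \geq 32/(9\delta_0)$) together with $\delta_0 \leq 1$ (so $32/(9\delta_0^2) \geq 32/(9\delta_0)$), this minimum is at least $32/(9\delta_0)$ on $G_T$.

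It then remains to show $P(G_T) \geq 1/4$. Applying Lemma \ref{lemma: prob mu} to $a_0$ and to each interventional arm and union-bounding over the $2N+1$ actions gives
\[
P(G_T^c) \;\leq\; 2T^{-\delta_0^2/8} + \sum_{i,x}\!\Big(|\mathbf{V}|\mathcal{W}_i\,T^{-p^2/(2|\mathbf{V}|)} + 2T^{-p\delta_0^2 c_{i,x}^2/(16|\mathbf{V}|)}\Big).
\]
Bounding $\mathcal{W}_i \leq \mathcal{W}$ and $c_{i,x} \geq 1$ and observing that the slower-decaying exponent dominates each summand, the right-hand side can be absorbed into $\tfrac{8N(3+|\mathbf{V}|\mathcal{W})}{3}\cdot T^{-p\delta_0^2/(16|\mathbf{V}|)}$. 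The second hypothesis $T^{p\delta_0^2/(16|\mathbf{V}|)} \geq 8N(3+|\mathbf{V}|\mathcal{W})/3$ is calibrated precisely so that this bound is at most $3/4$, giving $P(G_T) \geq 1/4$ and hence
\[
\mathbb{E}[\beta^2] \;\geq\; P(G_T)\cdot \tfrac{32}{9\delta_0} \;\geq\; \tfrac{8}{9\delta_0}.
\]

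The deterministic chain---good event $\Rightarrow$ small numerator gap $\Rightarrow$ $\beta^2$ lower bound---follows directly from the $\argmax$ definition of $\tilde a$ and the form of line 16, and is expected to be routine. The main obstacle is the probabilistic step: Lemma \ref{lemma: prob mu} produces two distinct failure exponents, $p^2/(2|\mathbf{V}|)$ and $p\delta_0^2 c_{i,x}^2/(16|\mathbf{V}|)$, and one must verify that the single threshold specified in the hypothesis simultaneously tames both contributions in the union bound. The bookkeeping of absorbing the constants into the factor $8N(3+|\mathbf{V}|\mathcal{W})/3$ so that $P(G_T^c) \leq 3/4$ is what ultimately yields the constant $8/(9\delta_0)$ rather than the sharper $32/(9\delta_0)$ that one would obtain on the good event alone.
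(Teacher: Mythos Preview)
Your proposal follows essentially the same route as the paper's proof: define the good event $G_T$ (the paper calls it $e$), show that on $G_T$ the gap $\hat\mu_{\tilde a}^T/c_{\tilde a}-\hat\mu_0^T\le 3\delta_0/2$ forces $\beta^2\ge 32/(9\delta_0^2)$, and then use Lemma~\ref{lemma: prob mu} plus a union bound to get $P(G_T)\ge 1/4$. One small arithmetic slip: after the union bound the paper collapses the failure probability to $2N(3+|\mathbf{V}|\mathcal{W})\,T^{-p\delta_0^2/(16|\mathbf{V}|)}$, not $\tfrac{8N(3+|\mathbf{V}|\mathcal{W})}{3}\,T^{-p\delta_0^2/(16|\mathbf{V}|)}$; with the correct constant the hypothesis $T^{p\delta_0^2/(16|\mathbf{V}|)}\ge 8N(3+|\mathbf{V}|\mathcal{W})/3$ indeed yields $P(G_T^c)\le 3/4$, whereas your stated absorption would only give $P(G_T^c)\le 1$.
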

    
    \begin{proof}
        For each arm $a \in \mathcal{A}$, let $ e_a$ be the event that $|\frac{\hat{\mu}_a^T}{c_a} -\frac{\mu_a}{c_a} | \leq \frac{\delta_0}{4}$ and define $e:= \cap_{a \in \mathcal{A}}e_a$.
    Furthermore, let $\Bar{e}_a$ and $\Bar{e}$ denote the compliment of the events $e_a$ and $e$, respectively.
 Lemma \ref{lemma: prob mu} implies the following inequalities for $a_0$ and every $a_{i,x} \in \mathcal{A}$, 
    \begin{equation*}
        P(\Bar{e}_0) \leq 2T^{-\frac{\delta_0^2}{8}},
    \end{equation*}
    \begin{equation*}
        P(\Bar{e}_{i,x})  \leq |\mathbf{V}|\mathcal{W}_i T^{-\frac{p^{2}}{2|\mathbf{V}|}} + 2T^{-\frac{p \delta_0^2 c_{i,x}^2}{16|\mathbf{V}|}}.
    \end{equation*}
    Therefore, using the above equations and the union bound, we get
    \begin{equation*}
        \begin{aligned}
             P(\Bar{e}) &\labelrel \leq{eq: smaller c} 2T^{-\frac{\delta_0^2}{8}}+ 2N\big(|\mathbf{V}|\mathcal{W}T^{-\frac{p^{2}}{2|\mathbf{V}|}} + 2T^{-\frac{p \delta_0^2}{16|\mathbf{V}|}}\big)\\
             &\labelrel \leq{eq: smaller p&}  2NT^{-\frac{p\delta_0^2  }{16|\mathbf{V}|}}+ 2N\big( |\mathbf{V}|\mathcal{W}T^{-\frac{p\delta_0^2}{16|\mathbf{V}|}} + 2T^{-\frac{p\delta_0^2}{16|\mathbf{V}|}}\big)\\
             & = 2N(3+|\mathbf{V}|\mathcal{W}) T^{-\frac{p\delta_0^2 }{16|\mathbf{V}|}},
        \end{aligned}
    \end{equation*}
    where the inequality in \eqref{eq: smaller c} holds since $c_{i,x} \geq 1$ for every $i \in [N] , x \in \{0,1\}$ and \eqref{eq: smaller p&} holds since $p \leq 1$, $\delta_0 \leq 1$.

    Let $\hat{\mu}_{\Tilde{a}}^T := \max_{a \in \mathcal{A}}\frac{ \hat{\mu}_{a}^T}{c_a}$. 
    By the definition of $\Tilde{a}$ and $\delta_0$, we have $\frac{\mu_{\Tilde{a}}}{c_{\Tilde{a}}} -\mu_0 \leq \delta_0 $.
    If event $e$ is true, then
    \begin{equation*} 
         -\frac{\delta_0}{2} \leq \frac{\hat{\mu}_{\Tilde{a}}^T}{c_{\Tilde{a}}} -\hat{\mu}_0^T+(\mu_0 - \frac{\mu_{\Tilde{a}}}{c_{\Tilde{a}}} ) \leq \frac{\delta_0}{2}
    \end{equation*}
    \begin{equation} \label{eq: bound mu-}
         \Longrightarrow \frac{\hat{\mu}_{\Tilde{a}}^T}{c_{\Tilde{a}}} -\hat{\mu}_0^T \leq \frac{3\delta_0}{2}.
    \end{equation}
    From Algorithm \ref{alg: g-cumulative}, at time $T$, we have $\beta = \min\{\frac{2\sqrt{2}}{\hat{\mu}^T_{\Tilde{a}}/{c_{\Tilde{a}}}-\hat{\mu}_{0}^T}, \sqrt{\log{T}}\}$. 
    By assuming $\log T \geq \frac{32}{\delta_0^2}$ and using Equation \eqref{eq: bound mu-}, we have the following bound at the end of round $T$
    \begin{equation*}
       \beta^2 \geq \frac{32}{9 \delta_0^2} .
    \end{equation*}
   
   This implies that if event $e$ is true, then, $\beta^2 \geq \frac{32}{9\delta_0^2}$.
   Now, we bound $\mathbbm E[\beta^2]$:
   \begin{equation*}
       \mathbbm E[\beta^2] \geq \frac{32}{9\delta_0^2} P(e) \geq \frac{32}{9\delta_0^2} \Big(1-2N(3+|\mathbf{V}|\mathcal{W}) T^{-\frac{p\delta_0^2 }{16|\mathbf{V}|}} \Big).
   \end{equation*}
    Since $T^{\frac{p\delta_0^2}{16 |\mathbf{V}|}} \geq \frac{8N(3+|\mathbf{V}|\mathcal{W})}{3}$
    , then,
    \begin{equation*}
        \mathbbm E[\beta^2] \geq \frac{8}{9\delta_{0}^2}.
    \end{equation*}
   
    \end{proof}

    We are now ready to prove Theorem \ref{th: cumu}. 
    \theoremcumu*
     \begin{proof}
          Let $T_B$ denote the number of rounds that Algorithm \ref{alg: g-cumulative} plays the arms before the budget $B$ is exhausted.
          Therefore, we know $\frac{B}{\max_{a}c_a} \leq T_B \leq B$.
          Using Lemma \ref{lemma: exp Ns}, we get the following for $T_B$ satisfying Lemma \ref{lemma: exp beta}:
    \begin{equation}\label{eq: Ex N}
        \begin{aligned}[b]
            \mathbbm E[R_c(B)] &  \leq \delta_0 \Big( \frac{8\ln T_B}{\delta_{0}^2}+1 +\frac{\pi^2}{3} \Big)+  \sum_{\delta_{i,x}>0}\delta_{i,x} \Big(\frac{8\ln T_B}{\delta_{i,x}^2}+2 -\frac{p_{i,x}}{2|\mathbf{V}|} \mathbbm E[N_{0}^{l}]  \tau_{i,x, l} + \frac{\pi^2}{3} \Big)\\
             &\labelrel \leq{eq: exN} \delta_0 \Big( \frac{8\ln T_B}{\delta_{0}^2}+1 +\frac{\pi^2}{3} \Big) + \sum_{\delta_{i,x}>0}\delta_{i,x} \Big(\frac{8\ln T_B}{\delta_{i,x}^2}+2 -\frac{p_{i,x}}{2|\mathbf{V}|} \mathbbm E[\beta^2] \cdot \ln l \cdot \tau_{i,x, l} + \frac{\pi^2}{3} \Big)\\
             &\labelrel \leq{eq: exb} \delta_0 \Big( \frac{8\ln T_B}{\delta_{0}^2}+1 +\frac{\pi^2}{3} \Big)+ \sum_{\delta_{i,x}>0}\delta_{i,x} \Big(\frac{8\ln T_B}{\delta_{i,x}^2}+2 -\frac{8 p_{i,x}}{18\delta_0^2|\mathbf{V}|} \ln l  \cdot\tau_{i,x, l} + \frac{\pi^2}{3} \Big)\\
             &\leq \delta_0 \Big( \frac{8\ln B}{\delta_{0}^2}+1 +\frac{\pi^2}{3} \Big)+ \sum_{\delta_{i,x}>0}\delta_{i,x} \Big(\frac{8\ln B}{\delta_{i,x}^2}+2 -\frac{8 p_{i,x}}{18\delta_0^2|\mathbf{V}|} \ln b  \cdot\tau_{i,x, b} + \frac{\pi^2}{3} \Big),\\
        \end{aligned}
    \end{equation}
    where $l= \frac{8\ln T_B}{\delta_{i,x}^2}+1$,  $ b =  \frac{8}{\delta_{i,x}^2} \ln (\frac{B}{\max_{a}c_a})+1$,  $\tau_{i,x, l} =\max\{0,1-|\mathbf{V}|\mathcal{W}_il^{-\frac{p_{i,x}^{2}}{2|\mathbf{V}|}}\}$, $\tau_{i,x, b} =\max\{0,1-|\mathbf{V}|\mathcal{W}_i b^{-\frac{p_{i,x}^{2}}{2|\mathbf{V}|}}\}$.
    Furthermore, the inequality in \eqref{eq: exN} follows from the fact $\mathbbm E[N_{0}^{l}]\geq \mathbbm E[\beta^2] \ln l$ and the inequality in \eqref{eq: exb} follows from Lemma \ref{lemma: exp beta}.
    The last inequality holds since $\frac{B}{\max_{a}c_a} \leq T_B \leq B$.
     \end{proof}

     \section{Estimating the expected reward from observational distribution}\label{sec: estimation mu}
In this section, we use a procedure (proposed by \cite{bhattacharyya2020learning,maiti2022causal}) to compute $\hat{\mu}_{i,x}$ for each $a_{i,x}$ using the observational data obtained by pulling arm $a_0$.
Algorithm \ref{alg: compute mu} summarizes the steps of this procedure.
\begin{algorithm}[tb]
\caption{Compute $\hat{\mu}_{i,x}$ using observational samples}
\label{alg: compute mu}
\textbf{Input}: ADMG $\G$, observational samples, indices $i \in [N]$ and $x \in \{0,1\}$
\begin{algorithmic}[1] 
\STATE{Reduce ADMG $\G$ to $\mathcal{H}_i$ using Algorithm
\ref{alg: reduce hi};}
\STATE{Compute $\hat{D}_{i,x}$ by Algorithm \ref{alg: learn dix} (use $\mathcal{H}_i$ as input);}
\STATE{Use Equation \eqref{eq: comp mu di} to compute $\hat{\mu}_{i,x}$;} 
\STATE{\textbf{Return:} $\hat{\mu}_{i,x}$.}
\end{algorithmic}
\end{algorithm}

Algorithm \ref{alg: compute mu} takes the underlying causal graph $\G$, observational data that were collected by pulling arm $a_0$ for the first $\frac{B}{2}$ rounds and indices $i,x$ as inputs.
In line 1, it reduces $\G$ to $\mathcal{H}_i$ using Algorithm \ref{alg: reduce hi}.
Then, it computes the Bayes-net $\hat{D}_{i,x}$ in $\mathcal{H}_i$ using Algorithm \ref{alg: learn dix} proposed by \cite{bhattacharyya2020learning} in line 2.
Finally, using $\hat{D}_{i,x}$, it computes $\hat{\mu}_{i,x}$ by substituting the distribution $P_{D_{i,x}}(Y=1, \mathbf{V}'= \mathbf{v}')$ in the following equation with its empirical estimation,
\begin{equation}\label{eq: comp mu di}
    \mu_{i,x}= P_{\mathcal{G}}(Y=1| do(X_i)=x) = P_{\mathcal{H}_i}(Y=1| do(X_i)=x) = \sum_{\mathbf{v}'} P_{D_{i,x}}(Y=1, \mathbf{V}'= \mathbf{v}'),
\end{equation}
where $\mathbf{V}'$ is the set of variables in $\mathcal{H}_i$ except $\{X_i,Y\}$ and $\mathbf{v}'$ is an arbitrary realization of $\mathbf{V}'$.
\begin{algorithm}[tb]
\caption{Reducing $\mathcal{G}$ to $\mathcal{H}_i$}
\label{alg: reduce hi}
\textbf{Input}: Causal garph $\G$ and index $i \in [N]$.
\begin{algorithmic}[1] 
\STATE{ Let $\mathbf{W}_i= X_i \cup \widetilde{\mathbf{Pa}}(X_i) \cup Y$}
\STATE{Let $\G_i$ be the graph obtained from $\G$ by considering $\mathbf{V} \setminus \mathbf{W}_i$ as hidden variables.}
\STATE{Using \textbf{Projection Algorithm} \citep{10.5555/2073876.2073938, 10.5555/647231.719420} do the following steps:
\begin{itemize}
    \item[(a)] For each observable variable $V_j \in \mathbf{V}$ in $\G_i$, add an observable variable $V_j$ in $\mathcal{H}_i$.
    \item[(b)] For each pair of observable variables $V_j, V_k \in \mathbf{V}$ in $\G_i$, add a directed edge from $V_j$ to $V_k$ in $\mathcal{H}_i$ if one of the followings hold:
    
    1) There exists a directed edge from $V_j$ to $V_k$ in $\G_i$, or
    
    2) There exists a directed path from $V_j$ to $V_k$ in $\G_i$ such that it contains only unobservable variables.
    \item[(c)] For each pair of observable variables $V_j , V_k \in \mathbf{V}$ in $\G_i$, add a bidirected edge between $V_j$ and $V_k$ in $\mathcal{H}_i$ if there exists an unobservable variable $U$ in $\G_i$ such that there exist two paths from $U$ to $V_j$ and from $U$ to $V_k$ in $\G_i'$ such that both paths contain only unobservable variables.
\end{itemize}}
\STATE{\textbf{Return}: $\mathcal{H}_i$}
\end{algorithmic}
\end{algorithm}

\begin{algorithm}[tb]
\caption{Computing $\hat{D}_{i,x}$}
\label{alg: learn dix}
\textbf{Input}: Observational samples, $i$, $x$, $\mathcal{H}_i$ (with set of vertices $\mathbf{V}_i$) and parameter $t$.
\begin{algorithmic}[1] 
\FOR{every variable $V_j \in \mathbf{C}_i$}
    \FOR{every realization $V_j= v$ and $\mathbf{Z}_j=\mathbf{z}$, where $\mathbf{Z}_j$ is the set of effective parents of $V_j$ in $\mathcal{H}_i$}
        \STATE{$N_{\mathbf{z}} \leftarrow$ the number of samples that $\mathbf{Z}_j=\mathbf{z}$,}
        \STATE{$N_{\mathbf{z},v}$ the number of samples that $\mathbf{Z}_j=\mathbf{z}$ and $V_j=v$,}
        \STATE{$\hat{D}_{i,x}(V_j=v| \mathbf{Z}_j=\mathbf{z}) \leftarrow  \frac{N_{\mathbf{z}, v}+1}{N_{\mathbf{z}+2}}$,}
    \ENDFOR
\ENDFOR
\FOR{every variable $V_j \in \mathbf{V}_i \setminus \mathbf{C}_i$}
    \FOR{every $V_j=v$ and $\mathbf{Z}_j \setminus X_i= \mathbf{z}$,}
        \IF{$X_i \in \mathbf{Z}_j$}
            \STATE{$N_{\mathbf{z}} \leftarrow$ the number of samples that $\mathbf{Z}_j \setminus X_i= \mathbf{z}$ and $X_i=x$,}
            \STATE{$N_{ \mathbf{z},v} \leftarrow$ the number of samples that $\mathbf{Z}_j \setminus X_i= \mathbf{z}$, $X_i=x$ and $V_j=v$,}
            \IF{$N_{\mathbf{z}} \geq t$}
                \STATE{$\hat{D}_{i,x}(V_j=v| \mathbf{Z}_j=\mathbf{z}) \leftarrow  \frac{N_{\mathbf{z}, v}+1}{N_{\mathbf{z}+2}}$,}
            \ELSE
                \STATE{$\hat{D}_{i,x}(V_j=v|\mathbf{Z}_j\setminus\{X_i\}=\mathbf{z}, X_i=x )=\frac{1}{2}$,}
            \ENDIF
        \ELSE
            \STATE{$N_{\mathbf{z}} \leftarrow$ the number of samples that $\mathbf{Z}_j= \mathbf{z}$,}
            \STATE{$N_{\mathbf{z}, v} \leftarrow$ the number of samples that $\mathbf{Z}_j= \mathbf{z}$ and $V_j=v$,}
            \IF{$N_{\mathbf{z}} \geq t$}
                \STATE{$\hat{D}_{i,x}(V_j|\mathbf{Z}_j= \mathbf{z}) \leftarrow \frac{N_{\mathbf{z}, v}+1}{N_{\mathbf{z}+2}}$,}
            \ELSE
                \STATE{$\hat{D}_{i,x}(V_j|\mathbf{Z}_j= \mathbf{z}) \leftarrow \frac{1}{2}$,}
            \ENDIF
        \ENDIF
    \ENDFOR
\ENDFOR
\STATE{\textbf{Return}$\hat{D}_{i,x}$.}
\end{algorithmic}
\end{algorithm}

\section{Proofs of Section \ref{sec:simple}}\label{sec: proof simple g}
\theoremsimpleg*

\begin{proof}
Recall 
\begin{align*}
    &q_{i,x}(\mathbf{z})= P(X_i=x, \widetilde{\mathbf{Pa}}(X_i)=\mathbf{z}),\\
    &q_{i,x}= \min_{\mathbf{z}} q_{i,x}(\mathbf{z}),\\
    &q =\min\{q_{i,x} | q_{i,x} > 0: i \in [N], x \in \{0,1\}\}.
\end{align*}
When $q_{i,x}=0$ for all $i,x$, we define $q= \frac{1}{N+1}$.

For each $X_i \in \mathbf{X}$, let $k_i$ be the size of the c-component containing $X_i$, and $k= \min_i k_i$.
Moreover, let $\mathcal{Z}_i$ be the size of the domain from which $\widetilde{\mathbf{Pa}}(X_i)$ takes its values and let $\mathcal{Z}:= \max_i \mathcal{Z}_i$.
Next, we prove a lemma that is useful in the proof of Theorem \ref{th: simple g}.

\begin{lemma}\label{lemma: bound n}
    For every $i \in [N]$, define $f_{i,x}(\mathbf{z})$ to be one, if $|\hat{q}_{i,x}(\mathbf{z}) - q_{i,x}(\mathbf{z})|\geq  \frac{1}{4}(1-2^{-1/k})q$ at the end of $\frac{B}{2}$ rounds. 
    Let $f=1$, if there exists $i \in [N], x \in \{0,1\}$, and $\mathbf{z}$ in the domain of $\widetilde{\mathbf{Pa}}(X_i)$ and $f_{i,x}(\mathbf{z})=1$.
    Then, the following statements hold
    \begin{itemize}
        \item [(a)]  $P(f=1) \leq 4N \mathcal{Z} \exp{\big(-\frac{1}{16} {(1-2^{-1/k})}^2 q^2 B \big)}$.
        \item [(b)] If $f=0$, therefore, $ n(\hat{\mathbf{q}}) \leq 2n(\mathbf{q})$ holds at the end of $\frac{B}{2}$ rounds.
    \end{itemize}
\end{lemma}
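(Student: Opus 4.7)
The plan is to prove parts (a) and (b) separately, with the main technical work concentrated in part (b).

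For part (a), I would view $\hat{q}_{i,x}(\mathbf{z})$ as the empirical mean of $B/2$ i.i.d.\ Bernoulli indicators $\mathds{1}\{X_i^t=x,\widetilde{\mathbf{Pa}}^t(X_i)=\mathbf{z}\}$ with mean $q_{i,x}(\mathbf{z})$. Applying Lemma~\ref{lemma: ch-h} with deviation $\epsilon:=\tfrac{1}{4}(1-2^{-1/k})q$ gives
\begin{equation*}
  P\bigl(f_{i,x}(\mathbf{z})=1\bigr)=P\bigl(|\hat{q}_{i,x}(\mathbf{z})-q_{i,x}(\mathbf{z})|\geq \epsilon\bigr)\leq 2\exp\!\bigl(-\tfrac{1}{16}(1-2^{-1/k})^2 q^2 B\bigr).
\end{equation*}
A union bound over the at most $N\cdot 2\cdot\mathcal{Z}$ triples $(i,x,\mathbf{z})$ (using $\mathcal{Z}$ as an upper bound on the domain size of $\widetilde{\mathbf{Pa}}(X_i)$) produces the factor $4N\mathcal{Z}$ and yields the claim.

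For part (b), I would proceed in three steps. First, from $f=0$ we have $|\hat{q}_{i,x}(\mathbf{z})-q_{i,x}(\mathbf{z})|<\epsilon$ for every $i,x,\mathbf{z}$, and since both $\hat{q}_{i,x}$ and $q_{i,x}$ are defined as the minimum over $\mathbf{z}$, a standard argument (the min is $1$-Lipschitz in the coordinates) gives $|\hat{q}_{i,x}-q_{i,x}|<\epsilon$. Second, I would establish the pointwise implication
\begin{equation*}
  \hat{q}_{i,x}<\bigl(1/(2n(\mathbf{q}))\bigr)^{1/k_i}\;\Longrightarrow\;q_{i,x}<\bigl(1/n(\mathbf{q})\bigr)^{1/k_i},
\end{equation*}
by contrapositive: if $q_{i,x}\geq\alpha:=(1/n(\mathbf{q}))^{1/k_i}$, then $\hat{q}_{i,x}\geq q_{i,x}-\epsilon\geq \alpha-\epsilon$, and I would verify that $\alpha-\epsilon\geq 2^{-1/k_i}\alpha$ using the definition of $\epsilon$ and the fact that $q_{i,x}\geq q$ whenever $q_{i,x}>0$ (the case $q_{i,x}=0$ being trivial since any positive threshold exceeds it). Third, I would multiply by $c_{i,x}$, sum over $(i,x)$, and invoke the defining inequality $\sum_{i,x}c_{i,x}\mathds{1}\{q_{i,x}<(1/n(\mathbf{q}))^{1/k_i}\}\leq n(\mathbf{q})$ for $n(\mathbf{q})$ to obtain
\begin{equation*}
  \sum_{i,x}c_{i,x}\mathds{1}\bigl\{\hat{q}_{i,x}<(1/(2n(\mathbf{q})))^{1/k_i}\bigr\}\leq n(\mathbf{q})\leq 2n(\mathbf{q}),
\end{equation*}
which by the minimality in the definition of $n(\hat{\mathbf{q}})$ certifies $n(\hat{\mathbf{q}})\leq 2n(\mathbf{q})$.

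The main obstacle will be the threshold inequality $\alpha(1-2^{-1/k_i})\geq\epsilon$ used in the contrapositive step. A clean handling requires splitting into the subcase $\alpha\geq q$, where one invokes $\alpha\geq q$ together with the choice $\epsilon=\tfrac{1}{4}(1-2^{-1/k})q$, and the subcase $\alpha<q$, where instead $q_{i,x}\geq q>\alpha$ gives $\hat{q}_{i,x}\geq q-\epsilon$ and one bounds $q-\epsilon$ against $2^{-1/k_i}\alpha$ directly. The constant $\tfrac{1}{4}(1-2^{-1/k})q$ is chosen so that these inequalities close up using only $k=\min_i k_i\leq k_i$, and verifying this uniformity across all $k_i$ and all admissible values of $\alpha$ is the delicate bookkeeping step of the argument.
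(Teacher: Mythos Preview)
Your proposal follows essentially the same route as the paper: part (a) is Hoeffding on each $\hat{q}_{i,x}(\mathbf{z})$ plus a union bound over $(i,x,\mathbf{z})$; part (b) establishes the implication $q_{i,x}^{k_i}\geq 1/n(\mathbf{q})\Rightarrow \hat{q}_{i,x}^{k_i}\geq 1/(2n(\mathbf{q}))$ on the event $f=0$, sums the cost-weighted indicators, and invokes the minimality in the definition of $n(\hat{\mathbf{q}})$. The paper organizes the indices by sorting the $q_{i,x}^{k_i}$ and singling out those above the threshold, but this is just bookkeeping for the same implication you state as a contrapositive.

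The one substantive difference is that the obstacle you flag is self-inflicted. The paper never splits into cases on $\alpha\gtrless q$: instead of weakening $q_{i,x}-\epsilon$ to $\alpha-\epsilon$, it uses $q_{i,x}\geq q$ (valid since $q_{i,x}^{k_i}\geq 1/n(\mathbf{q})>0$ forces $q_{i,x}>0$) to absorb $\epsilon$ multiplicatively,
\[
\hat{q}_{i,x}\;\geq\; q_{i,x}-\tfrac14(1-2^{-1/k})\,q \;\geq\; q_{i,x}-\tfrac14(1-2^{-1/k})\,q_{i,x} \;\geq\; q_{i,x}\cdot 2^{-1/k},
\]
and only then raises to the $k_i$-th power and applies $q_{i,x}^{k_i}\geq 1/n(\mathbf{q})$. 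Keeping $q_{i,x}$ (rather than $\alpha$) in the bound throughout means the comparison of $\alpha$ against $q$ never arises, so your ``delicate bookkeeping'' collapses to a single chain of inequalities.
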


\begin{proof}
(a) Using Lemma \ref{lemma: ch-h}, we get
\begin{equation*}
    P(f_{i,x}(\mathbf{z})=1) \leq 2 \exp\big(-\frac{1}{16}(1-2^{-1/k})^2q^2B\big).
\end{equation*}
Now, define $f_{i,x}$ to be one, if there exists $\mathbf{z}$ in domain of $\widetilde{\mathbf{Pa}}(X_i)$ and $f_{i,x}(\mathbf{z})=1$. Using union bound, we get
\begin{equation*}
    P(f_{i,x}=1) \leq 2 \mathcal{Z}_i \exp{\big(-\frac{1}{16} {(1-2^{-1/k})}^2 q^2 B \big)}.
\end{equation*}

Next, let $f_i=1$ if there exists $x \in \{0,1\}$ and $f_{i,x}=1$.
Then,
\begin{equation*}
    P(f_i=1) \leq 4\mathcal{Z}_i \exp{\big(-\frac{1}{16} {(1-2^{-1/k})}^2 q^2 B \big)}
\end{equation*}
Applying the union bound on the above equation, we get
\begin{equation*}
    P(f=1) \leq 4N\mathcal{Z} \exp{\big(-\frac{1}{16} {(1-2^{-1/k})}^2 q^2 B \big)}.
\end{equation*}

(b) First, we sort all $q_{i,x}^{k_i}$ for $i \in [N]$ and $x \in \{0,1\}$ in an ascending order.
Without loss of generality, assume the sorted sequence is $q_1^{k_1} \leq q_2^{k_2} \leq \dots \leq q_{2N}^{k_{2N}}$.
Define $g_1:= \max \{i| q_i^{k_i} < \frac{1}{n(\mathbf{q})}\}$.
The definition of $n(\mathbf{q})$ implies that $g_1 \leq n(\mathbf{q})$ and $q_i^{k_i} \geq \frac{1}{n(\mathbf{q})}$ for every $i > g_1$. 
Therefore, by assuming $|\hat{q}_{i,x}(\mathbf{z}) - q_{i,x}(\mathbf{z})| \leq  \frac{1}{4}(1-2^{-1/k})q$, we get the following for every $i > g_1$:
\begin{equation*}
    {(\hat{q}_{i})}^{k_i} \geq {\big( q_i -\frac{1}{4} (1-2^{-1/k}) q \big)}^{k_i} \labelrel\geq{geq: qi} {\big( q_i -\frac{1}{4} (1-2^{-1/k})q_i\big)}^{k_i} \geq \frac{2^{-\frac{k_i}{k}}}{n(\mathbf{q})} \geq \frac{1}{2n(\mathbf{q})},
\end{equation*}
where the inequality in \eqref{geq: qi} holds since $q_i \geq q$.
Hence, 
\begin{equation*}
    \sum_{i \in [N]} c_i\mathds{1}\{\hat{q}_i^{k_i} < \frac{1}{2n(\mathbf{q})}\} < \sum_{i \in [g_1] } c_i = \sum_{i \in [N]} c_i \mathds{1}\{\hat{q}_i^{k_i} < \frac{1}{n(\mathbf{q})}\} \leq n(\mathbf{q}) \leq 2n(\mathbf{q}).
\end{equation*}
 
The above equation implies that the following inequality holds for $\tau = 2n(\mathbf{q})$,
\begin{equation*}
    \sum_{i \in [N]} c_i \mathds{1}\{\hat{q}_i^{k_i} < \frac{1}{\tau}\} \leq \tau.
\end{equation*}
Then, by the definition of $n(\mathbf{q})$, we get
\begin{equation}
    n(\hat{\mathbf{q}}) \leq 2n(\mathbf{q}).
\end{equation}
\end{proof}

\begin{lemma}\label{lemma: mai}
    Let $a_{i,x} \in \mathcal{A}$ be an arbitrary arm and $\epsilon>0$, then $P(|\hat{\mu}_{i,x} -\mu_{i,x}|>\epsilon) \leq \exp(-\epsilon^2 \frac{q_{i,x}^{k_i} B}{M})$ at the end of $\frac{B}{2}$ rounds, where $M \geq 1$ is a constant number which is independent of the distribution but dependent on the underlying graph.
\end{lemma}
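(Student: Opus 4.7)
The plan is to pass the concentration bound for $\hat{\mu}_{i,x}$ through the c-component factorization used to build Algorithm \ref{alg: compute mu}. Recall from Equations \eqref{eq: comp mu di} and \eqref{eq: factor 2} that
\[
\mu_{i,x}=\sum_{\mathbf{v}'}\ \prod_{V_j\in\mathbf{C}_i} P\!\left((\mathbf{v}')_{V_j}\,\middle|\,(\mathbf{v}')_{\mathbf{Z}_j}\right)\,\prod_{V_j\notin\mathbf{C}_i} P\!\left((\mathbf{v}')_{V_j}\,\middle|\,(x\circ\mathbf{v}')_{\mathbf{Z}_j}\right),
\]
and $\hat{\mu}_{i,x}$ is obtained by substituting the empirical conditionals $\hat{D}_{i,x}(V_j\mid \mathbf{Z}_j)$ produced by Algorithm \ref{alg: learn dix}. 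The first step is to reduce concentration of the product to concentration of its factors: for any collection of numbers $a_1,\dots,a_m$ and $\hat a_1,\dots,\hat a_m$ in $[0,1]$,
\[
\left|\prod_{j=1}^{m}\hat a_j-\prod_{j=1}^{m}a_j\right|\le\sum_{j=1}^{m}|\hat a_j-a_j|.
\]
Summing over the $|\mathbf{v}'|\le 2^{|\mathbf{V}|}$ realizations in the outer sum and absorbing this combinatorial factor into the graph-dependent constant $M$, it suffices to control $|\hat{D}_{i,x}(V_j\mid \mathbf{Z}_j=\mathbf{z})-P(V_j\mid \mathbf{Z}_j=\mathbf{z})|$ uniformly over the relevant $V_j$ and $\mathbf{z}$.

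Next, I would use Chernoff/Hoeffding twice, once on the count $N_{\mathbf{z}}$ of samples with $\mathbf{Z}_j=\mathbf{z}$ and once on the conditional estimate given this count. Since every $\mathbf{Z}_j$ indexed in the factorization is a subset of $\{X_i\}\cup\widetilde{\mathbf{Pa}}(X_i)$ that touches every member of the c-component $\mathbf{C}_i$ (of size $k_i$), any joint configuration $(x',\mathbf{z})$ that contributes nontrivially to $\mu_{i,x}$ has probability at least of order $q_{i,x}^{k_i}$; thus with observational budget $B/2$, Lemma \ref{lemma: ch-h} gives
\[
P\!\left(N_{\mathbf{z}}\le \tfrac{1}{4}q_{i,x}^{k_i}B\right)\le \exp\!\left(-c_1\,q_{i,x}^{k_i}B\right),
\]
and on the complementary event, a second application of Hoeffding yields
\[
P\!\left(\big|\hat{D}_{i,x}(V_j\mid\mathbf{z})-P(V_j\mid\mathbf{z})\big|>\epsilon'\ \middle|\ N_{\mathbf{z}}\ge \tfrac14 q_{i,x}^{k_i}B\right)\le 2\exp\!\left(-c_2\,\epsilon'^{\,2}\,q_{i,x}^{k_i}B\right).
\]
The threshold-$t$ branch of Algorithm \ref{alg: learn dix} only activates when $N_{\mathbf{z}}<t$, which is itself an exponentially rare event by the first display above and can be folded into the same bound.

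In the last step I would take $\epsilon'=\epsilon/M_0$ for a graph-dependent $M_0$ chosen to make the telescoping error at most $\epsilon$, then union-bound over the at most $|\mathbf{V}|$ factors and at most $\mathcal{Z}$ configurations of each $\mathbf{Z}_j$. All of $|\mathbf{V}|$, $\mathcal{Z}$, the outer sum size, and the telescoping constant $M_0$ depend only on the graph; absorbing them into a single constant $M\ge 1$ produces the claimed bound
\[
P\!\left(|\hat{\mu}_{i,x}-\mu_{i,x}|>\epsilon\right)\le\exp\!\left(-\epsilon^{2}\,\frac{q_{i,x}^{k_i}B}{M}\right).
\]
The main obstacle is the $q_{i,x}^{k_i}$ factor in the exponent: one must argue that \emph{every} conditioning event $\mathbf{Z}_j=\mathbf{z}$ appearing in the factorization for $\mu_{i,x}$ has probability at least of order $q_{i,x}^{k_i}$, which requires carefully chaining the inequality $P(\mathbf{Z}_j=\mathbf{z})\ge P(X_i=x,\widetilde{\mathbf{Pa}}(X_i)=\mathbf{z}')$ through marginalization over non-c-component coordinates and exploiting that $\mathbf{Z}_j$ covers the entire c-component of size $k_i$. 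Everything else is standard propagation of error through a product of bounded factors, covered by Lemmas \ref{lemma: hoef} and \ref{lemma: ch-h}.
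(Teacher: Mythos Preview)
Your route differs fundamentally from the paper's. The paper does not open the estimator at all: its proof is a two-line black-box invocation of Theorem~2.5 in \citep{bhattacharyya2020learning}, which guarantees that Algorithm~\ref{alg: compute mu} returns $|\hat\mu_{i,x}-\mu_{i,x}|\le\epsilon$ with probability at least $1-\delta$ from $\mathcal{O}\bigl(2^{u_i}q_{i,x}^{-k_i}\epsilon^{-2}\log 2^{u_i}\log(1/\delta)\bigr)$ observational samples, where $u_i$ is a graph-dependent integer. The paper simply solves this relation for $\delta$ as a function of $B$ and $\epsilon$, obtaining the stated tail with $M=\max\{1,K\,2^{2u_i}\}$. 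No Hoeffding, no telescoping, no union bound appears.

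You are instead attempting to reprove that external theorem from scratch by pushing Hoeffding through the factorization~\eqref{eq: factor 2}. The architecture of your sketch is reasonable, but the step you yourself call ``the main obstacle'' is not actually discharged, and I see two concrete gaps. First, your heuristic that ``$\mathbf{Z}_j$ covers the entire c-component of size $k_i$'' does not produce a $k_i$-th power: marginalizing $P(X_i{=}x,\widetilde{\mathbf{Pa}}(X_i){=}\cdot)$ onto any sub-configuration gives a lower bound of order $q_{i,x}$, not $q_{i,x}^{k_i}$, and nothing in your chaining manufactures a product of $k_i$ such minima. In \citep{bhattacharyya2020learning} the $k_i$-th power comes from a more delicate accounting of how error compounds along the chain of c-component conditionals, not from a single conditioning-event count. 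Second, the factorization sums over $x'\in\{0,1\}$ for the factors with $V_j\in\mathbf{C}_i$, so some conditioning events fix $X_i=1-x$; the quantity $q_{i,x}$ gives no control on those probabilities, and your plan does not address this. Your approach could plausibly be completed with substantially more work (essentially redoing the analysis of \citep{bhattacharyya2020learning}), but as written it does not close the gap it identifies; the paper sidesteps the entire issue by citing the finished sample-complexity result and inverting it.
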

\begin{proof}
    Theorem 2.5 in \citep{bhattacharyya2020learning} implies that $\hat{\mu}_{i,x}$ can be estimated with probability $1-\delta_i$ such that $|\hat{\mu}_{i,x}- \mu_{i,x}|\leq \epsilon$ using $\mathcal{O}(\frac{2^{u_i}}{q_{i,x}^{k_i} \epsilon^{2}}   \log 2^{u_i} \log \frac{1}{\delta_i})$ number of samples, where $u_i= 2(1+k_i(d+1))^2$. 
    Therefore, using $B= K \frac{2^{2u_i}}{q_{i,x}^{k_i} \epsilon^2} \log \frac{1}{\delta_i}$ samples, where $K$ is a constant, we achieve 
    \begin{equation*}
       P\big(|\hat{\mu}_{i,x} -\mu_{i,x}| \leq \epsilon\big) \geq 1-\delta_i.
    \end{equation*}
    Next, we re-write $\delta_i$ in terms of $\epsilon$ and $B$ and get
    \begin{equation*}
         P(|\hat{\mu}_{i,x} -\mu_{i,x}| > \epsilon) \leq \exp \Big(\epsilon^2 \frac{B q_{i,x}^{k_i}}{K 2^{2u_i}} \Big) \leq \Big(\epsilon^2 \frac{B q_{i,x}^{k_i}}{M} \Big),
    \end{equation*}
    where $M=\max \{1, K 2^{2u_i}\} $.
\end{proof}

We are now ready to prove Theorem \ref{th: simple g} using the aforementioned Lemmas. 
Let $M' := 2^{k-1}M$ and 
\begin{align*}
    &B_1 := \min_b \Big\{ \sqrt{\frac{4M' n(\mathbf{q})}{b} \log \frac{N b}{n(\mathbf{q})}} \geq 6\frac{n(\mathbf{q})}{b}\Big\},\\
    &B_2 := \min_b  \Big\{  \sqrt{\frac{36 M' n(\mathbf{q})}{b} \log \frac{N b}{n(\mathbf{q})}} \geq 4N \mathcal{Z} \exp{\big(-\frac{1}{16} {(1-2^{-1/k})}^2 q^2 b \big)}\Big \},
\end{align*}
and assume $B \geq \max\{B_1, B_2\}$.

For every $a_{i,x} \in \mathcal{A}'$, Algorithm \ref{alg: g-simple} pulls each arm $\frac{B}{2 \sum_{i,x} c_{i,x} \mathds{1} \{a_{i,x \in \mathcal{A}'}\}}$ additionally to recompute $\hat{\mu}_{i,x}$. 
Therefore, by the definition of $n(\mathbf{q})$ and Lemma \ref{lemma: bound n}, we get the following if $f=0$ (Please see Lemma \ref{lemma: bound n} for the definition of the event $f$),

\begin{equation}
    \frac{B}{2 \sum_{i,x} c_{i,x} \mathds{1} \{a_{i,x \in \mathcal{A}'}\}} \geq \frac{B}{2n(\hat{\mathbf{q}})} \geq \frac{B}{4n(\mathbf{q})}.
\end{equation}

Then, by Lemma \ref{lemma: ch-h}, we have the following equation for every $a_{i,x} \in \mathcal{A}'$:
\begin{equation}\label{eq: mu1}
    P\Big(|\hat{\mu}_{i,x}-\mu_{i,x}| \geq \epsilon | f=0\Big) \leq 2 \exp\big(-\epsilon^2 \frac{B}{2n(\mathbf{q})}\big) \leq 2 \exp\big(-\epsilon^2 \frac{B}{4M'n(\mathbf{q})}\big).
\end{equation}
For each $a_{i,x} \notin \mathcal{A}'$, we know that  $\hat{q}_{i,x}^{k_i} \geq \frac{1}{n(\hat{\mathbf{q}})}$.
However, depending on $q_{i,x}^{k_i}$, the proof technique varies. Below, we present the proof under two different cases:
\begin{itemize}
    \item [\textbf{Case 1.}] If $a_{i,x} \notin \mathcal{A}'$ and $q_{i,x}^{k_i} < \frac{1}{n(\mathbf{q})}$.
    Conditioning on $f=0$, we have :
    \begin{equation*}
        \begin{aligned}
              q_{i,x}^{k_i} \geq {(\hat{q}_{i,x} - \frac{1}{4}(1-2^{-1/k})q)}^{k_i} &\geq {\Big({(\frac{1}{n(\hat{\mathbf{q}})})}^{1/{k_i}}- \frac{1}{4} {(\frac{1}{n(\mathbf{q})})}^{1/{k_i}} \Big)}^{k_i}\\
              & \labelrel\geq{geq: p's} {\Big({(\frac{1}{2n(\mathbf{q})})}^{1/{k_i}}- \frac{1}{4} {(\frac{1}{n(\mathbf{q})})}^{1/{k_i}} \Big)}^{k_i}\\
              & \geq \frac{1}{2^{k+1}n(\mathbf{q})},
        \end{aligned}
    \end{equation*}
    where the inequality in \eqref{geq: p's} follows from Lemma \ref{lemma: bound n}.
    Using the above bound for $q_{i,x}^{k_i}$ and Lemma \ref{lemma: mai} yield
    
    \begin{equation}\label{eq: mu2}
        P(|\hat{\mu}_{i,x}-\mu_{i,x}| \geq \epsilon | f=0) \leq \exp(-\epsilon^2 \frac{B}{2^{k+1}Mn(\mathbf{q})}) = \exp(-\epsilon^2 \frac{B}{4M' n(\mathbf{q})}).
    \end{equation}

    \item [\textbf{Case 2.}] If $a_{i,x} \notin \mathcal{A}'$ and $q_{i,x}^{k_i} \geq \frac{1}{n(\mathbf{q})}$.
    From Lemma \ref{lemma: mai}, we have
        \begin{equation}\label{eq: mu3}
          P\big(|\hat{\mu}_{i,x}-\mu_{i,x}| \geq \epsilon | f=0\big) \leq \exp(-\epsilon^2 \frac{q_{i,x}^{k_i} B}{M}) 
        \leq \exp(-\epsilon^2 \frac{ B}{4M' n(\mathbf{q})}).
\end{equation}
\end{itemize}
For $a=a_0$, Lemma \ref{lemma: ch-h} gives us
\begin{equation}\label{eq: mu0}
    P(|\hat{\mu}_0 - \mu_0| \geq \epsilon) \leq 2 \exp(-\epsilon^2 B) \leq 2\exp(-\epsilon^2 \frac{ B}{4M' n(\mathbf{q})}).
\end{equation}

Now, let $e$ be an event where $e=1$ if there exists an arm $a \in \mathcal{A}$, such that $|\hat{\mu}_a - \mu_a| \geq \epsilon$. We also define event $e_a$ where $e_a=1$ if  $|\hat{\mu}_a - \mu_a| \geq \epsilon$.
Equations \eqref{eq: mu1}, \eqref{eq: mu2}, \eqref{eq: mu3} and \eqref{eq: mu0} imply that for every action $a \in \mathcal{A}$,
\begin{equation*}
     P(e_a=1|f=0) \leq 2\exp\big(-\epsilon^2 \frac{ B}{4M' n(\mathbf{q})}\big).
\end{equation*}

Applying the union bound on the above equation implies
\begin{equation*}
    P(e=1|f=0) \leq (4N+2)\exp\big(-\epsilon^2 \frac{ B}{4M' n(\mathbf{q})}\big) \leq 6N \exp\big(-\epsilon^2 \frac{ B}{4M' n(\mathbf{q})}\big).
\end{equation*}

Using the above inequalities and substituting $\epsilon= \sqrt{\frac{4M' n(\mathbf{q})}{B} \log \frac{N B}{n(\mathbf{q})}}$ yield
\begin{equation}\label{eq: bound f0}
    \begin{aligned}
        \mathbbm E[R_s(B)|f=0] &= \mathbbm E[R(B)| e=0] P(e=0) + \mathbbm E[R(B)| e=1] P(e=1)\\
        &\leq \mathbbm E[R(B)| e=0]+ P(e=1)\\
        &\leq 2\epsilon+ 6N \exp(-\epsilon^2 \frac{ B}{4M' n(\mathbf{q})})\\
        & = 2\sqrt{\frac{4M' n(\mathbf{q})}{B} \log \frac{N B}{n(\mathbf{q})}}+6\frac{n(\mathbf{q})}{B}\\
        & \leq \sqrt{\frac{36 M' n(\mathbf{q})}{B} \log \frac{N B}{n(\mathbf{q})}}.
    \end{aligned}
\end{equation}

Finally,  Lemma \ref{lemma: bound n} and Equation \eqref{eq: bound f0} imply
\begin{equation*}
    \begin{aligned}
        \mathbbm E[R_s(B)]&= \mathbbm E[R(B)| f=0] P(f=0)+\mathbbm E[R(B)|f=1] P(f=1)\\
        & \leq  \mathbbm E[R(B)| f=0]+P(f=1)\\
        &\leq \sqrt{\frac{36 M' n(\mathbf{q})}{B} \log \frac{N B}{n(\mathbf{q})}}+ 4N \mathcal{Z} \exp{\big(-\frac{1}{16} {(1-2^{-1/k})}^2 (p')^2 B \big)}\\
        & \leq  2 \sqrt{\frac{36 M' n(\mathbf{q})}{B} \log \frac{N B}{n(\mathbf{q})}}.
    \end{aligned}
\end{equation*}
Therefore, $\mathbbm E[R_s(B)] \in \mathcal{O} (\sqrt{\frac{n(\mathbf{q})}{B} \log \frac{NB}{n(\mathbf{q})}})$.
\end{proof}

\section{Discussion on simple regret bounds in no-backdoor graphs}\label{sec: noback}

\remarkno*

\begin{proof}
    In general graphs, we estimate $q_{i,x}= \min_{\mathbf{z}}P(X_i= x, \widetilde{\mathbf{Pa}}(X_i)=\mathbf{z})$ by $$\hat{q}_{i,x} = \frac{2}{B} \min_{\mathbf{z}}{ \Big\{\sum_{t=1}^{B/2} \mathds{1} \big\{x_i^t=x,  \widetilde{\mathbf{Pa}}^t(x_i) = \mathbf{z} \big\} \Big\}}.$$
    In  no-backdoor graphs, we redefine $q_{i,x} = P(X_i=x)$ and $\hat{q}_{i,x}= \frac{2}{B} \sum_{t=1}^{B/2} \mathds{1}\{x_i^t=x\}$. 
    Through out this section, we denote the redefined $q_{i,x}$ and $\hat{q}_{i,x}$ by $q_{i,x}^{new}$ and $\hat{q}_{i,x}^{new}$.
    
    Following the same procedure as in the proof of Theorem \ref{th: simple g}, it is straightforward to show that Algorithm \ref{alg: g-simple}, using $q_{i,x}^{new}$ and $\hat{q}_{i,x}^{new}$, achieves 
    $R_s(B) \in  \mathcal{O}\Big(\sqrt{\frac{n(\mathbf{q}^{new})}{B} \log \frac{NB}{n(\mathbf{q}^{new})}}\Big)$.
    Herein, we show that $n(\mathbf{q}^{new}) \leq n(\mathbf{q})$ which implies that in the no-backdoor setting, the new definitions guarantee better regret bound for Algorithm \ref{alg: g-simple}.

    Note that for every $i \in [N]$, $x \in \{0,1\}$ and $\mathbf{z}$ in the domain of $\widetilde{\mathbf{Pa}}(X_i)$, we have
    \begin{equation*}
        \begin{aligned}
                P(X_i=x, \widetilde{\mathbf{Pa}}(X_i)=\mathbf{z})&= P( \widetilde{\mathbf{Pa}}(X_i)=\mathbf{z}| X_i=x)P(X_i=x) \leq P(X_i=x).
        \end{aligned}
    \end{equation*}
    Therefore, $q_{i,x}= \min_{\mathbf{z}}P(X_i=x, \widetilde{\mathbf{Pa}}(X_i)=\mathbf{z}) \leq P(X_i=x)= q_{i,x}^{new}$ for every $i, x$.
    From definition of $n(\mathbf{q})$ we get
    \begin{equation*}
          \sum_{i,x} c_{i,x}\mathds{1}\big\{ q_{i,x} < {(\frac{1}{n(\mathbf{q})})}^{1/k_i}\big\} \leq n(\mathbf{q}),
    \end{equation*}
    and since $q_{i,x} \leq q_{i,x}^{new}$,
    \begin{equation*}
        \sum_{i,x} c_{i,x}\mathds{1}\big\{ q_{i,x}^{new} < {(\frac{1}{n(\mathbf{q})})}^{1/k_i}\big\} \leq \sum_{i,x} c_{i,x}\mathds{1}\big\{ q_{i,x} < {(\frac{1}{n(\mathbf{q})})}^{1/k_i}\big\}.
    \end{equation*}
    Therefore, the following inequality holds for $\tau = n(\mathbf{q})$
    \begin{equation*}
        \sum_{i,x} c_{i,x}\mathds{1}\big\{ q_{i,x}^{new} < {(\frac{1}{\tau})}^{1/k_i}\big\} \leq \tau.
    \end{equation*}
    On the other hand, from the definition, we know that $n(\mathbf{q}^{new})$ is the smallest $\tau$ which holds in the above equation. Therefore, $n(\mathbf{q}^{new}) \leq n(\mathbf{q})$.
\end{proof}

\remarkbetter*
\begin{proof}
Recall $p_{i,x}= P(X_i=x)$.
Similar to Remark \ref{remark: no}, we denote $p_{i,x}$ by $q_{i,x}^{new}$.
By assuming $c_{i,x}= c$ for all $i \in [N]$ and $x \in \{0,1\}$, and all variables are observable in the underlying causal graph, we aim to show that $n(\mathbf{q}^{new}) \leq m'(\mathbf{q}^{new})$.
Note that since all the variables are observable, the size of the c-component for every $X_i$ is equal to one, i.e., $k_i=1$.
By the definition of $m'(\mathbf{q}^{new})$, we have 
\begin{equation*}
    \sum_{i,x} \mathds{1}\{q_{i,x}^{new} < \frac{1}{m'(\mathbf{q}^{new})}\} \leq m'(\mathbf{q}^{new}). 
\end{equation*}
Moreover, we have 
\begin{equation*}
    \sum_{i,x} \mathds{1}\{p_{i,x} < \frac{1}{c m'(\mathbf{q}^{new})}\} \leq \sum_{i,x} \mathds{1}\{q_{i,x}^{new} < \frac{1}{m'(\mathbf{q}^{new})}\}.
\end{equation*}
Using the above equations, we can write $ c \sum_{i,x} \mathds{1}\{q_{i,x}^{new} < \frac{1}{c m'(\mathbf{q}^{new})}\} \leq c.m'(\mathbf{q}^{new})$.
Finally, using the definition of $n(\mathbf{q}^{new})$ implies that $n(\mathbf{q}^{new}) \leq c.m'(\mathbf{q}^{new})$.
\end{proof}

{
\section{Cumulative Regret Lower Bound:}\label{sec:lower-cum}
We need the following technical lemmas. 
\begin{lemma}[Bretagnolle–Huber inequality]
    Let $P$ and $Q$ be probability measures on the same measurable space, and let $A$ be an arbitrary event. Then
    \begin{align*}
        P(A)+Q(A^c)\geq \frac{1}{2}e^{-D(P||Q)},
    \end{align*}
    where $D$ and $A^c$ denote the KL-divergence and the complement of $A$, respectively.
\end{lemma}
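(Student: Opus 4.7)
The plan is to prove the inequality in two standard steps. First I would upper-bound $e^{-D(P\|Q)}$ by the squared Bhattacharyya coefficient via Jensen's inequality, and then upper-bound that coefficient in terms of $P(A) + Q(A^c)$ via a Cauchy--Schwarz split over $A$ and $A^c$. I may assume $P \ll Q$ (otherwise $D(P\|Q) = \infty$ and the bound is trivial). Fix a common dominating measure $\mu$ and write $p = dP/d\mu$, $q = dQ/d\mu$.

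For the first step, I would apply Jensen's inequality to the convex function $\exp$ with respect to the probability measure $P$, giving
\begin{align*}
\exp\!\bigl(-\tfrac{1}{2}D(P\|Q)\bigr)
= \exp\!\Bigl(\mathbb{E}_P\bigl[\tfrac{1}{2}\log(q/p)\bigr]\Bigr)
\le \mathbb{E}_P\bigl[\sqrt{q/p}\bigr]
= \int \sqrt{pq}\, d\mu.
\end{align*}
Squaring yields $e^{-D(P\|Q)} \le \bigl(\int \sqrt{pq}\, d\mu\bigr)^{2}$.

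For the second step, I would split the Bhattacharyya integral over $A$ and $A^c$, apply Cauchy--Schwarz to each piece, and then drop the trailing factors that are at most $1$:
\begin{align*}
\int \sqrt{pq}\, d\mu
= \int_A \sqrt{pq}\, d\mu + \int_{A^c} \sqrt{pq}\, d\mu
\le \sqrt{P(A)\,Q(A)} + \sqrt{P(A^c)\,Q(A^c)}
\le \sqrt{P(A)} + \sqrt{Q(A^c)}.
\end{align*}
The elementary bound $(a+b)^2 \le 2(a^2+b^2)$ then gives $\bigl(\int \sqrt{pq}\, d\mu\bigr)^{2} \le 2\bigl(P(A) + Q(A^c)\bigr)$, and chaining this with the first step and dividing by $2$ produces the claimed inequality.

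There is no genuine obstacle here; the argument is a clean composition of Jensen's inequality and Cauchy--Schwarz, with both of the two "factor of $2$" slacks paid by the $(a+b)^2 \le 2(a^2+b^2)$ step and the crude $Q(A),P(A^c)\le 1$ step. The only delicate point is the measure-theoretic setup, which is dispatched at the outset by reducing to densities against a common dominating measure. An alternative route would first invoke the data-processing inequality for KL-divergence to reduce to the two-point (Bernoulli) case with $p = P(A)$, $q = Q(A)$ and then verify the resulting algebraic inequality $p + (1-q) \ge \tfrac{1}{2}\bigl(\sqrt{pq}+\sqrt{(1-p)(1-q)}\bigr)^{2}$ directly, but the general-measure argument above avoids any case analysis and is what I would present.
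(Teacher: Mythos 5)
Your proof is correct. Note, however, that the paper does not prove this lemma at all: it is stated as a known result and attributed to \citep{lattimore2020bandit}, so there is no in-paper argument to compare against. Your derivation is the standard one, modulo a cosmetic variation: the textbook proof typically bounds $P(A)+Q(A^c)\geq \int \min(p,q)\,d\mu$ and then uses $\int\min(p,q)\,d\mu\cdot\int\max(p,q)\,d\mu\geq\bigl(\int\sqrt{pq}\,d\mu\bigr)^2$ together with $\int\max(p,q)\,d\mu\leq 2$, whereas you split the Bhattacharyya integral over $A$ and $A^c$ directly and pay the factor of $2$ through $(a+b)^2\leq 2(a^2+b^2)$. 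Both routes share the same first step (Jensen applied to $\exp$ to get $e^{-D(P\|Q)}\leq\bigl(\int\sqrt{pq}\,d\mu\bigr)^2$) and yield the same constant; your version is marginally more self-contained since it avoids introducing $\min$ and $\max$ of densities. The handling of the degenerate case $P\not\ll Q$ and the reduction to densities against a common dominating measure are both done correctly, so there is no gap.
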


\begin{lemma}[\cite{lattimore2020bandit}, Lemma 15.1]\label{lower:lemma}
    Let $v=(P_1,...,P_k)$ be the reward distributions associated with one k-armed bandit, and let $v'=(P_1',...,P_k')$ be the reward distributions associated with another k-armed bandit. Fix some policy $\pi$ and let $P_{\pi}$ and $P_{\pi}'$ be the probability measures on the canonical bandit model  induced by the n-round interconnection of $\pi$ and $v$ and $v'$, respectively. Then,
    \begin{align*}
        D(P_{\pi}||P_{\pi}')=\sum_{i=1}^k\mathbb{E}_{\pi,v}[N_i(n)]D(P_i||p_i').
    \end{align*}
    
\end{lemma}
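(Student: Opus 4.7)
My plan is to prove this divergence decomposition via the chain rule for KL divergence, exploiting the factorized structure of the canonical bandit model. In this model, a history $H_n = (A_1, Y_1, \ldots, A_n, Y_n)$ has density under $v$ given by
\begin{equation*}
p_\pi(h_n) = \prod_{t=1}^{n} \pi(a_t \mid h_{t-1}) \, p_{a_t}(y_t),
\end{equation*}
and similarly $p'_\pi$ under $v'$, with the same policy kernels $\pi(a_t\mid h_{t-1})$ since $\pi$ does not depend on the environment. The key observation is that the policy factors cancel inside the log-ratio, so only the reward-likelihood ratios remain.

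Concretely, I would first write
\begin{equation*}
D(P_\pi \| P'_\pi) = \mathbb{E}_{\pi,v}\!\left[\log \frac{p_\pi(H_n)}{p'_\pi(H_n)}\right] = \sum_{t=1}^{n} \mathbb{E}_{\pi,v}\!\left[\log \frac{p_{A_t}(Y_t)}{p'_{A_t}(Y_t)}\right],
\end{equation*}
after the $\pi$-factors cancel term by term. Then I would condition on $H_{t-1}$: given $H_{t-1}$, the action $A_t$ is drawn according to $\pi(\cdot \mid H_{t-1})$, and conditional on $A_t = i$ the reward $Y_t$ is drawn from $P_i$. Hence the inner conditional expectation equals $\sum_i \pi(i \mid H_{t-1}) \, D(P_i \| P'_i)$, and taking outer expectations gives
\begin{equation*}
\mathbb{E}_{\pi,v}\!\left[\log \frac{p_{A_t}(Y_t)}{p'_{A_t}(Y_t)}\right] = \sum_{i=1}^{k} \Pr_{\pi,v}(A_t = i) \, D(P_i \| P'_i).
\end{equation*}
Summing over $t$ and recognizing $\sum_{t=1}^n \Pr_{\pi,v}(A_t = i) = \mathbb{E}_{\pi,v}[N_i(n)]$ yields the claim.

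The main obstacle, though essentially bookkeeping rather than deep, is justifying the cancellation and conditioning rigorously when $P_i$ and $P'_i$ need not share a common dominating measure globally; standardly one assumes $P_i \ll P'_i$ for all $i$ (otherwise the divergence is $+\infty$ and the inequality is trivial), and then works with Radon--Nikodym derivatives so that $\log(p_{a_t}(y_t)/p'_{a_t}(y_t))$ is well-defined almost surely under $P_\pi$. A secondary technicality is that histories may be random-length or the policy may be randomized; both are handled by viewing $\pi$ as a Markov kernel from $\mathcal{H}_{t-1}$ to actions, which makes the factorization above exact. Once these measure-theoretic points are set up, the computation is a clean application of the tower property and the definition of $N_i(n) = \sum_{t=1}^n \mathbb{1}\{A_t = i\}$.
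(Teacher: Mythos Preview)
Your proposal is correct and is essentially the standard proof of the divergence decomposition (chain rule for KL on the canonical bandit model, cancellation of the policy factors, tower property, and the identity $\sum_{t=1}^n \Pr_{\pi,v}(A_t=i)=\mathbb{E}_{\pi,v}[N_i(n)]$). The paper does not actually give its own proof of this lemma---it simply cites it as Lemma~15.1 of \cite{lattimore2020bandit}---so your argument is in fact the textbook proof the paper is invoking.
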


\begin{lemma}\label{lemma:lower}
Let $\{P_\theta : \theta \in \mathbb{R}\}$ be a parametric family of distributions such that $P_\theta$ has mean $\theta$. Suppose that the densities are twice continuously differentiable. Then, there exists $x_0$ such that $|x_0|\leq |\delta|$ and
\begin{align*}
    D(P_\theta|| P_{\theta+\delta}) = \frac{I(\theta+x_0)}{2}\delta^2,
\end{align*}
where $I(x)$ denotes the Fisher information of the family $P_\theta$ at $x$.
\end{lemma}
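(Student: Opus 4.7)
The plan is to apply Taylor's theorem with the Lagrange remainder to the real-valued function
\begin{equation*}
f(t) := D(P_\theta \| P_{\theta+t}),
\end{equation*}
viewed as a function of $t$ in a neighborhood of $0$. The roadmap proceeds in four steps.

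First, I would establish that $f$ is twice continuously differentiable on an interval containing $[0,\delta]$. Writing $f(t)=\mathbb{E}_{X\sim P_\theta}[\log p_\theta(X)-\log p_{\theta+t}(X)]$ and invoking the assumed twice-continuous differentiability of the densities together with standard dominated-convergence conditions that permit swapping $\partial_t$ with the integral, I would obtain
\begin{align*}
f'(t) = -\mathbb{E}_{P_\theta}\!\bigl[\partial_t \log p_{\theta+t}(X)\bigr], \qquad f''(t) = -\mathbb{E}_{P_\theta}\!\bigl[\partial_t^2 \log p_{\theta+t}(X)\bigr].
\end{align*}

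Second, I would check that the zeroth- and first-order terms vanish. Clearly $f(0)=D(P_\theta\|P_\theta)=0$. For the first derivative at $0$, interchanging $\partial_\theta$ and $\int$ once more gives
\begin{equation*}
f'(0) = -\!\int \partial_\theta p_\theta(x)\,dx = -\partial_\theta\!\int p_\theta(x)\,dx = 0,
\end{equation*}
using the normalization of $p_\theta$.

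Third, I would identify $f''(t)$ with the Fisher information $I(\theta+t)$. Writing $\partial_t^2\log p_{\theta+t} = p_{\theta+t}^{-1}\partial_t^2 p_{\theta+t} - (\partial_t\log p_{\theta+t})^2$ and using $\int \partial_t^2 p_{\theta+t}(x)\,dx = 0$, the expression for $f''(t)$ reduces to an expectation of the squared score, which by the score identity coincides with $I(\theta+t)$. Finally, Taylor's theorem with the Lagrange form of the remainder yields $f(\delta) = \tfrac{1}{2}f''(x_0)\delta^2 = \tfrac{I(\theta+x_0)}{2}\delta^2$ for some $x_0$ with $|x_0|\le|\delta|$, which is the claim.

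The main obstacle is the third step: the expression derived in Step 1 for $f''(t)$ is an expectation under $P_\theta$, while $I(\theta+t)$ is an expectation under $P_{\theta+t}$, so the two a priori differ. In natural parametric families (e.g., exponential or location families) the score identity forces these two expectations to agree, and more generally one can either impose the mild regularity that makes this identification exact or replace the Lagrange remainder on $f$ by a mean-value argument applied directly to the Fisher information, so that the intermediate point $x_0$ lines up with the point at which $I$ is evaluated.
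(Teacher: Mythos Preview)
Your approach is essentially identical to the paper's: define $h(x)=D(P_\theta\|P_{\theta+x})$, Taylor-expand at $x=0$, check $h(0)=h'(0)=0$, and invoke the Lagrange remainder to write $h(\delta)=\tfrac{1}{2}h''(x_0)\delta^2$ with $|x_0|\le|\delta|$. The paper then simply asserts $h''(x_0)=I(\theta+x_0)$ and stops.

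The obstacle you flag in Step~3 is real, and in fact the paper's own proof glosses over exactly this point: $h''(t)=-\mathbb{E}_{P_\theta}[\partial_t^2\log p_{\theta+t}]$ is an expectation under $P_\theta$, whereas $I(\theta+t)$ is the same integrand averaged under $P_{\theta+t}$, so the identification $h''(t)=I(\theta+t)$ is not automatic for general families. Your remark that this holds for location families and exponential families (and more generally under additional regularity) is the honest way to state what is actually being used; the paper does not make this explicit. So your proposal matches the paper's argument and is, if anything, more careful about its hidden hypothesis.
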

\begin{proof}
    Using a Taylor expansion of $h(x):=D(P_{\theta}||P_{\theta+x})$ around $x=0$, we get
    \begin{align*}
        & h(x)=h(0)+h'(0)x + h''(x_0)\frac{x^2}{2},
    \end{align*}
    where $|x_0|\leq |x|$. We have $h(0)=0$, $h''(x_0)=I(\theta+x_0)$, and 
    \begin{align*}
        h'(0)&=\frac{\partial}{\partial x}D(P_\theta||P_{\theta+x})\Big|_{x=0}=\int\frac{\partial}{\partial x}\log\frac{dP_\theta}{dP_{\theta+x}}\Big|_{x=0}dP_{\theta}=-\int\frac{\partial}{\partial x}\log\frac{dP_{\theta+x}}{dP_\theta}\Big|_{x=0}dP_{\theta}\\
        &=-\int\frac{\frac{\partial}{\partial x}\big(\frac{dP_{\theta+x}}{dP_\theta}\big)}{\frac{dP_{\theta+x}}{dP_\theta}}\Big|_{x=0}dP_{\theta}=-\int\frac{\partial}{\partial x}\big(\frac{dP_{\theta+x}}{dP_\theta}\big)\Big|_{x=0}dP_{\theta}=-\frac{\partial}{\partial x}\int\big(\frac{dP_{\theta+x}}{dP_\theta}\big)dP_{\theta}\Big|_{x=0}\\
        &=-\frac{\partial}{\partial x}\int dP_{\theta+x}\Big|_{x=0}=-\frac{\partial}{\partial x}1\Big|_{x=0}=0.
    \end{align*}
\end{proof}


To establish the result, we use an analogous argument as in the classical multi-arm bandits sand how that any algorithm suffers $\Omega(\sqrt{\lfloor B/c\rfloor KN})$ regret on a specific causal graph $\mathcal{G}_0$ depicted in Figure \ref{fig:lower} with a predefined reward distributions $P_1$ and $P_2$ and the uniform costs $\mathcal{C}_0:=\{c_{i,x}=c\}$. 
\begin{figure}[H]
        \centering
        \tikzstyle{block} = [draw, fill=white, circle, text centered]
    	\tikzstyle{input} = [coordinate]
    	\tikzstyle{output} = [coordinate]
    	    \begin{tikzpicture}[->, auto, node distance=1.3cm,>=latex', every node/.style={inner sep=0.05cm}, scale=0.61]
    		    \node (X1) at (-4,0) {$X_1$};
                    \node (X2) at (-2,0) {$X_2$};
                    \node (X.) at (0,0) {$\cdots$};
                    \node (Xn) at (2,0) {$X_N$};
    		    \node (Y) at (-1,-2) {$Y$};
    		    \path[->] (X1) edge[ style = {->}](Y);
                    \path[->] (X2) edge[ style = {->}](Y);
                    \path[->] (Xn) edge[ style = {->}](Y);
    		\end{tikzpicture}
    	\caption{The ADMAG $\mathcal{G}_0$ over $\textbf{V}=\{X_1,...,X_N, Y\}$.}
    	\label{fig:lower}
    \end{figure} 
    
In this causal bandits setting, we assume that all variables can take values in $[K]=\{1,...,K\}$ and consider two different distributions over the reward variable $Y$ belonging to some parametric family of distributions that have twice differentiable density functions, e.g., Gaussian distributions. 
Distribution $P_1$ is selected such that $\mathbb{E}_1[Y|do(X_i=x)]=b$ for some constant $b$, for all $i\in[N]$, and all $x\in[K]$ and $\mathbb{E}_1[Y]=b+\sqrt{\frac{KN}{w\lfloor B/c\rfloor}}$ for some constant $w$ to be defined later. Hence, the best action to play in this setting would be $a_0$, i.e., no intervention.

Let $A_B$ to be an arbitrary adaptive algorithm that selects the actions possibly based on its previous interactions with the problem and the total budget $B$ and let $P_{A,1}$ to be the resulting distribution of applying this algorithm when the rewards are distributed according to $P_1$. 
In order to design the second distribution, we select the least played action by $A_B$ and assign a higher expected reward to it. To be more precise, let
\begin{align*}
a_{i^*,x^*}:=\arg\min_{a_{i,x}}\mathbb{E}_{A,1}[N_{a_{i,x}}^B],    
\end{align*}
where $N_{a_{i,x}}^B$ denotes the number of times that arm $a_{i,x}$ is played by the algorithm using all its budget $B$ (since all arms have the same cost $c$, that is equivalent to playing the arms over a time horizon $\lfloor B/c\rfloor$) and the expectation is taken with respect to $P_{A,1}$. Note that $\sum_{a_{i,x}\in\mathcal{A}}N_{a_{i,x}}^B=\lfloor B/c\rfloor$, the total number of actions are $|\mathcal{A}|=NK+1$, and thus 
\begin{align}\label{eq:lower1}
    \mathbb{E}_{A,1}[N_{a_{i^*,x^*}}^B]\leq \frac{\lfloor B/c\rfloor}{NK}.
\end{align}
Now, we can design the second distribution $P_2$ that is identical to $P_1$ except at index $a_{i^*,x^*}$ and at this index it has $\mathbb{E}_2[Y|do(X_{i^*}=x^*)]=b+2\sqrt{\frac{KN}{w\lfloor B/c\rfloor}}$.
Therefore, the optimal arm under this distribution is $a_{i^*,x^*}$.
We have 
\begin{align*}
    R_c(A_B,\mathcal{G}_0,P_1,\mathcal{C}_0)&=\sum_{a_{i,x}\in\mathcal{A}}\mathbb{E}_{A,1}[N_{a_{i,x}}^B]\delta_{i,x}= \mathbb{E}_{A,1}[\lfloor B/c\rfloor-N_{a_{0}}^B]\sqrt{\frac{KN}{w\lfloor B/c\rfloor}}\\
    &\geq P_{A,1}(N_{a_{0}}^B\leq \lfloor B/c\rfloor/2)\frac{\lfloor B/c\rfloor}{2}\sqrt{\frac{KN}{w\lfloor B/c\rfloor}},\\
    R_c(A_B,\mathcal{G}_0,P_2,\mathcal{C}_0)&=\sum_{a_{i,x}\in\mathcal{A}}\mathbb{E}_{A,2}[N_{a_{i,x}}^B]\delta_{i,x}\geq \mathbb{E}_{A,2}[N_{a_{0}}^B]\sqrt{\frac{KN}{w\lfloor B/c\rfloor}}\\
    &\geq P_{A,2}(N_{a_{0}}^B> \lfloor B/c\rfloor/2)\frac{\lfloor B/c\rfloor}{2}\sqrt{\frac{KN}{w\lfloor B/c\rfloor}},
\end{align*}
where $R_c(A_B,\mathcal{G},P,\mathcal{C})$ denotes the cumulative regret of algorithm $A$ on a causal graph $\mathcal{G}$ with the distribution $P$ over the rewards and the cost set $\mathcal{C}$. 
Combining the above inequalities and using the Bretagnolle–Huber inequality \citep{lattimore2020bandit}, we have
\begin{align*}
    &2\max\{R_c(A_B,\mathcal{G}_0,P_1,\mathcal{C}_0),R_c(A_B,\mathcal{G}_0,P_2,\mathcal{C}_0)\}\geq R_c(A_B,\mathcal{G}_0,P_1,\mathcal{C}_0)+R_c(A_B,\mathcal{G}_0,P_2,\mathcal{C}_0)\\
    &\geq \Big(P_{A,1}(N_{a_{0}}^B\leq \lfloor B/c\rfloor/2) + P_{A,2}(N_{a_{0}}^B> \lfloor B/c\rfloor/2) \Big)\frac{\lfloor B/c\rfloor}{2}\sqrt{\frac{KN}{w\lfloor B/c\rfloor}}\\
    &\geq \frac{\lfloor B/c\rfloor}{4}\sqrt{\frac{KN}{w\lfloor B/c\rfloor}}e^{-D(P_{A,1}||P_{A,2})}.
\end{align*}
From the definition of $P_1$ and $P_2$ and using Lemma \ref{lower:lemma}, we get
\begin{align*}
    D(P_{A,1}||P_{A,2})=\mathbb{E}_{A,1}[N_{a_{i^*,x^*}}^B]D(P_1(a_{i^*,x^*})|| P_2(a_{i^*,x^*})).
\end{align*}
Using \eqref{eq:lower1} and Lemma \ref{lemma:lower}, we get
\begin{align*}
    D(P_{A,1}||P_{A,2})\leq \frac{\lfloor B/c\rfloor}{NK}\frac{I(b+\epsilon)}{2}\left(2\sqrt{\frac{KN}{w\lfloor B/c\rfloor}}\right)^2=\frac{2I(b+\epsilon)}{w},
\end{align*}
where $\epsilon\leq 2\sqrt{\frac{KN}{w\lfloor B/c\rfloor}}$. 
By selecting $w$ large enough, we can ensure that $2I(b+\epsilon)<w$. Note that $I(b+\epsilon)$ is a constant and depends on the family distribution. For instance, for the family of Gaussian distributions with unit variance and mean $\theta$, we have $I(b+\epsilon)=1$.
Combining the above inequalities leads to
\begin{align*}
    &\max_{\mathcal{G}_N,P,\mathcal{C}} R_c(A_B,\mathcal{G}_N,P,\mathcal{C})\geq\max\{R_c(A_B,\mathcal{G}_0,P_1,\mathcal{C}_0),R_c(A_B,\mathcal{G}_0,P_2,\mathcal{C}_0)\}\geq \frac{\lfloor B/c\rfloor}{8e}\sqrt{\frac{KN}{w\lfloor B/c\rfloor}}.
\end{align*}
This inequality holds for any arbitrary adaptive algorithm $A_B$.
}

\section{Additional Experiments}\label{sec: additional exp}
\subsection{Additional Experiments on Cumulative Regret in General Graphs}\label{sec: ad exp cumu}

Figure \ref{fig: graph-cumu-N6} illustrates the underlying causal graph of the experiment in Section \ref{sec: exp cumu} which we used to compare the performance of Algorithm \ref{alg: g-cumulative} with \textit{CRM} and \textit{F-KUBE}.
\begin{figure}[!ht] 
        \centering
        \tikzstyle{block} = [draw, fill=white, circle, text centered]
    	\tikzstyle{input} = [coordinate]
    	\tikzstyle{output} = [coordinate]
    	    \begin{tikzpicture}[->, auto, node distance=1.3cm,>=latex', every node/.style={inner sep=0.05cm}]
    		    \node[block] (X3) at (0,0) {$X_3$};
    		    \node[block] (X1) at (-1,1) {$X_1$};
    		    \node[block] (X2) at (1,1) {$X_2$};
    		    \node[block] (X4) at (1,-1) {$X_4$};
    		    \node[block] (X5) at (-1,-1) {$X_5$};
    		    \node[block] (X6) at (-1,-2) {$X_6$};
    		    \node[block] (Y) at (1,-2) {$Y$};
    		    \path[->] (X1) edge[ style = {->}](X2);
    		    \path[->] (X1) edge[ style = {->}](X3);
    		    \path[->] (X2) edge[ style = {->}](X3);
    		    \path[->] (X2) edge[ style = {->}](X4);
    		    \path[->] (X3) edge[ style = {->}](X5);
    		    \path[->] (X3) edge[ style = {->}](X4);
    		    \path[->] (X4) edge[ style = {->}](X5);
    		    \path[->] (X4) edge[ style = {->}](X6);
    		    \path[->] (X5) edge[ style = {->}](X6);
    		    \path[->] (X6) edge[ style = {->}](Y);
    		\end{tikzpicture}
    	\caption{Causal graph of experiments in Section \ref{sec: exp cumu}.}
    	\label{fig: graph-cumu-N6}
    \end{figure}
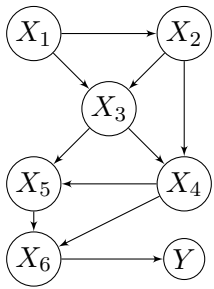
    
As mentioned before, since \textit{CRM} is not designed for graphs with hidden variables, we did not include \textit{CRM} for comparison in graphs with hidden variables. 

Herein, we compare Algorithm \ref{alg: g-cumulative} with \textit{F-KUBE} when the model is constructed as explained in Section \ref{sec: exp cumu}.
The underlying graphs is demonstrated in Figure \ref{fig: graph-cumu-N6h} which has $2$ hidden variables.
The left plot in Figure \ref{fig: cumu-N6h} illustrates the performance of algorithms in terms of cumulative regret versus budget when the cost of pulling each arm was selected randomly from $\{2,3\}$.
By increasing the budget, the cumulative regret of all of the algorithms increases.
Although, our algorithm has a lower growth rate than \textit{F-KUBE} and \textit{CRM}.

Moreover, the right plot compares the performance of the algorithms when the budget is fixed to $1000$, and the cost of all interventional arms is equal to $c$, such that $c \in \{2, 3, \dots, 20\}$ and the cost of the observational arm is equal to $1$.
As shown in this Figure, the cumulative regret of Algorithm \ref{alg: g-cumulative} grows substantially slower than others.
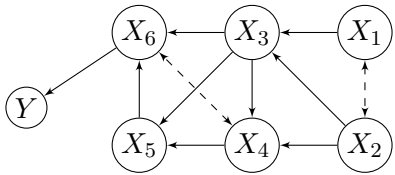
\begin{figure}[t] 
        \centering
        \tikzstyle{block} = [draw, fill=white, circle, text centered]
    	\tikzstyle{input} = [coordinate]
    	\tikzstyle{output} = [coordinate]
    	    \begin{tikzpicture}[->, auto, node distance=1.3cm,>=latex', every node/.style={inner sep=0.05cm}]
    		    \node[block] (X3) at (0,0) {$X_3$};
    		    \node[block] (X1) at (1.5,0) {$X_1$};
    		    \node[block] (X2) at (1.5,-1.5) {$X_2$};
    		    \node[block] (X4) at (0,-1.5) {$X_4$};
    		    \node[block] (X5) at (-1.5,-1.5) {$X_5$};
    		    \node[block] (X6) at (-1.5,0) {$X_6$};
    		    \node[block] (Y) at (-3,-1) {$Y$};
    		    \path[->] (X1) edge[ style = {->}](X3);
    		    \path[->] (X2) edge[ style = {->}](X3);
    		    \path[->] (X2) edge[ style = {->}](X4);
    		    \path[->] (X3) edge[ style = {->}](X4);
    		    \path[->] (X3) edge[ style = {->}](X5);
    		    \path[->] (X3) edge[ style = {->}](X6);
    		    \path[->] (X4) edge[ style = {->}](X5);
    		    \path[->] (X5) edge[ style = {->}](X6);
    		    \path[->] (X6) edge[ style = {->}](Y);
    		    \path[->] (X1) edge[dashed, style = {<->}](X2);
    		    \path[->] (X4) edge[dashed, style = {<->}](X6);
    		\end{tikzpicture}
    	\caption{Causal graph of experiments in Figure \ref{fig: cumu-N6h}.}
    	\label{fig: graph-cumu-N6h}
    \end{figure}


\begin{figure}[!ht] 
        \centering
        \captionsetup{justification=centering}
        \begin{thesubfigure}
            \centering
             \includegraphics[width=0.49\textwidth]{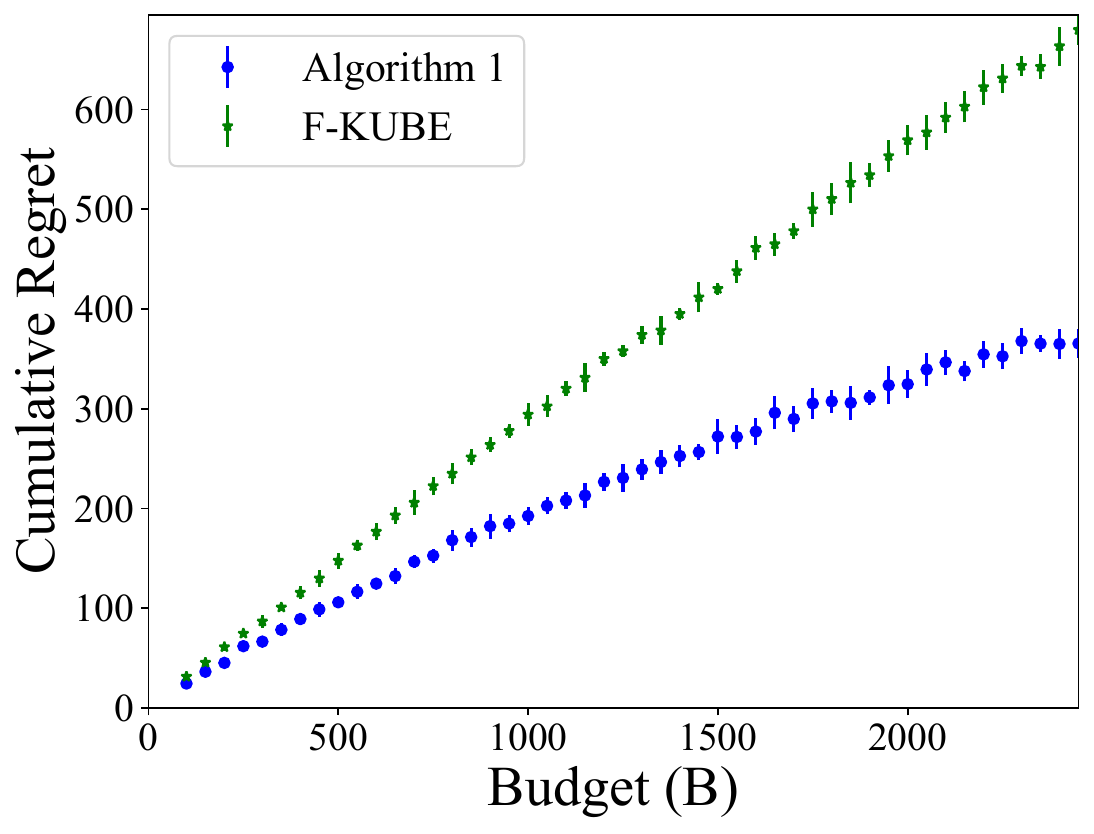}
            \label{fig: cumu-BN6h}
        \end{thesubfigure}
        \begin{thesubfigure}
            \centering
            \includegraphics[width=0.49\textwidth]{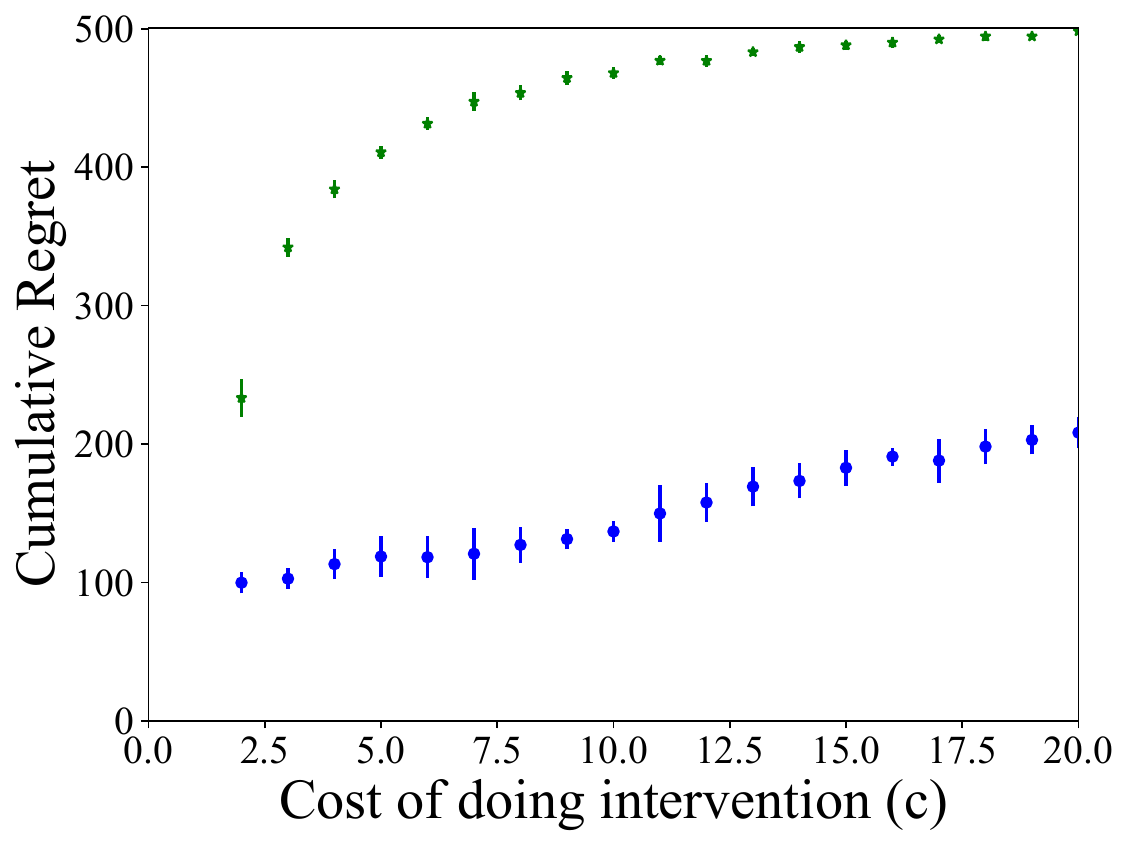}
            \label{fig: cumu-cN6h}
        \end{thesubfigure}
        \caption{Performances of different algorithms on a general graphs with $N=6$ depicted in Figure \ref{fig: graph-cumu-N6h}.}
        \label{fig: cumu-N6h}
\end{figure}

\subsection{Additional Experiments on Simple Regret in No-backdoor Graphs}\label{sec: ad exp simple no}

Since $\gamma$-NB is designed for settings with uniform costs over the arms, to have a fair comparison between Algorithm \ref{alg: g-simple} and $\gamma$-NB, we included an extra experiment in this section.
In this experiment, the cost of pulling each interventional arm $a_{i,x}$ for $i \in [N], x \in \{0,1\}$ is set to be $4$ and the cost of the observational arm is $1$. 
The other setting of this experiment is  similar to the one in Section \ref{sec: exp simple no}.

Figure \ref{fig: nobackdoor_BN50} illustrates the result of this experiment and it shows that when the cost of pulling interventional arms is uniform, by increasing the budget, our proposed algorithm converges quicker to zero than the others.
 
\begin{figure*}[!ht] 
        \centering
        \captionsetup{justification=centering}
            \centering
             \includegraphics[width=0.43\textwidth]{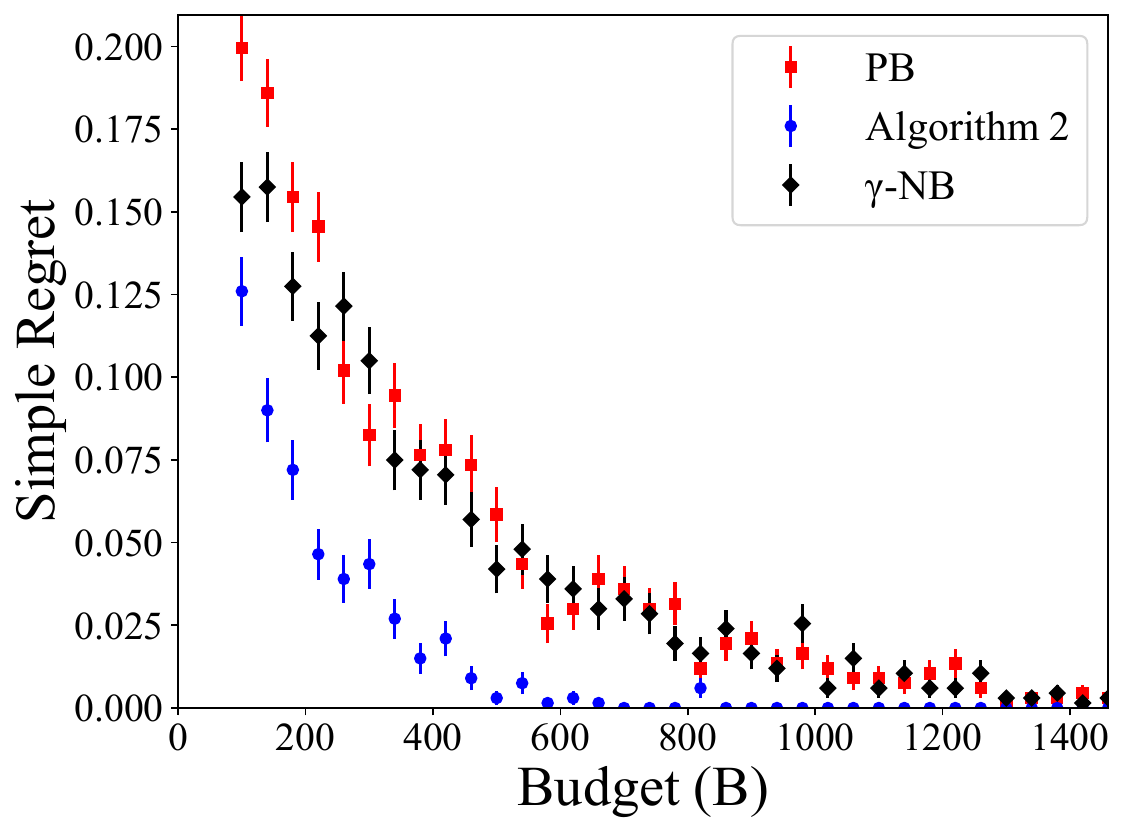}
        \caption{Performance of different algorithms on parallel graphs with $N=50$.}
        \label{fig: nobackdoor_BN50}
\end{figure*}

We also included an experiment on a smaller parallel graph (with $7$ intervenable variables) to be able to compare our algorithm with \textit{Successive Rejects}.
To construct the underlying model, we used the same setting as in Section \ref{sec: exp simple no}.
Figure \ref{fig: nobackdoorN7} illustrates the performance of different algorithms in terms of their simple regret. 
For simple regret vs. budget, we set the cost of each interventional arm randomly from $\{2, 3, 4, 5\}$. This figure shows that Algorithm \ref{alg: g-simple} convergence is faster to zero than the other algorithms.
For simple regret vs. the cost of doing intervention, the budget is set to $1500$, and the cost of all interventional arms is equal to $c \in \{1,2, \dots, 20\}$.
This figure demonstrates that by increasing $c$, Algorithm \ref{alg: g-simple} has a slower growth rate than others.

\begin{figure}[!ht] 
        \centering
        \captionsetup{justification=centering}
        \begin{thesubfigure}
            \centering
            \includegraphics[width=0.45\textwidth]{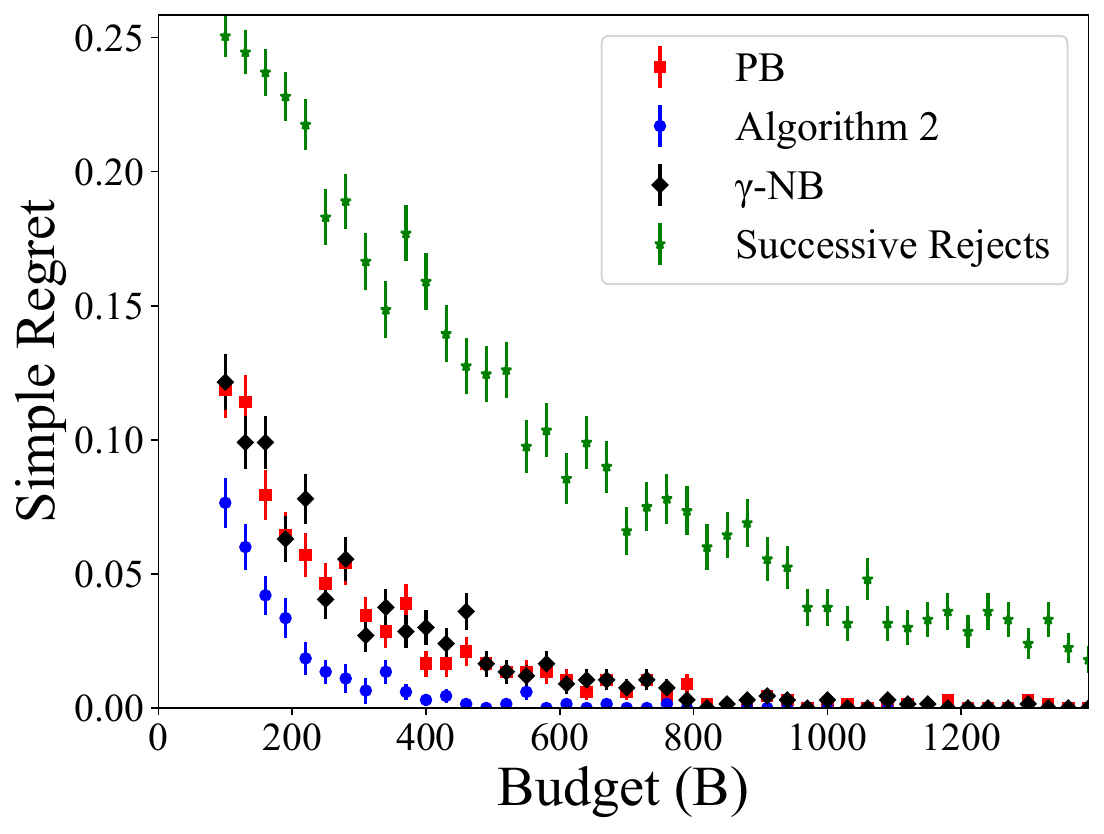}
        \end{thesubfigure}
        \begin{thesubfigure}
            \centering
            \includegraphics[width=0.45\textwidth]{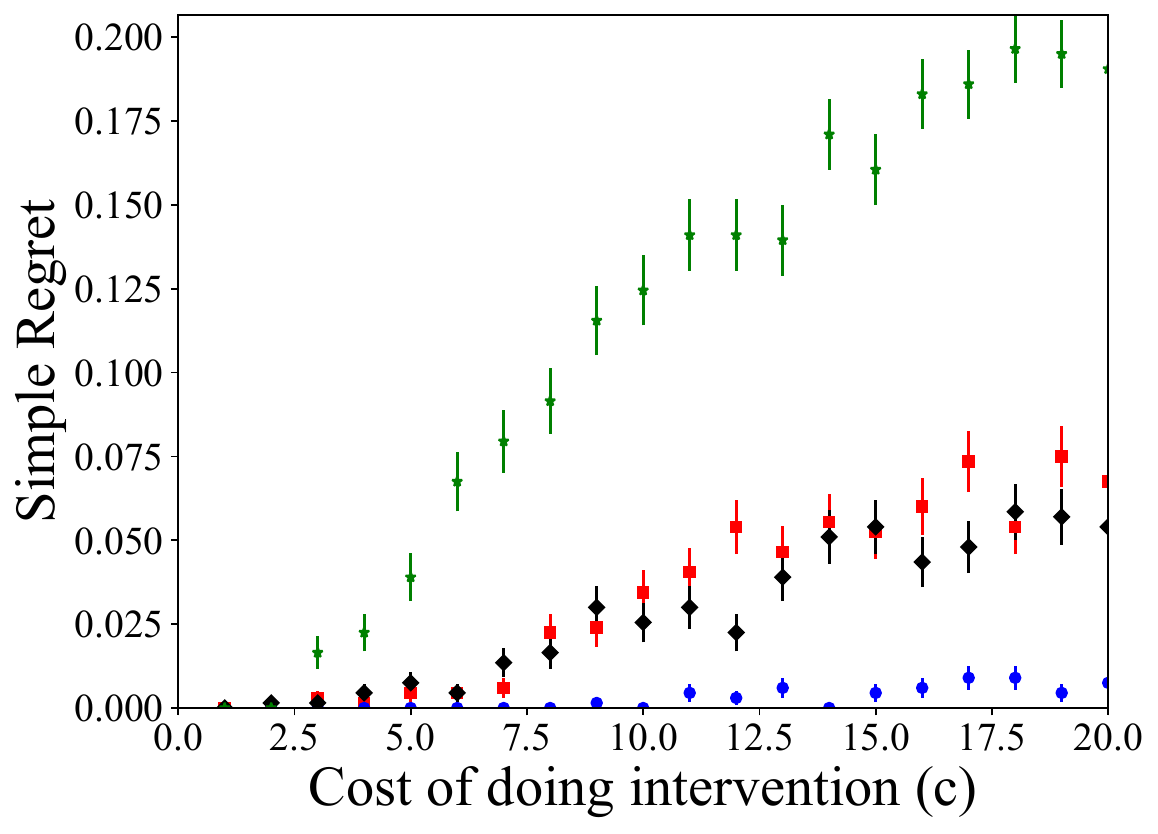}
        \end{thesubfigure}
        \caption{Performance of different algorithms on a parallel graph with $N=7$.}
        \label{fig: nobackdoorN7}
\end{figure}

\subsection{Additional Experiments on Simple Regret in General Graphs}\label{sec: ad exp simple g}
Herein, we present the underlying causal graph of the experiments in Section \ref{sec: exp simple g}.
As shown, this graph is not a no-backdoor graph as it has unblocked backdoor paths from intervenable variables to the reward variable. 
Furthermore, it has two hidden confounders.

\begin{figure}[!ht] 
        \centering
        \tikzstyle{block} = [draw, fill=white, circle, text centered]
    	\tikzstyle{input} = [coordinate]
    	\tikzstyle{output} = [coordinate]
    	    \begin{tikzpicture}[->, auto, node distance=1.3cm,>=latex', every node/.style={inner sep=0.05cm}]
    		    \node[block] (X1) at (0,0) {$X_1$};
    		    \node[block] (X2) at (-1,0) {$X_2$};
    		    \node[block] (X3) at (-0.7,-3) {$X_3$};
    		    \node[block] (X4) at (-2.5,0) {$X_4$};
    		    \node[block] (X5) at (-2,-2) {$X_5$};
    		    \node[block] (X6) at (-5,-2) {$X_6$};
    		    \node[block] (X7) at (-3,-2) {$X_7$};
    		    \node[block] (Y) at (-3,-1) {$Y$};
    		    \path[->] (X1) edge[ style = {->}](X2);
    		    \path[->] (X1) edge[ style = {->}](X3);
    		    \path[->] (X2) edge[ style = {->}](X4);
    		    \path[->] (X2) edge[ style = {->}](X3);
    		    \path[->] (X3) edge[ style = {->}](X4);
    		    \path[->] (X3) edge[ style = {->}](X5);
    		    \path[->] (X3) edge[ style = {->}](X6);
    		    \path[->] (X4) edge[ style = {->}](X5);
    		    \path[->] (X4) edge[ style = {->}](X6);
    		    \path[->] (X5) edge[ style = {->}](X7);
    		    \path[->] (X6) edge[ style = {->}](X7);
    		    \path[->] (X7) edge[ style = {->}](Y);
    		   \path[->] (X2) edge[dashed, style = {<->}](X5);
    		   \path[->] (X2) edge[ dashed, style = {<->}, bend right=60](X6);
    		\end{tikzpicture}
    	\caption{Causal graph of experiments in Section \ref{sec: exp simple g} with $N=7$.}
    	\label{fig: graph_simgen_N7}
\end{figure}
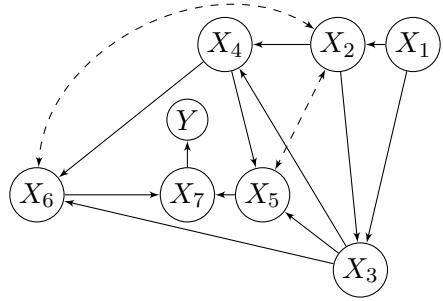

We also provided additional experiments on a different graph illustrated in Figure \ref{fig: graph_simgen_N5} with $N=5$ and one hidden confounder.
Note that this graph also includes unblocked backdoor paths from intervenable variables to the reward variable.
We constructed the underlying model similar to the one in Section \ref{sec: exp cumu}.
The left plot in Figure \ref{fig: simgenN5} compares the performance of algorithms in terms of their simple regret when the cost of pulling each interventional arm was selected randomly from $\{5, 6, 7\}$. 
As depicted, by increasing the budget, Algorithm \ref{alg: g-simple} converges to zero faster.
The right plot in Figure \ref{fig: simgenN5} shows that when the cost of all interventional arms is equal to $c$, the simple regret increases by increasing $c$ as expected, but our algorithm's regret grows at a lower rate.

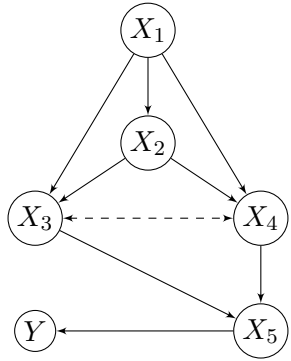
\begin{figure}[!ht] 
        \centering
        \tikzstyle{block} = [draw, fill=white, circle, text centered]
    	\tikzstyle{input} = [coordinate]
    	\tikzstyle{output} = [coordinate]
    	    \begin{tikzpicture}[->, auto, node distance=1.3cm,>=latex', every node/.style={inner sep=0.05cm}]
    		    \node[block] (X2) at (0,1) {$X_2$};
    		    \node[block] (X4) at (1.5,0) {$X_4$};
    		    \node[block] (X3) at (-1.5,0) {$X_3$};
    		    \node[block] (X1) at (0,2.5) {$X_1$};
    		    \node[block] (X5) at (1.5,-1.5) {$X_5$};
    		    \node[block] (Y) at (-1.5,-1.5) {$Y$};
    		    \path[->] (X1) edge[ style = {->}](X3);
    		    \path[->] (X1) edge[ style = {->}](X4);
    		    \path[->] (X1) edge[ style = {->}](X2);
    		    \path[->] (X2) edge[ style = {->}](X3);
    		    \path[->] (X2) edge[ style = {->}](X4);
    		    \path[->] (X3) edge[ style = {->}](X5);
    		    \path[->] (X4) edge[ style = {->}](X5);
    		    \path[->] (X5) edge[ style = {->}](Y);
    		    \path[->] (X3) edge[dashed, style = {<->}](X4);
    		\end{tikzpicture}
    	\caption{Causal graph of the additional experiments.}
    	\label{fig: graph_simgen_N5}
    \end{figure}
    
    \begin{figure}[!ht] 
        \centering
        \captionsetup{justification=centering}
        \begin{thesubfigure}
            \centering
             \includegraphics[width=0.49\textwidth]{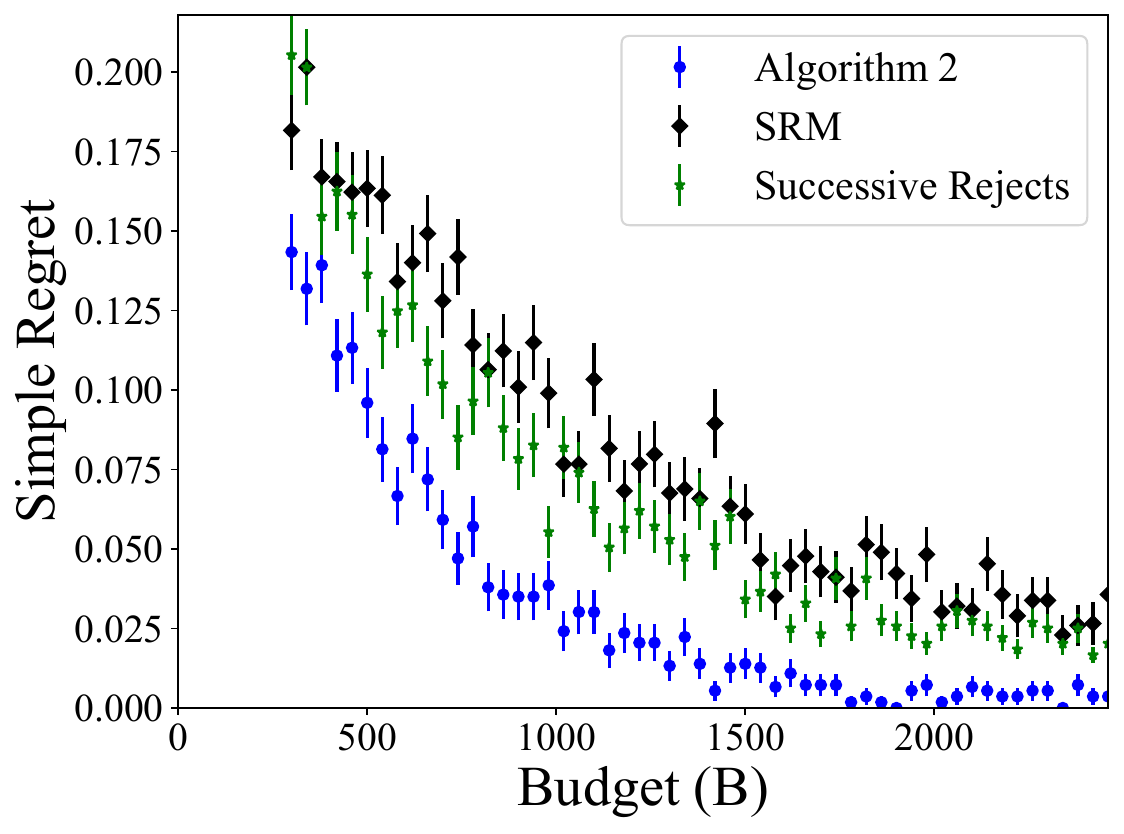}
            \label{fig: simgen-BN5}
        \end{thesubfigure}
        \begin{thesubfigure}
            \centering
            \includegraphics[width=0.49\textwidth]{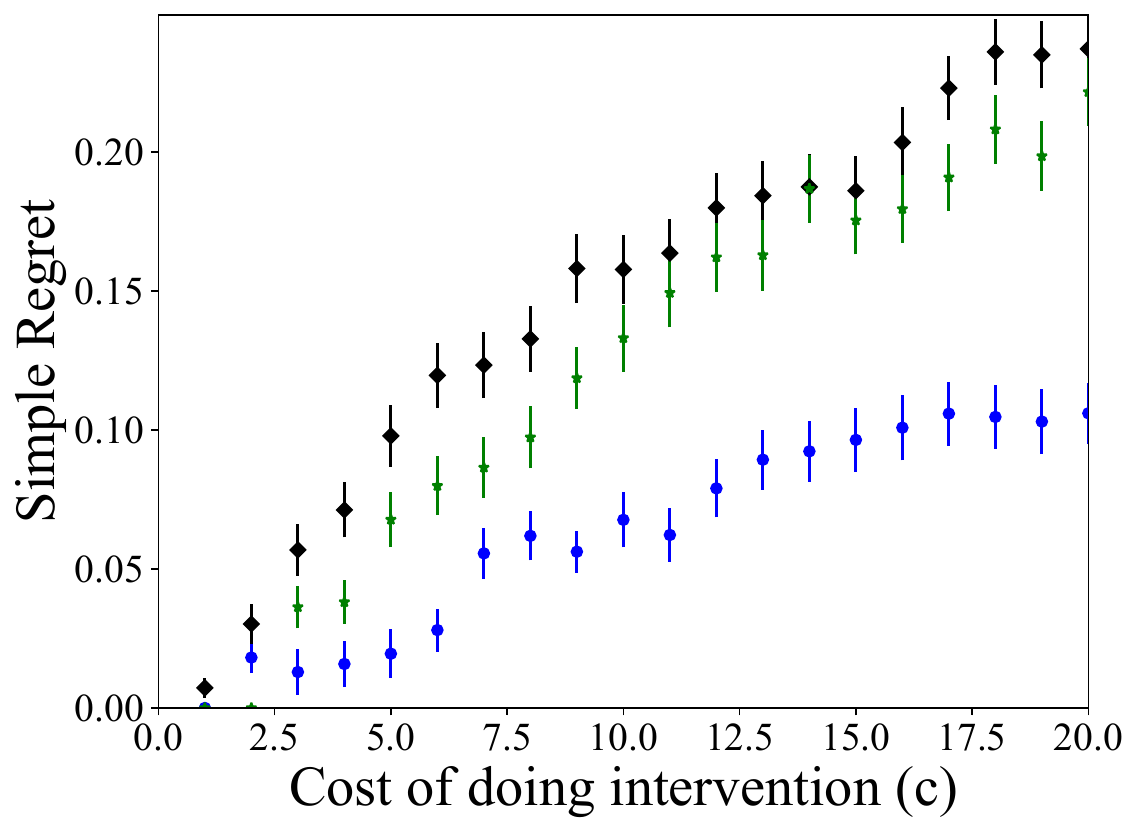}
            \label{fig: simgen-cN5}
        \end{thesubfigure}
        \caption{Performance of different algorithms on the general graphs depicted in Figure \ref{fig: graph_simgen_N5}.}
        \label{fig: simgenN5}
    \end{figure}

\section{Discussion on the previous work}\label{sec: error}  

In \citep{nair2021budgeted}, Eq. (17) in the proof of Lemma B.6 (crucial for Theorem 3 pertaining to cumulative regret bound) reads as follows  
$$
N_{T}^{i,x} \leq \max (0, l-\sum_{t \in [T]} \mathds{1}\{a(t)=a_0, X_i = x\})+ \sum_{t \in T} \mathds{1}\{a(t)=a_{i,x}, E_{t}^{i,x}\geq l\}
$$
which is wrong. As a counterexample, suppose  that $T=3$ and the pulled arms  are
$\{a(0)\!=\!a_{i,x},a(1)=a_{j,x'},a(2)=a_{0},a(3)=a_{0}\}$,
where $j\neq i$ and observed $X_i=x$ at both times $t=2,3$.
In this case, for $l\!=\!2$, the inequality becomes $1 \leq 0$.

\end{document}